\newcommand{\Regret}{\textrm{Regret}}
\newcommand{\DilEnt}{{\varphi_\one}}
\newcommand{\DilEntw}[1]{{\varphi_{#1}}}
\newcommand{\DilEnts}[1]{{\varphi_{\one,#1}}}
\newcommand{\seqs}{\Sigma}
\newcommand{\nonrootseqs}{{\Sigma^+}}
\newcommand{\tfsdp}{\cH}
\newcommand{\leaveseqs}{\cE}
\newcommand{\nonleaveseqs}{{\Sigma \setminus \leaveseqs}}
\newcommand{\Qnorm}{{}_{\tfsdp,\infty}}
\newcommand{\Qnorms}[1]{{}_{\tfsdp_{#1},\infty}}
\newcommand{\Qnormp}[1]{{}_{\tfsdp^{#1},\infty}}
\newcommand{\Ynorm}{{}_{\tfsdp,1}}
\newcommand{\Ynorms}[1]{{}_{\tfsdp_{#1},1}}
\newcommand{\Ynormp}[1]{{}_{\tfsdp^{#1},1}}
\newcommand{\Qwidth}{\|\cQ\|_{\perp}}
\newcommand{\Qwidths}[1]{\|\cQ_{#1}\|_{\perp}}
\newcommand{\Qsize}{\|\cQ\|_1}
\newcommand{\efg}{\cT}
\newcommand{\efgleaves}{\cZ}
\newcommand{\tpLOneNorm}{treeplex $\ell_1$ norm}
\newcommand{\tpLInfNorm}{treeplex $\ell_\infty$ norm}
  \protected@write\@auxout{}{%
    \string\@restatelemma{#1}{\detokenize\expandafter{\BODY}}%
  }%
\BODY\end{lemma}%
\newcommand{\@restatelemma}[2]{%
  \expandafter\gdef\csname Lemma@#1\endcsname{#2}%
}
\newcommand{\restateLemma}[1]{%
  \begingroup
  \renewcommand{\thetheorem}{\ref{#1}}%
  \begin{lemma}[restatement]\csname Lemma@#1\endcsname\end{lemma}%
  \addtocounter{theorem}{-1}
  \endgroup
}
  \protected@write\@auxout{}{%
    \string\@restatetheorem{#1}{\detokenize\expandafter{\BODY}}%
  }%
\BODY\end{theorem}%
\newcommand{\@restatetheorem}[2]{%
  \expandafter\gdef\csname Theorem@#1\endcsname{#2}%
}
\newcommand{\restateTheorem}[1]{%
  \begingroup
  \renewcommand{\thetheorem}{\ref{#1}}%
  \begin{theorem}[restatement]\csname Theorem@#1\endcsname\end{theorem}%
  \addtocounter{theorem}{-1}
  \endgroup
}
\definecolor{LightCyan}{rgb}{0.8, 0.9, 1}
\definecolor{LightGray}{rgb}{0.9,0.9,0.9}
\let\emptyset\varnothing
\DeclareMathOperator*{\Expt}{\mathbb{E}}
\DeclareMathOperator*{\Unif}{\mathrm{Unif}}
\title{On the Optimality of Dilated Entropy and Lower Bounds for Online Learning in Extensive-Form Games}
\NewDocumentCommand  \decpt          {m} {\textsf{\MakeUppercase{#1}}}
\NewDocumentCommand  \seq            {m} {\textsf{#1}}
\tikzset{
    fitting node/.style={
            inner sep=0pt,
            fill=none,
            draw=none,
            reset transform,
            fit={(\pgf@pathminx,\pgf@pathminy) (\pgf@pathmaxx,\pgf@pathmaxy)}
        },
    reset transform/.code={\pgftransformreset}
}
\tikzset{cross/.style={path picture={
                    \draw[black] (path picture bounding box.south east) --
                    (path picture bounding box.north west) %
                    (path picture bounding box.south west) --
                    (path picture bounding box.north east);
                }}}
\tikzset{slashed/.style={path picture={
                    \draw[black] (path picture bounding box.south west) --
                    (path picture bounding box.north east);
                }}}
\tikzset{plus/.style={path picture={
                    \draw[black,line cap=round]
                    ($(path picture bounding box.west)!.2!(path picture bounding box.east)$)
                    -- ($(path picture bounding box.west)!.8!(path picture bounding box.east)$)
                    ($(path picture bounding box.south)!.2!(path picture bounding box.north)$)
                    -- ($(path picture bounding box.south)!.8!(path picture bounding box.north)$);
                }}}
\tikzset{hatch/.style={path picture={
                    \draw[black,thin,line cap=round]
                    (path picture bounding box.south west)
                    ++(0,0.00) -- +(1,1)
                    ++(0,0.075) -- +(1,1)
                    ++(0,0.075) -- +(1,1)
                    (path picture bounding box.south west)
                    ++(0,-0.075) -- +(1,1)
                    ++(0,-0.075) -- +(1,1)
                    ;
                }}}
\tikzstyle{dot}=[thin,draw,fill,circle,inner sep=.5mm]
\tikzstyle{lbl} = [fill=white,rounded corners,inner ysep=.7mm]
\tikzstyle{tightlbl} = [lbl,inner xsep=.6mm,inner ysep=.2mm]
\tikzstyle{proc}=[thick,draw=black,fill=white,slashed,inner sep=2mm,rounded corners=.5mm]
\tikzstyle{oplus}=[thick,draw=black,fill=white,circle,plus,inner sep=1.4mm]
\tikzstyle{@hlg}=[draw,tips=false,line width=1.7mm,white,rounded corners=.5mm]
\tikzstyle{uarr}=[preaction={@hlg},thick,->,black!20!red ,dashed,rounded corners=.5mm]
\tikzstyle{xarr}=[preaction={@hlg},thick,->,black!20!blue,       rounded corners=.5mm]
\tikzstyle{rm} = [draw=black,thick,minimum height=8mm,minimum width=12mm,fill=white,rounded corners=.5mm]
\colorlet{hlbg}{black!3}
\tikzstyle{hlbox} = [ultra thick,draw=black!12,fill=hlbg,rounded corners=.8mm]
\tikzstyle{chance} = [fill=white, draw=black, circle, semithick,cross,inner sep=.8mm]
\tikzstyle{pl1}    = [fill=black, draw=black, circle, inner sep=.5mm]
\tikzstyle{pl2}    = [fill=white, draw=black, circle, inner sep=.55mm]
\tikzstyle{termina} = [fill=white, draw=black,  inner sep=.7mm]
\tikzstyle  {decpt} = [fill=black, draw=black,         inner sep=1.0mm,rounded corners=.3mm]
\tikzstyle {action} = [
\tikzstyle  {obspt} = [fill=white, draw=black, cross, semithick,  inner sep=1.1mm,rounded corners=.3mm]
\tikzstyle {observ} = [very thick, black, dotted, -{Stealth[scale width=1.10,inset=.3mm,length=1.55mm]}]
\tikzstyle     {ox} = [semithick,  draw=black, circle, cross, inner sep=1.2mm]
\tikzstyle{infoset} = [black!50!white]
\tikzstyle     {efce}=[fill=red!15!white,draw=red,very thick]
\tikzstyle{majortick}=[black,semithick]
\tikzstyle{majorgrid}=[very thin,gray!50]
\tikzstyle{minorgrid}=[gray!20,dashed,very thin]
\NewDocumentCommand \SingletonInfoset {O{4mm}m} {
    \Infoset[#1]{(#2.center) -- +(0.0001,0)}
}
\NewDocumentCommand \Infoset {O{4mm}m} {
    \begin{pgfonlayer}{background}
        \draw[black!40, line width/.evaluated={#1}, line cap=round] #2;
        \draw[white, line width/.evaluated={.9*#1}, line cap=round] #2;
    \end{pgfonlayer}
    \begin{pgfonlayer}{middle}
        \draw[black!10, draw opacity=1.00, line width/.evaluated={.9*#1}, line cap=round] #2;
    \end{pgfonlayer}
}
\NewDocumentCommand \hlSubmatrix {O{black!15}mm} {%
    \filldraw[draw=black!40,fill=#1,rounded corners=1mm] (#2.north west) rectangle (#3.south east)
}
\author{%
  Zhiyuan Fan  \\
  MIT\\
  \texttt{fanzy@mit.edu} \\
  \And
  Christian Kroer \\
  Columbia University \\
  \texttt{christian.kroer@columbia.edu} \\
  \AND
  Gabriele Farina \\
  MIT \\
  \texttt{gfarina@mit.edu} \\
}
\begin{document}

\maketitle

%
%
%
%
%
%
\begin{abstract}
First-order methods (FOMs) are arguably the most scalable algorithms for equilibrium computation in large extensive-form games. To operationalize these methods, a distance-generating function, acting as a regularizer for the strategy space, must be chosen. 
The ratio between the strong convexity modulus and the diameter of the regularizer is a key parameter in the analysis of FOMs.
A natural question is then: what is the \emph{optimal} distance-generating function for extensive-form decision spaces? In this paper, we make a number of contributions, ultimately establishing that the weight-one dilated entropy (DilEnt) distance-generating function is optimal up to logarithmic factors. 
The DilEnt regularizer is notable due to its iterate-equivalence with Kernelized OMWU (KOMWU)---the algorithm with state-of-the-art dependence on the game tree size in extensive-form games---when used in conjunction with the online mirror descent (OMD) algorithm. However, the standard analysis for OMD is unable to establish such a result; the only current analysis is by appealing to the iterate equivalence to KOMWU. 
We close this gap by introducing a pair of primal-dual \emph{treeplex} norms, which we contend form the natural analytic viewpoint for studying the strong convexity of DilEnt. 
Using these norm pairs, we recover the diameter-to-strong-convexity ratio that predicts the same performance as KOMWU. Along with a new regret lower bound for online learning in sequence-form strategy spaces, we show that this ratio is nearly optimal.
Finally, we showcase our analytic techniques by refining the analysis of Clairvoyant OMD when paired with DilEnt, establishing an $\mathcal{O}(n \log |\mathcal{V}| \log T/T)$ approximation rate to coarse correlated equilibrium in $n$-player games, where $|\mathcal{V}|$ is the number of reduced normal-form strategies of the players, establishing the new state of the art.

%
%
\end{abstract}

\section{Introduction}

Extensive-form games (EFG) are a popular framework for modeling sequential games with imperfect information.
The framework has been widely used to build superhuman AI agents in real-world imperfect information games 
\citep{bowling2015heads, moravvcik2017deepstack, brown2018superhuman, brown2019superhuman}.
Several notions of equilibrium, including Nash equilibrium \citep{nash1950equilibrium} in two-player zero-sum and coarse correlated equilibrium in general multiplayer EFGs, can be computed in polynomial time in the size of the game tree under the standard hypothesis of perfect recall \citep{von1996efficient, papadimitriou2008computing, huang2008computing, jiang2011polynomial}.
These polynomial-time algorithms, however, require running the ellipsoid method or polynomial algorithm for linear programming, both of which are impractical for large-scale games, due to the high memory usage and large per-iteration computational costs~\citep{sandholm2010state}.

Instead, fast iterative methods based on convex first-order optimization methods (FOMs) \citep{zinkevich2007regret, chiang2012online, tammelin2015solving, brown2019solving, farina2019online, kroer2020faster, farina2021better, lee2021last, liu2022power, liu2024policy} are commonly used to find an approximate equilibrium. These iterative methods define strategy update rules that each player can apply iteratively while training in self-play with other players, and that guarantee ergodic convergence to the set of equilibria in the long run. Three popular classes of such FOMs are employed in EFGs: methods based on online mirror descent (OMD) \citep{beck2003mirror, chiang2012online}, methods based on the counterfactual regret minimization framework \citep{zinkevich2007regret}, and (in the context of two-player zero-sum games specifically) accelerated offline methods such as mirror prox \citep{nemirovski2004prox} and the excessive gap technique \citep{nesterov2005excessive} algorithm.
In general, these methods are all proximal methods---that is, they perform a generalized notion of projected gradient descent step at each iteration. Some do this explicitly, including OMD and mirror prox, while others do it implicitly, including the counterfactual regret minimization algorithm, which runs proximal steps locally at each decision point \citep{farina2021faster}.

In all methods mentioned above, except CFR,\footnote{In CFR, the gradient steps are projected onto the nonnegative cone locally at each decision point of the game, and then renormalized to be a valid probability distribution over the actions.} the constraint set for the proximal step (\emph{i.e.}, the set on which gradient steps must be projected onto) is the strategy polytope of the EFG. The proximal steps are parameterized by a choice of \emph{distance-generating function (DGF)} for the strategy polytope, which acts as a regularizer. 
The performance of FOMs is sensitive to the properties of the DGF. In particular, two qualities are often desired:
(1) the ratio between the diameter of the feasible domain (as measured with the DGF) and the strong convexity modulus, with respect to a given norm, of the DGF must be as small as possible; and (2) projections with respect to the DGF onto the feasible set should take linear time in the dimension of the set. 

In EFGs, the only DGF family that satisfies the second requirement is based on the framework of \emph{dilated} regularization introduced by \citet{hoda2010smoothing}.
Within this framework, \citet{kroer2020faster} gave the first explicit strong convexity bounds based on the dilation framework, specifically for the \emph{dilated entropy DGF}. By combining optimistic regret minimizers for general convex sets with this DGF, one gets an algorithm that achieves a $T^{-1}$
convergence rate for two-player zero-sum EFGs. Subsequent work by \citet{farina2021better} introduced the \emph{dilated global entropy} DGF with an improved diameter-to-strong-convexity ratio.
By plugging their DGF results into the genereic OMD regret bound, one immediately achieves a regret bound of $\cO(\|\cQ\|_1 \sqrt{\log |\cA|} \sqrt{T})$. 
This was the state-of-the-art regret bound when introduced, in terms of dependence on game constants. Moreover, until now, it was the best bound known to be achievable through the direct application of OMD regret bounds combined with DGF results.
However, \citet{farina2022kernelized} developed a, seemingly, different approach based on \emph{kernelization}, which is a way to simulate, in linear time in the EFG size, the results of applying optimistic multiplicative weights (OMWU) on the normal-form reduction of an EFG. They call their algorithm \emph{KOMWU}. KOMWU achieves a better, and now state-of-the-art, regret bound $\cO(\sqrt{\log |\cV|} \sqrt{T})$ in online learning with full-information feedback.
Based on their result, two open questions emerged:
1) is this the best possible regret bound that one can achieve?
2) is it possible to achieve such a bound directly using the standard OMD machinery, without resorting to this kernelization trick?
\citet{bai2022efficient} made highly interesting progress on the second question: they show that, in fact, the KOMWU algorithm is iterate equivalent to OMD, with the specific version of dilated entropy that uses weight one everywhere. However, their result only shows a state-of-the-art rate by equivalence to KOMWU, and it is still unknown whether this state-of-the-art rate is achieveable directly through results on DGF properties and the standard OMD regret bound.
In this paper, we answer these two open questions, by answering the following question:
\begin{center}
    \emph{What is the optimal DGF for FOMs in solving EFGs?}
\end{center}

\renewcommand{\arraystretch}{1.2}
\begin{table}[t]
\centering
\setlength{\tabcolsep}{3.2mm}
\begin{tabular}{lccc}
\toprule
\rowcolor{white}
Regularizer & Norm pair & $|\cD|/\mu$ ratio & Max gradient norm \\
\midrule
\rowcolor{white}
Dilated Entropy \citep{kroer2020faster} & $\ell_1$ and $\ell_\infty$ norms & $\cO(2^{D} \|\cQ\|_1^2 \log |\cA|)$ & $\le 1$ \\
Dilated Gl. Entropy \citep{farina2021better} & $\ell_1$ and $\ell_\infty$ norms & $\cO(\|\cQ\|_1^2 \log |\cA|)$ & $\le 1$ \\
\rowcolor{LightGray}
DilEnt (\textbf{this paper}) & treeplex norms & $\ln |\cV|$ &  $\le 1$ \\
\bottomrule
\end{tabular}   
\caption{Comparison of the diameter-to-strong-convexity ($|\cD|/\mu$) ratio with prior results in DGFs for EFGs, where the ``Norm pair'' indicates the primal norm used in establishing the strong convexity, and its dual. ``Max gradient norm'' indicates the maximum norm---measured in the dual of the norm with respect to which each DGF is strongly convex---of any reward vector, or the gradient of utility function, that can be encountered during optimization, assuming that all payoffs at the terminal nodes of the EFG are in the range $[0,1]$.
$D$ denotes the depth of the tree, $\Qsize$ the tree size (see \cref{sec:preliminaries}), $|\cA|$ the maximum number of actions, and $|\cV|$ the number of reduced normal-form strategies. We remark that $\ln |\cV| \leq \cO(\|\cQ\|_1 \log |\cA|)$.}
\label{tab:rw-d}
\end{table}

We show that weight-one dilated entropy (DilEnt) is indeed the optimal DGF for FOMs in solving EFGs with full-information feedback, in terms of the diameter-to-strong-convexity ratio ($|\cD|/\mu$), up to logarithmic factors.
We note that the diameter-to-strong-convexity ratio of the regularizer is a key factor in the performance of FOMs. Intuitively, performance degrades as the diameter increases (since there is more ``space'' to search), and improves as the regularizer becomes more bowl-shaped (i.e., strongly convex). Consequently, a smaller diameter-to-strong-convexity ratio leads to better performance of the corresponding FOM.
Our contributions can be summarized as follows:
\begin{itemize}[leftmargin=*]
    \item We introduce a pair of primal-dual treeplex norms for the extensive-form decision space. These norms establish an improved framework for analyzing FOMs in EFGs, leading to results with better dependence on the size of the game. Based on this framework, we derive a new state-of-the-art diameter-to-strong-convexity ratio among all known DGFs for the EFG strategy spaces (see \cref{tab:rw-d} for comparison).
    By combining this new results with the standard OMD regret bound, we establish a regret upper bound that aligns with the results achieved by KOMWU.

    \item By establishing a matching regret lower bound, we identify a minimum diameter-to-strong-convexity ratio for any regularizer. We find that the DilEnt regularizer achieves the optimal ratio up to logarithmic factors, making it a natural candidate for FOMs on EFGs. 

    \item 
    An advantage of our new results, as compared to showing a regret bound through the KOMWU equivalence, is that our DGF result can also be combined with other algorithmic setups.
    As an example of our results, we show that by equipping Clairvoyant OMD~\citep{farina2022clairvoyant} with the DilEnt DGF, we enable convergence to a coarse correlated equilibrium at a rate of $\cO(n \log |\mathcal{V}| \log T/T)$ in $n$-player EFGs. This improves upon the previous results of $\cO(n \log |\mathcal{V}| \log^4 T/T)$ by \citet{farina2022kernelized} and $\cO(n \cdot |\cJ\times\cA| \Qsize^2 \log T/T)$ by \citet{farina2022near}, establishing a new state-of-the-art rate.
\end{itemize}

\section{Related Works}
\paragraph{Equilibrium Computation in General-Sum Extensive-Form Games}
The first line work in the equilibrium computation on general-sum EFGs used linear program (LP) which can be solved efficiently \citep{papadimitriou2008computing, jiang2011polynomial}. However, due to the large exponent of LP solvers, it is impractical to run such algorithms on large-scale games. The modern equilibrium computation using fast-iterative methods: \citet{syrgkanis2015fast} introduced the RVU property on the regret bound for a broad class of optimistic no-regret learning algorithms. With that property, they
demonstrated that the individual regret of each player grows as $T^{1/4}$ in general games, thus leading to a $T^{-3/4}$ converge rate to the coarse correlated equilibrium (CCE). A near-optimal bound of order $\log^4(T)$ was established by \citet{daskalakis2021near}, which implies a fast convergence rate of order $\tilde \cO(1/T)$. Subsequent work by \citet{farina2022kernelized} generalized the result to a class of polyhedral games that includes EFG.
Concurrently, \citet{piliouras2022beyond} introduced the Clairvoyant MWU. Although the algorithm is not non-regret learning, a subset of the steps converge to CCE with a rate of $\log T / T$. \citet{farina2022clairvoyant} showed that the algorithm is an instantiation of the conceptual proximal methods, which has been studied in the literature of FOMs \citep{chen1993convergence, nemirovski2004prox}. Using another technique, \citet{farina2022near} also achieved this rate with worse game size dependence. 
Another class of fast iterative methods follows from counterfactual regret minimization (CFR) \citep{zinkevich2007regret}, which guarantees a regret bound of order $\sqrt{T}$. \citet{farina2019stable} showed that running OMWU at each decision point achieves a $T^{1/4}$ external regret, thus leading to a $T^{-3/4}$ approximation rate to the CCE. Although CFR has a weaker guarantee on convergence rate, variants of the algorithm are widely used in practice due to their superior practical performance \citep{kroer2018solving}.

\paragraph{Regret Lower Bounds in Extensive-Form Games}

Several works have studied lower bounds in EFGs across various settings. \citet{koolen2010hedging} established a lower bound dependent on the number of orthogonal strategy profiles in the decision set for structured games, including EFGs, resulting in a bound of $\Omega(\sqrt{T \log |\cA|})$. 
\citet{syrgkanis2015fast} demonstrated that in two-player zero-sum games, if one player uses MWU while the other best responds, the former must endure a regret of at least $\Omega(\sqrt{T})$. Similarly, \citet{chen2020hedging} gave the same lower bound when both players use MWU.
For equilibrium computation, \citet{anagnostides2023complexity} analyzed the sparse-CCE in EFGs, showing that under certain assumptions, no polynomial-time algorithm can learn an $\eps$-CCE with less than $2^{\log_2^{1/2 - o(1)} |\cT|}$ oracle accesses to the game for even constantly large $\eps > 0$, where $|\cT|$ is the number of nodes of the EFG.
In the context of stochastic bandit, \citet{bai2022near, fiegel2023adapting} investigated online learning in EFGs with bandit feedback, establishing matched lower and upper bounds.

\section{Preliminaries}
\label{sec:preliminaries}

\paragraph{General Notation} 
We use lowercase boldface letters, such as $\xb$, to denote vectors. Let $\xb \odot \yb$ represent the element-wise product of two vectors, and $|\xb|$ the element-wise absolute value. For an index set $\cC$, denote by $\xb[\cC] \in \RR^{\cC}$ the entries of $\xb$ at indices in $\cC$, and by $|\cC|$ the set cardinality. Let $\llbracket k \rrbracket$ be the set $\{1, 2, \dots, k\}$ and $\emptyset$ the empty set. Denote the simplex over the set $\cC$ by $\Delta^\cC$. The logarithm of $x$ to base $2$ is denoted as $\log x$. For non-negative sequences $\{a_n\}$ and $\{b_n\}$, $a_n \leq \cO(b_n)$ or $b_n \geq \Omega(a_n)$ indicates the existence of a global constant $C > 0$ such that $a_n \leq Cb_n$ for all $n > 0$.

\paragraph{Extensive-Form Games} An extensive-form game (EFG) is an $n$-player game with a sequential structure that can be represented using a tree. A detailed definition of EFG is available in Appendix~\ref{sec:extended-notations}. Each node represents a game state where an agent (a.k.a. player) $i \in \llbracket n \rrbracket$ or the environment takes action. We use superscript $(i)$ to denote properties of player $i$, but also omitting the superscript when context allows. Internal nodes branch into feasible actions. At these nodes, the designated player selects an action, advancing the game to the subsequent state according to the tree. The game concludes at a terminal node $z\in \cZ$, where players receive a reward $\ub[z]$. The goal of each player is to maximize their expected reward.
We assume the reward for each player is bounded by $1$ as follows:
\begin{assumption} \label{ass:reward}
    The reward received by player $i$ at any terminal node $z \in \cZ$ satisfies $\ub^{(i)}[z] \in [0, 1]$.
\end{assumption}

\paragraph{Tree-Form Sequential Decision Process} In an EFG, an individual player $i$'s decision problem can be modeled by a tree-form sequential decision process (TFSDP). Let $\cJ$ denote the set of decision points, where each point $j \in \cJ$ corresponds to an information set in the EFG. At each decision point, the player is provided with a set of available actions $\cA_j$ and must select an action $a \in \cA_j$. After an action $a$ is taken at decision point $j$, the game either concludes before the player acts again or continues to a set of possible next decision points determined by actions of other players or by stochastic events. We denote the set of potential subsequent decision points as $\cC_{ja} \subseteq \cJ$, which are reached immediately after action $a$ at decision point $j$. The tree structure guarantees non-overlapping successors, meaning $\cC_{ja} \cap \cC_{j'a'} = \emptyset$ for any distinct pairs $ja$ and $j'a'$, where $j \neq j'$ or $a \neq a'$. This encapsulation of all past actions and outcomes at each decision point is known as \emph{perfect recall}.

The point-action pair $ja$, where action $a$ is taken at decision point $j$, is referred to as an \emph{observation point}. This leads to a new state influenced by other agents and the environment. We denote the set of all point-action pairs as $\nonrootseqs := \{ja \mid j \in \cJ, a \in \cA_j\}$. Each decision point $j \in \cJ$ has a parent $p_j$, the last observation point on the path from the root of the decision process to $j$. If no action precedes $j$, $p_j$ defaults to the special observation point $\emptyset$.
We define $\seqs := \nonrootseqs \cup \{\emptyset\}$ as the set of observation points, each also called a \emph{sequence}. The total set of points in the TFSDP, $\tfsdp := \cJ \cup \seqs$, includes both decision points and sequences. We use $h \in \tfsdp$ for unspecified point types. The TFSDP concludes at terminal observation points $\leaveseqs = \{\sigma \in \seqs : \cC_\sigma = \emptyset\}$, where reward $\rb[\sigma]$ is observed. Under Assumption~\ref{ass:reward}, it holds that $\rb[\sigma] \in [0, 1]$.

\paragraph{Strategies and Transition Kernels} A strategy profile for a player in a TFSDP is an assignment of probability distributions over actions $\cA_j$ at each decision point $j \in \cJ$. 
As customary when using convex optimization techniques in EFGs, we represent strategies in the \emph{sequence-form representation} \citep{von1996efficient}. This representation stores a player's strategy as a vector whose entries represent the probability of \emph{all} of the player's actions on the \emph{path} from the root to the points. Since products of probabilities on paths are stored directly as variables, expected utilities are \emph{multilinear} in the sequence-form representation of the players' strategies. For symmetry reasons which will become apparent later, we slightly depart from the typical definition of the sequence form, by storing the product of a player's action probabilities on paths from the root to \emph{all} points in the tree---not only those that belong to the player. We call this representation the \emph{extended sequence-form representation}. 
For an extended sequence-form strategy to be valid, probability conservation constraints must be satisfied at every point of the tree. Specifically, the set of all valid extended sequence-form strategies is given by
\begin{align*}
    \hat{\cQ} &:= \left\{ \xb \in [0, 1]^{\tfsdp}:
    \begin{aligned}
    &\xb[\emptyset] = 1 \\
    &\xb[\sigma] = \xb[j] & \quad \forall \sigma \in \nonleaveseqs, j \in \cC_\sigma\\
    &\xb[j] = \sum_{a \in \cA_j} \xb[ja]  &\quad \forall j \in \cJ
    \end{aligned}
    \right\},
\end{align*}
The distribution of observation outcomes at each observation point, or the transition kernel, is determined by the strategy played by other agents as well as the environment. 
It can viewed as an opponent who only acts at the observation points. This allows us to encode the transition kernel using a vector $\yb \in [0, 1]^{\tfsdp}$ similar to the sequence-form strategy.
The entry corresponding to each point $h \in \tfsdp$ represents the product of transition probabilities on the path from the root to the point. Formally, the transition kernel space is given by
\begin{align*}
    \hat{\cY} &:= \left\{ \yb \in [0, 1]^{\tfsdp}:
    \begin{aligned}
    &\yb[\emptyset] = 1 \\
    &\yb[\sigma] = \sum_{j \in \cC_\sigma} \yb[j] &\quad  \forall \sigma \in \nonleaveseqs\\
    &\yb[j] = \yb[ja]  &\quad \forall j \in \cJ, a \in \cA_j \\
    \end{aligned}
    \right\},
\end{align*}

We parameterize the vector spaces primarily over the terminals, since the reach of internal points can be uniquely determined by terminal reaches. We define the compressed extensive-form decision space $\cQ := \{\xb[\leaveseqs] \mid \xb \in \hat{\cQ}\}$ and the compressed transition kernel $\cY := \{\yb[\leaveseqs] \mid \yb \in \hat{\cY}\}$, representing the projection of the corresponding spaces onto the vector space generated by terminal observation points.
The existing of one-to-one mapping guarantees that $\cQ$ and $\cY$ are homogeneous to $\hat{\cQ}$ and $\hat{\cY}$. For each non-terminal point $h \in \tfsdp \setminus \leaveseqs$, we denote by $\xb[h]$ the value of $\hat{\xb}[h]$ in $\hat{\cQ}$, corresponding to the compressed strategy profile $\xb$. 
When the agent adopts strategy profile $\xb \in \cQ$ while the transition kernel aligns with $\yb \in \cY$, the reach probability of terminal point $\sigma \in \leaveseqs$ is given by $\xb[\sigma]\yb[\sigma]$. The expected reward of the player can be computed from $u(\xb; \wb) = \langle \xb, \wb \rangle$, where $\wb := \rb \odot \yb$ is the reward vector, or the gradient of utility.

To assess the complexity of the game, we use several complexity measures for the extensive-form decision space. We define the \emph{tree size} and \emph{leaf count}, denoted as $\Qsize$ and $\Qwidth$, as the maximum number of observation points and terminal observation points that can be reached among all pure strategy profiles, respectively. 
Formally, we write 
\begin{align*}
    \Qsize := \sup_{\xb \in \cQ} \|\xb[\seqs]\|_1 = \sup_{\xb \in \cQ} \sum_{\sigma \in \seqs} \xb[\sigma], \qquad 
    \Qwidth := \sup_{\xb \in \cQ} \|\xb\|_1 = \sup_{\xb \in \cQ} \sum_{\sigma \in \leaveseqs} \xb[\sigma],
\end{align*}
where we implicitly extend the domain of $\xb$ to $\hat{\cQ}$ when writing $\xb[\seqs]$.
We further define $\cV := \cQ \cap \{0, 1\}^{\leaveseqs}$ as the vertices in the extensive-form decision space. Each vertex refers to a pure strategy profile of the player, which reduced to a norm-form strategy. The number of reduced normal-form strategy is given by $|\cV|$. We remark that both $\Qsize$ and $|\cV|$ have been used in the literature \citep[e.g.,][]{farina2022kernelized}.



\paragraph{Subtree} As we will often incorporate the recursive structure in TFSDP, we define a \emph{subtree} as the subgame starting from some internal point $h \in \tfsdp$. For two points $h, h' \in \tfsdp$, we write $h' \succeq h$ if $h'$ is reachable from $h$ in TFSDP.
Let $\tfsdp_h = \{h' \in \tfsdp: h' \succeq h\}$ and $\leaveseqs_h = \{\sigma \in \leaveseqs: \sigma \succeq h\}$ be the sets of points and terminals reachable from $h$. We denote by $\cQ_{h}$ and $\cY_{h}$ the projected spaces of $\cQ$ and $\cY$ over $[0, 1]^{\leaveseqs_h}$, with restrictions $\xb[h] = 1$ and $\yb[h] = 1$, respectively.
Formally, for any point $h \in \tfsdp$, we define the compressed projected decision space as $\cQ_{h} := \{\xb[\leaveseqs_h] \mid \xb \in \cQ, \xb[h] = 1\}$. From the definition of the compressed extensive-form decision space, this space can be seen as a projection of
\begin{align*}
    \hat{\cQ}_h &:= \left\{ \xb \in [0, 1]^{\tfsdp_h}:
    \begin{aligned}
    &\xb[h] = 1 \\
    &\xb[\sigma] = \xb[j] & \quad \forall \sigma \in \nonleaveseqs, j \in \cC_\sigma\\
    &\xb[j] = \sum_{a \in \cA_j} \xb[ja]  &\quad \forall j \in \cJ
    \end{aligned}
    \right\}.
\end{align*}
Similarly, we define the compressed projected transition kernel space as $\cY_{h} := \{\yb[\leaveseqs_h] \mid \yb \in \cY, \yb[h] = 1\}$. It is important to note that this space exhibits a similar closed form to that of the compressed extensive-form decision space.


\paragraph{Proximal Methods}

We review the standard objects and notations that relate to proximal methods. For a given decision set $\cQ$, the proximal method requires a distance generating function (DGF) $\varphi: \cQ \rightarrow \RR$ defined on the decision set.
The algorithm is valid when the DGF is $\mu$-strongly convex with respect to some norm $\|\cdot\|$. The DGF induces a generalized notion of distance $\cD_{\varphi}: \cQ \times \cQ \rightarrow \RR_{\geq 0}$, referred to as the \emph{Bregman Divergence}, which is defined by
\begin{align*}
    \cD_{\varphi} (\hat\xb \| \xb) := \varphi(\hat\xb) - \varphi(\xb) - \langle \nabla \varphi(\xb), \hat\xb - \xb \rangle.
\end{align*}
We define the \emph{proximal operator} with respect to the feasible space $\cQ$ and the DGF $\varphi$. Given a pivot point $\xb$ and a gradient vector $\gb \in \RR^\leaveseqs$, the proximal operator $\Pi_{\varphi}(\gb, \xb)$ generalizes the notion of a gradient ascent step, and is defined as
\begin{align*}
    \Pi_{\varphi}(\gb, \xb) := \argmax_{\hat \xb \in \cQ} \{\langle \gb, \hat \xb \rangle - \cD_{\varphi}(\hat \xb \| \xb) \}.
\end{align*}
For the extensive-form decision space $\cQ$, the DGF is usually restricted to the dilated DGF \citep{hoda2010smoothing} so that the proximal operator can be efficiently computed. Moreover, it is well known that the proximal operator is Lipschitz continuous: \citep[e.g.][Lemma 2.1]{nemirovski2004prox}
\begin{lemma} \label{lm:proximal-operator-lip}
    For any $\gb, \gb' \in \RR^\leaveseqs$, it satisfies that 
    $\|\Pi_{\varphi}(\gb, \xb) - \Pi_{\varphi}(\gb', \xb)\| \leq \mu^{-1}\|\gb - \gb'\|_*$.
\end{lemma}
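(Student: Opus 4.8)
The plan is to run the textbook variational‑inequality argument for proximal operators. Write $\xb_1 := \Pi_\varphi(\gb,\xb)$ and $\xb_2 := \Pi_\varphi(\gb',\xb)$. Observe that the map $\hat\xb \mapsto \cD_\varphi(\hat\xb\|\xb)$ differs from $\varphi$ only by an affine function of $\hat\xb$, so it is $\mu$-strongly convex with respect to $\|\cdot\|$; consequently each objective $f_\gb(\hat\xb) := \langle \gb,\hat\xb\rangle - \cD_\varphi(\hat\xb\|\xb)$ is $\mu$-strongly concave, its maximizer over the convex set $\cQ$ is unique, and it is characterized by the first-order optimality condition.

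First I would record the two optimality conditions. Since $\nabla_{\hat\xb}\, f_\gb(\hat\xb) = \gb - \nabla\varphi(\hat\xb) + \nabla\varphi(\xb)$, optimality of $\xb_1$ for $f_\gb$ over $\cQ$ gives $\langle \gb - \nabla\varphi(\xb_1) + \nabla\varphi(\xb),\, \hat\xb - \xb_1\rangle \le 0$ for every $\hat\xb\in\cQ$; instantiate this at $\hat\xb = \xb_2$. Symmetrically, optimality of $\xb_2$ for $f_{\gb'}$ gives $\langle \gb' - \nabla\varphi(\xb_2) + \nabla\varphi(\xb),\, \xb_1 - \xb_2\rangle \le 0$. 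Adding the two inequalities, the $\nabla\varphi(\xb)$ terms cancel, and after rearranging one obtains
\[
\langle \gb' - \gb,\, \xb_2 - \xb_1 \rangle \;\ge\; \langle \nabla\varphi(\xb_2) - \nabla\varphi(\xb_1),\, \xb_2 - \xb_1 \rangle.
\]

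Then I would invoke $\mu$-strong convexity of $\varphi$ in its monotone form, $\langle \nabla\varphi(\xb_2) - \nabla\varphi(\xb_1),\, \xb_2 - \xb_1\rangle \ge \mu\,\|\xb_1 - \xb_2\|^2$, to get $\mu\,\|\xb_1 - \xb_2\|^2 \le \langle \gb' - \gb,\, \xb_2 - \xb_1\rangle$. Applying Hölder's inequality with the norm/dual-norm pair, $\langle \gb' - \gb,\, \xb_2 - \xb_1\rangle \le \|\gb' - \gb\|_* \, \|\xb_1 - \xb_2\|$, and then dividing through by $\|\xb_1 - \xb_2\|$ (the claim is trivial when this is zero) yields $\|\xb_1 - \xb_2\| \le \mu^{-1}\|\gb' - \gb\|_* = \mu^{-1}\|\gb - \gb'\|_*$, which is exactly the statement with $\Pi_\varphi(\gb,\xb) = \xb_1$ and $\Pi_\varphi(\gb',\xb) = \xb_2$.

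Conceptually there is no real obstacle here — this is the standard proof (e.g., as in \citet{nemirovski2004prox}). The only points requiring a word of care are: (i) differentiability of $\varphi$, which is fine because the dilated entropy DGF is of Legendre type, so $\nabla\varphi$ is well defined on the relative interior of $\cQ$ and the proximal points $\xb_1,\xb_2$ lie there, making the gradient expressions and the first-order conditions legitimate (alternatively one can phrase everything with subgradients and the argument is unchanged); and (ii) keeping all inner products consistently in the compressed space $\RR^\leaveseqs$, which is pure bookkeeping.
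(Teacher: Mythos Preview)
Your proof is correct and is exactly the standard variational-inequality argument; the paper does not prove this lemma at all but simply cites it as well known, referring to \citet[Lemma~2.1]{nemirovski2004prox}. Your write-up supplies precisely that standard proof.
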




\section{Primal-Dual Treeplex Norms}


We first introduce the \tpLOneNorm~$\|\cdot\|\Ynorm$ and the \tpLInfNorm~$\|\cdot\|\Qnorm$, which are a primal-dual norm pair defined over the vector space $\RR^{\leaveseqs}$, with respect to a given TFSDP with the point set $\tfsdp$. As we will show later, these norms enable a better framework for analyzing FOMs in EFGs. Specifically, in the analysis of OMD, we use the fact that the $\ell_\infty$ norm for any feasible reward vector $\wb$ satisfies $\|\wb\|_\infty \leq 1$. Although \tpLInfNorm~is a relaxation of the $\ell_\infty$ norm, it can still preserve the same guarantee such that $\|\wb\|\Qnorm \leq 1$. With the relaxation, we have that the \tpLOneNorm~generates a smaller distance compared to the $\ell_1$ norm, which allows us to provide a better strong convexity modulus for the regularizer, finally improving the induced regret upper bound.

Both treeplex norms are defined as the support functions with respect to the vector of element-wise absolute values. Specifically, the support function of \tpLOneNorm~is defined using the transition kernel space $\cY$, while the support function of \tpLInfNorm~is defined using the extensive-form decision space $\cQ$. 
Formally, for some vector $\ub \in \RR^{\leaveseqs}$, we denote
\begin{align*}
    \|\ub\|\Ynorm &:= \sup_{\yb \in \cY} \langle |\ub|, \yb \rangle = \sup_{\yb \in \cY} \sum_{\sigma \in \leaveseqs} |\ub[\sigma]| \cdot \yb[\sigma], \\
    \|\ub\|\Qnorm &:= \sup_{\xb \in \cQ} \langle |\ub|, \xb \rangle = \sup_{\xb \in \cQ} \sum_{\sigma \in \leaveseqs} |\ub[\sigma]| \cdot \xb[\sigma].
\end{align*}
We remark that the \tpLInfNorm~has been used by \citet{zhang2024efficient} for analyzing low-degree swap regret minimization.
When the EFG degenerates to an NFG (normal-form game), i.e., $|\cJ| = 1$, the extensive-form decision space $\cQ$ in the TFSDP becomes a simplex $\Delta^{\leaveseqs}$, and the transition kernel $\cY = \{\one\}$ only contains the all-one vector.
It follows that the \tpLOneNorm~$\|\cdot\|\Ynorm$ and the \tpLInfNorm~$\|\cdot\|\Qnorm$ degenerate to the conventional $\ell_1$ norm and $\ell_{\infty}$ norm for the vector space, respectively.
The following lemma verifies that both \tpLOneNorm~and \tpLInfNorm~are norms in the technical sense.
The missing proofs in this section are provided in Appendix~\ref{sec:proof-to-tp-norm}.
\begin{Lemma}{lm:norm-is-norm}
    The functions $\|\cdot\|\Ynorm$ and $\|\cdot\|\Qnorm$ are norms defined on the space $\RR^\leaveseqs$. 
\end{Lemma}
Thanks to the recursive structure of TFSDP, the maximization among $\cY$ or $\cQ$ in the treeplex norms can be decomposed at each point $h \in \tfsdp$. This decomposition allows us to compute both treeplex norms in a recursive manner.
\begin{Lemma}{lm:tpnorm-recursive}
    Let $\ub \in \RR^{\leaveseqs_h}$ be a vector with respect to some point $h \in \tfsdp$. The \tpLOneNorm~and the \tpLInfNorm~of vector $\ub$ over $\tfsdp_h$ can be computed recursively as follows.
    \begin{itemize}[nosep]
        \item  If $h = \sigma \in \leaveseqs$ is a terminal observation point, then:
        \[\textstyle{
            \|\ub\|\Ynorms{\sigma} := |\ub[\sigma]|, \qquad \|\ub\|\Qnorms{\sigma} := |\ub[\sigma]|.
        }\]
        \item  If $h = j \in \cJ$ is a decision point, then:
        \[\textstyle{
            \|\ub\|\Ynorms{j} := \sum_{a \in \cA_j} \|\ub[\leaveseqs_{ja}]\|\Ynorms{ja}, \qquad \|\ub\|\Qnorms{j} := \max_{a \in \cA_j} \|\ub[\leaveseqs_{ja}]\|\Qnorms{ja}.
        }\]
    \item  If $h = \sigma \in \nonleaveseqs$ is a non-terminal observation point, then: 
        \[\textstyle{
            \|\ub\|\Ynorms{j} := \max_{a \in \cA_j} \|\ub[\leaveseqs_{ja}]\|\Ynorms{ja}, \qquad \|\ub\|\Qnorms{j} := \sum_{a \in \cA_j} \|\ub[\leaveseqs_{ja}]\|\Qnorms{ja}.
        }\]
    \end{itemize}
\end{Lemma}


Equipped with the recursive formula, we are able to show that the two treeplex norms with respect to the same TFSDP are a pair of primal-dual norms.
\begin{Theorem}{thm:tpnorms-duality}
    We have $\|\cdot\|\Ynorm$ and $\|\cdot\|\Qnorm$ is a pair of primal-dual norms for a given TFSDP with point set $\tfsdp$. Specifically, for any vector $\ub \in \RR^\leaveseqs$,
    \begin{align*}
        \|\ub\|\Ynorm^* := \sup_{\vb \in \RR^\leaveseqs} \frac{\langle \ub, \vb \rangle}{\|\vb\|\Ynorm} = \|\ub\|\Qnorm.
    \end{align*}
\end{Theorem}
Furthermore, the recursive formula also enables us to bound the treeplex norms for specific vectors.
\begin{Lemma}{lm:ub-Qnorm-upperbound}
    We have $\|\xb\|\Ynorm = 1$ for any strategy profile $\xb \in \cQ$, $\|\yb\|\Qnorm = 1$ for any transition kernel $\yb \in \cY$, and $\|\wb\|\Qnorm \leq 1$ for any feasible reward vector $\wb$ under Assumption~\ref{ass:reward}.
\end{Lemma}

\section{Metric Properties of the DilEnt Regularizer and Improved Regret Bounds}
\label{sec:upper}

In this section, we study the strong convexity modulus of the weight-one dilated entropy (DilEnt) function with respect to the treeplex norms defined above. 
The DilEnt regularizer is an instantiation of the more general dilated DGFs framework \citep{hoda2010smoothing}. Specifically, a dilated DGF for an extensive-form decision space is constructed by taking a weighted sum over suitable \emph{local} regularizers $\varphi_j$ for each $j \in \cJ$, and is of the form
\begin{align*}
    \varphi: \cQ \ni \xb \mapsto \sum_{j \in \cJ} \alpha_j \varphi_j^{\square} (\xb[p_j], \{\xb[ja]\}_{a \in \cA_j}),
\end{align*}
where
\begin{align*}
    \varphi_j^{\square} (\xb[p_j], \{\xb[ja]\}_{a \in \cA_j}) := \begin{cases}
        0 & \textrm{if } \xb[p_j] = 0 \\
        \xb[p_j] \varphi_j\Big(\frac{\{\xb[ja]\}_{a \in \cA_j}}{\xb[p_j]}\Big) & \textrm{otherwise}
    \end{cases}
\end{align*}
and $\alpha_j > 0$ are flexible weight terms that can be chosen to ensure good properties. Note that we have implicitly extended the domain of $\xb$ to $\hat{\cQ}$.
Each local function $\varphi_j : \Delta^{\cA_j} \rightarrow \RR$ is required to be continuously differentiable and strongly convex on the relative interior of the local probability simplex $\Delta^{\cA_j}$. They show that the proximal steps on the dilated DGF can be efficiently computed, provided that the proximal steps for each individual $\varphi_j$ can be efficiently computed.

The DilEnt regularizer $\DilEnt: \cQ \rightarrow \RR$ is a specific instantiation of the dilated DGF with $\alpha_j = 1$ and each local regularizer $d_j$ being the negative entropy function. It has been used as a specific instantiation for practical implementations \citep[e.g.][]{lee2021last}. The function has the following closed form.
\begin{align*}
    \DilEnt : \xb \mapsto \sum_{j \in \cJ} \sum_{a \in \cA_j} \xb[ja] \ln \Big(\frac{\xb[ja]}{\xb[p_j]}\Big).
\end{align*}
Prior to our work, weighted variants of the dilated entropy had been the only variants known to have concrete strong convexity bounds, all with weights that grew with the size of the decision space beneath a given decision point~\citep{kroer2020faster,farina2021better}. These results used the standard $\ell_1$ norm as the corresponding norm for showing strong convexity.
With the help of our new primal-dual treeplex norms, we can show that the DilEnt regularizer enjoys very strong properties on the extensive-form decision space. 
We inspect the following $\yb$-weighted dilated entropy for $\yb \in \cY$:
\begin{align*}
    \DilEntw{\yb} : \xb \mapsto \sum_{j \in \cJ} \sum_{a \in \cA_j} \yb[ja] \xb[ja] \ln \Big(\frac{\xb[ja]}{\xb[p_j]}\Big).
\end{align*}
By showing that the function is equivalent to the $\yb$-weighted negative entropy on the terminal reach, we are able to prove $\DilEntw{\yb}$ is $1$-strongly convex with respect to the $\yb$-weighted $\ell_1$ norm. Since the difference $\DilEnt - \DilEntw{\yb}$ is a summation of convex functions, it can be finally demonstrated that the DilEnt regularizer is $1$-strongly convex with respect to the \tpLOneNorm. 
\begin{Lemma}{lm:strongly-convex}
    The weight-one dilated entropy (DilEnt) is $1$-strongly convex within the extensive-form decision space $\cQ$ with respect to the \tpLOneNorm~$\|\cdot\|\Ynorm$.
    Specifically, for any vector $\zb \in \RR^\leaveseqs$ and strategy profile $\xb \in \cQ$, we have that $\|\zb\|_{\nabla^2 \DilEnt(\xb)}^2 \geq \|\zb\|\Ynorm^2$.
\end{Lemma}

The complete proof is provided in Appendix~\ref{sec:proof-to-ub}.
Next, we inspect the diameter of the decision space measured by DilEnt. Using an induction statement, we show that $\ln|\cV| \leq \DilEnt(\xb) \leq 0$ holds for any strategy profile $\xb \in \cQ$. By choosing the initial strategy that minimizes the DilEnt regularizer, we upper bound the diameter of the sequence-form decision space with respect to the DilEnt regularizer.
\begin{Lemma}{lm:ub-DilEnt-div-upperbound}
    Let $\xb_1 := \argmin_{\xb \in \cQ} \DilEnt(\xb)$ be the strategy profile minimize the DilEnt regularizer. The Bregman divergence generated by the DilEnt regularizer between $\xb_1$ and any $\xb_* \in \cQ$ can be upper bounded by $\cD_\DilEnt(\xb_*, \xb_1) \leq \ln |\cV|$.
\end{Lemma}
By combining the above two lemmas, we have that the DilEnt regularizer achieves $|\cD|/\mu \leq \ln |\cV|$. Using this result, we can establish performance guarantees for FOMs with the DilEnt regularizer. We list these results in the following sections.


\subsection{Results on Online Mirror Descent}

We first inspect online learning in TFSDP with full-information feedback. Consider the use of (Predictive) OMD \citep{chiang2012online}. The pseudocode of the algorithm can be found in Algorithm~\ref{alg:OMD} in Appendix~\ref{sec:alg}. The algorithm starts from $\tilde \xb_1 \leftarrow \argmin_{\xb \in \cQ} \varphi(\xb)$ and follows a straightforward structure in each episode $t$: Take a proximal gradient step from $\tilde \xb_t$ according to the prediction $\mb_t$ to get the policy $\xb_t$; Execute policy $\xb_t$; Take another proximal gradient step from $\tilde \xb_t$ according to the observed reward vector $\wb_t$ to get $\tilde \xb_{t+1}$. The algorithm takes some DGF $\varphi$ to execute proximal steps:
\begin{align*}
\xb_t \leftarrow \Pi_{\varphi}(\eta \mb_t, \tilde \xb_t), \quad 
\tilde \xb_{t+1} \leftarrow \Pi_{\varphi}(\eta \wb_t, \tilde \xb_t).
\end{align*}
The value of prediction $\mb_t$ depends on the specific variant used (e.g. $\mb_t \leftarrow \wb_{t-1}$ in Optimistic OMD).
For the non-predictive variant, we set $\mb_t \leftarrow \zero$, and thus $\xb_t = \tilde \xb_t$. It is known that the algorithm has the following regret bound with respect to a given pair of primal-dual norms.

\begin{theorem}[Regret Bound for (Predictive) OMD, \citet{rakhlin2013online, syrgkanis2015fast}]
    \label{thm:omd-regret-ub-general}
    Let $\|\cdot\|$ and $\|\cdot\|_*$ be a pair of primal-dual norm defined on $\RR^\leaveseqs$. Let $\varphi$ be a DGF that is $\mu$-strongly convex on $\|\cdot\|$. Denote $\wb_t$ as the reward gradient received in episode $t$. 
    The cumulative regret of running (Predictive) OMD with DGF $\varphi$ and learning rate $\eta$ can be upper bounded by 
    \begin{align*}
        \Regret(T) &:= \max_{\xb_* \in \cQ}\sum_{t=1}^{T} \langle \xb_* - \xb_t, \wb_t\rangle \leq \frac{1}{\eta} \cD_\varphi(\xb_*, \xb_1) + \frac{\eta}{2\mu}\sum_{t=1}^{T} \|\wb_t - \mb_t\|_*^2.
    \end{align*}
\end{theorem}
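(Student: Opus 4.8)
The plan is to run the textbook analysis of predictive Bregman-proximal descent from \citet{rakhlin2013online, syrgkanis2015fast}, adapted to the proximal-operator notation fixed in \cref{sec:preliminaries}. The single engine of the proof is the variational inequality for a proximal step: if $\xb^+ = \Pi_\varphi(\gb,\xb) = \argmax_{\hat\xb\in\cQ}\{\langle\gb,\hat\xb\rangle - \cD_\varphi(\hat\xb\|\xb)\}$, then first-order optimality over the convex set $\cQ$ together with $\nabla_{\hat\xb}\cD_\varphi(\hat\xb\|\xb) = \nabla\varphi(\hat\xb) - \nabla\varphi(\xb)$ yields $\langle\gb,\xb_*-\xb^+\rangle \leq \langle\nabla\varphi(\xb^+)-\nabla\varphi(\xb),\xb_*-\xb^+\rangle$ for every $\xb_*\in\cQ$ (the relevant gradients exist because the proximal iterates lie in the relative interior of $\cQ$, where $\varphi$ is differentiable). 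Rewriting the right-hand side with the three-point identity $\langle\nabla\varphi(\xb^+)-\nabla\varphi(\xb),\xb_*-\xb^+\rangle = \cD_\varphi(\xb_*\|\xb) - \cD_\varphi(\xb_*\|\xb^+) - \cD_\varphi(\xb^+\|\xb)$, and applying this once to $\tilde\xb_{t+1}=\Pi_\varphi(\eta\wb_t,\tilde\xb_t)$ with a generic comparator $\xb_*$, and once to $\xb_t=\Pi_\varphi(\eta\mb_t,\tilde\xb_t)$ evaluated at the in-set point $\tilde\xb_{t+1}$, gives
\begin{align*}
\eta\langle\wb_t,\xb_*-\tilde\xb_{t+1}\rangle &\leq \cD_\varphi(\xb_*\|\tilde\xb_t) - \cD_\varphi(\xb_*\|\tilde\xb_{t+1}) - \cD_\varphi(\tilde\xb_{t+1}\|\tilde\xb_t),\\
\eta\langle\mb_t,\tilde\xb_{t+1}-\xb_t\rangle &\leq \cD_\varphi(\tilde\xb_{t+1}\|\tilde\xb_t) - \cD_\varphi(\tilde\xb_{t+1}\|\xb_t) - \cD_\varphi(\xb_t\|\tilde\xb_t).
\end{align*}

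Next I would split the per-round regret as $\langle\wb_t,\xb_*-\xb_t\rangle = \langle\wb_t,\xb_*-\tilde\xb_{t+1}\rangle + \langle\mb_t,\tilde\xb_{t+1}-\xb_t\rangle + \langle\wb_t-\mb_t,\tilde\xb_{t+1}-\xb_t\rangle$, bound the first two terms by the two displays above (divided by $\eta$), and control the prediction-error term by H\"older followed by Young: $\langle\wb_t-\mb_t,\tilde\xb_{t+1}-\xb_t\rangle \leq \|\wb_t-\mb_t\|_*\,\|\tilde\xb_{t+1}-\xb_t\| \leq \frac{\eta}{2\mu}\|\wb_t-\mb_t\|_*^2 + \frac{\mu}{2\eta}\|\tilde\xb_{t+1}-\xb_t\|^2$. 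Then the two copies of $\pm\frac1\eta\cD_\varphi(\tilde\xb_{t+1}\|\tilde\xb_t)$ cancel; $\mu$-strong convexity of $\varphi$ (our \cref{lm:strongly-convex} in the instantiation $\varphi=\DilEnt$, but used abstractly here) gives $\frac1\eta\cD_\varphi(\tilde\xb_{t+1}\|\xb_t)\geq \frac{\mu}{2\eta}\|\tilde\xb_{t+1}-\xb_t\|^2$, which absorbs the leftover Young residual; and $\frac1\eta\cD_\varphi(\xb_t\|\tilde\xb_t)\geq 0$ is discarded. What survives is the telescoping-ready bound $\langle\wb_t,\xb_*-\xb_t\rangle \leq \frac1\eta\big(\cD_\varphi(\xb_*\|\tilde\xb_t)-\cD_\varphi(\xb_*\|\tilde\xb_{t+1})\big) + \frac{\eta}{2\mu}\|\wb_t-\mb_t\|_*^2$.

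Summing over $t=1,\dots,T$ telescopes the Bregman terms to $\frac1\eta\big(\cD_\varphi(\xb_*\|\tilde\xb_1)-\cD_\varphi(\xb_*\|\tilde\xb_{T+1})\big) \leq \frac1\eta\cD_\varphi(\xb_*\|\xb_1)$ after dropping the nonnegative terminal term and identifying the initialization $\tilde\xb_1 = \argmin_{\xb\in\cQ}\varphi(\xb)$ with the $\xb_1$ of the statement; taking the maximum over $\xb_*\in\cQ$ gives the claim, and the non-predictive variant follows by setting $\mb_t\leftarrow\zero$ (so $\xb_t=\tilde\xb_t$ and $\|\wb_t-\mb_t\|_* = \|\wb_t\|_*$). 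This argument is entirely standard and has no genuine obstacle; the only place that rewards care is the bookkeeping of the stability terms — one must invoke the second variational inequality at \emph{exactly} the comparator $\tilde\xb_{t+1}$ so the $\cD_\varphi(\tilde\xb_{t+1}\|\tilde\xb_t)$ pair cancels, and then use strong convexity (not merely nonnegativity) on $\cD_\varphi(\tilde\xb_{t+1}\|\xb_t)$ so the positive $\frac{\mu}{2\eta}\|\tilde\xb_{t+1}-\xb_t\|^2$ term is swallowed rather than left behind.
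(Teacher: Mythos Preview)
The paper does not supply its own proof of this theorem; it is stated as a known result attributed to \citet{rakhlin2013online, syrgkanis2015fast} and is then used as a black box in the proofs of \cref{thm:omd-regret-ub} and \cref{thm:COMD-result}. Your proposal reproduces the standard argument from those references (optimality inequality for each proximal step, three-point identity, the decomposition $\langle\wb_t,\xb_*-\xb_t\rangle = \langle\wb_t,\xb_*-\tilde\xb_{t+1}\rangle + \langle\mb_t,\tilde\xb_{t+1}-\xb_t\rangle + \langle\wb_t-\mb_t,\tilde\xb_{t+1}-\xb_t\rangle$, H\"older--Young on the last term, cancellation, telescoping), and the details are correct.
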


Consider using non-predictive OMD with the DilEnt regularizer $\DilEnt$. The performance of the algorithm can be analyzed by selecting the \tpLOneNorm~and the \tpLInfNorm~as the desired pair of primal-dual norms. Using the diameter-to-strong-convexity ratio of DilEnt, we can immediately get a regret upper bound that recovers the state-of-the-art result given by KOMWU \citep{farina2022kernelized}.
\begin{Theorem}{thm:omd-regret-ub}
    Let $\varphi$ be a regularizer for extensive-form decision space $\cQ$ which is $\mu$-strongly convex on $\|\cdot\|\Ynorm$ and has a diameter $|\cD| := \sup_{x_* \in \cQ} \cD_\varphi(x_*, x_1)$. 
    Under Assumption~\ref{ass:reward}, 
    the cumulative regret of running OMD with regularizer $\varphi$ and learning rate $\eta := \sqrt{2 |\cD| / (\mu T)}$ is upper bounded by 
    \begin{align*}
        \Regret(T) \leq \sqrt{2|\cD| / \mu}\sqrt{T}.
    \end{align*}
    Moreover, if we use the DilEnt regularizer $\DilEnt$ in proximal steps, the result can be specified as
    \begin{align*}
        \Regret(T) \leq \sqrt{2\ln |\cV|} \sqrt{T}.
    \end{align*}
\end{Theorem}

\subsection{Results on Clairvoyant Online Mirror Descent}

Consider equilibrium computation in $n$-player EFGs. In this scenario, a group of agents aim to jointly learn the coarse correlated equilibrium (CCE) given only oracle access to the game (See Appendix~\ref{sec:extended-notations} for detailed definition).
We adopt Clairvoyant OMD to compute CCE, introduced by \citet{piliouras2022beyond, farina2022clairvoyant}. The pseudocode of the algorithm can be found in Algorithm~\ref{alg:COMD} in Appendix~\ref{sec:alg}.
The algorithm can be viewed as a specialized form of predictive OMD, in each episode $t \in \llbracket K \rrbracket$, an additional routines is introduced to compute the prediction vector $\mb_t$ for each player $i$ (we omit the superscript).
The prediction in the episode is calculated through $L$ steps of fixed-point iteration starting from $\xb_{t,1} \leftarrow \tilde{\xb}_{t}$. In each step $l \in \llbracket L \rrbracket$, each player $i$ computes proximal step
\begin{align*}
    \xb_{t, l+1} \leftarrow \Pi_{\varphi}(\eta \wb_{t,l}, \tilde \xb_t).
\end{align*}
where we denote by $\wb_{t, l}$ the reward vector observed by the player joint policy is corresponding to $\xb_{t,l}$. Clairvoyant OMD finally sets the prediction $\mb_t$ to the iteration result $\wb_{t,L}$. In this case, the committed policy $\xb_t \leftarrow \Pi_{\varphi}(\eta \mb_t, \tilde{\xb}_t)$ in the OMD framework is equal to $\xb_{t,L}$ and the reward vector $\wb_{t}$ is $\wb_{t,L+1}$.
We show the fixed-point iteration achieves linear convergence. Thus, the difference $\|\wb_{t,L+1} - \wb_{t,L}\|\Qnorm$ can be made as arbitrarily small. The proof starts from the following inequality, establishing that the reward vector is Lipschitz continuous with respect to the joint strategy:
\[
\|\wb^{(i)}_1 - \wb^{(i)}_2\|\Qnormp{(i)} \leq \sum_{j=1}^{n}\|\xb^{(j)}_1 - \xb^{(j)}_2\|\Ynormp{(j)}.
\]
We denote by $\wb^{(i)}_1$ the reward vector of player $i$ when all the players align with joint policy $\{\xb^{(j)}_1\}_{j=1}^{n}$.
Together with the fact that the proximal operator is Lipschitz (Lemma~\ref{lm:proximal-operator-lip}), we can show that the fixed-point iteration achieves a linear convergence rate when the learning rate $\eta$ is sufficiently small.
\begin{Lemma}{lm:COMD-fixed-point-convergence}
    Under Assumption~\ref{ass:reward}, when running COMD with DilEnt, the reward vector $\wb^{(i)}_{t, l}$ received by player $i$ in any $(t, l) \in \llbracket K \rrbracket \times \llbracket L \rrbracket$ satisfies $\|\wb^{(i)}_{t, l+1} - \wb^{(i)}_{t, l}\|\Qnormp{(i)} \leq 2(n\eta)^{l-1}$.
\end{Lemma}
Therefore, with a logarithmic number of iterations, the discrepancy between the reward vector and the prediction in the OMD framework can be made as small as $\|\wb_{t} - \mb_t\|\Qnorm = \|\wb_{t,L+1} - \wb_{t,L}\|\Qnorm \leq 1/{K}$. Substituting this result into Theorem~\ref{thm:omd-regret-ub-general} implies the average joint policy given by all $\xb_t$ among $t \in \llbracket K \rrbracket$ episodes in Clairvoyant OMD only causes a constant regret. Using the standard online-to-batch conversion \citep{cesa2006prediction}, we can demonstrate that Clairvoyant OMD finds an $\epsilon$-CCE with only a near-linear number of oracle accesses to the game, establishing the new state of the art.
\begin{Theorem}{thm:COMD-result}
    Under Assumption~\ref{ass:reward}, if every player runs Clairvoyant OMD with DilEnt regularizer and learning rate $\eta = 1/(2n)$ for $K$ episodes. With $L = \lceil \log K\rceil$ steps of inner iterations, the average joint policy $\bar \pi_{K}$ is an $\eps$-CCE for $\eps \leq \cO(n\ln |\cV| / K)$. This implies the algorithm converge to a CCE at rate $\cO(n \log |\cV| \log T / T)$ where $T = KL$ is the number of oracle access to the game.
\end{Theorem}

\section{Lower Bounds for Regret Minimization in EFGs and Optimality of DilEnt}
In this section, we show that the DilEnt regularizer has a nearly optimal diameter-to-strong-convexity ratio within the extensive-form decision space. To establish this, we prove a lower bound for online learning in TFSDP with full-information feedback. We show that every algorithm must suffer a regret lower bound that matches our regret upper bound in Theorem~\ref{thm:omd-regret-ub}. The optimality of the DilEnt regularizer is demonstrated by contradiction: If there were a regularizer with a much better diameter-to-strong-convexity ratio, then the regret of running OMD with that regularizer would violate the established regret lower bound.
We prove the lower bound by constructing a hard instance that is completely random. In this scenario, no online learning algorithm can benefit from historical data, while the cumulative reward of the optimal policy benefits from the anti-concentration properties of the maximum among random distributions.

\begin{Theorem}{thm:regret-lb}
    Given a TFSDP with decision space $\cQ$, there is an EFG satisfying Assumption \ref{ass:reward} such that: when the other players are controlled by the adversary, any algorithm \texttt{Alg} incurs an expected regret of at least $\Omega(\sqrt{\Qwidth \log |\cA_0|}\sqrt{T})$ for a given of episode number $T \geq \Qwidth$, where $|\cA_0| := \min_{j \in \cJ} |\cA_j|$ is the size of the minimum action set.
\end{Theorem}

We provide missing proofs in Appendix~\ref{sec:proof-to-lb}.
Comparing Theorem~\ref{thm:regret-lb} with Theorem~\ref{thm:omd-regret-ub}, we establish a lower bound for the diameter-to-strong-convexity ratio, $|\cD|/\mu \geq \Omega(\Qwidth \log |\cA_0|)$ for any regularizer on the extensive-form decision space with respect to our new treeplex norms. Recall the diameter-to-strong-convexity ratio of the DilEnt regularizer is at most $|\cD|/\mu \leq \ln |\cV|$, derived from combining Lemma~\ref{lm:strongly-convex} and Lemma~\ref{lm:ub-DilEnt-div-upperbound}. We establish connections between these two quantities using the following lemma, implying that the ratio achieved by the DilEnt regularizer is nearly optimal.

\begin{Lemma}{lm:ub-width-size-conv}
    Consider a TFSDP with a given point set $\tfsdp$. Define $|\cA| := \max_{j \in \cJ} |\cA_j|$ as the size of the largest action set. If there is no non-root observation point yields exactly one observation outcome, that is, $|\cC_\sigma| \geq 2$ for any $\sigma \in \nonrootseqs \setminus \leaveseqs$, then it follows that $\ln |\cV| \leq \cO(\Qwidth \log |\cA|)$. Without this structural condition, we have $\ln |\cV| \leq \cO(\Qwidth \log |\cJ \times \cA|)$ in general.
\end{Lemma}

According to Lemma~\ref{lm:ub-width-size-conv}, if every each action set has the same number of actions $|\cA_0| = |\cA|$, and no non-root observation point yields only one outcome, we have $|\cD|/\mu \leq \ln |\cV| \leq \cO(\Qwidth \log |\cA_0|)$, implying the DilEnt regularizer achieves the optimal diameter-to-strong-convexity ratio up to constant factors in this scenario. If the action sets vary in size, it creates a gap logarithmic to the size of the maximal action set. If there is some observation point that yields only one observation, the gap inflates with another factor of logarithmic to the number of decision points. All in all, the DilEnt regularizer achieves the optimal diameter-to-strong-convexity ratio up to only logarithmic factors.

\section{Conclusion, Limitations, and Open Questions}

In this paper, we introduce a new primal-dual norm pair for studying the strong convexity properties of distance-generating functions for sequence-form strategy polytopes arising in extensive-form games. Quantifying these properties is a key component in the construction of efficient first-order optimization methods for equilibrium computation. Our techniques enable us to explain the strong theoretical performance of the DilEnt regularizer, for which no meaningful strong convexity bounds were previously known. In fact, we find that among all convex regularizers for extensive-form games, DilEnt is optimal up to logarithmic factors. To establish this result, we introduced a new regret lower bound for learning in extensive-form games, which is likely of independent relevance. 


We remark that our lower bound only applies to extensive-form games with full-information feedback, a setting common in self-playing algorithms. Thus, the optimality of the DilEnt regularizer may not extend to scenarios with stochastic feedback. It would be interesting to study tight lower bounds for learning under other types of feedback. While \citet{fiegel2023adapting} gave matching lower and upper bounds under trajectory bandit feedback, results for external sampling remain open to our knowledge.


Furthermore, we can only prove tight upper and lower bounds up to constant factors for the diameter-to-strong-convexity ratio in a specific family of TFSDPs. There still remains a logarithmic gap related to the total number of sequences in general. In principle, it is possible that this gap could be further reduced, yielding a different regularizer that offers logarithmic advantages over the DilEnt regularizer. Overcoming this technical hurdle and showing that the DilEnt regularizer indeed achieves the optimal rate remains an interesting direction of research. Notably, \citet{fiegel2023adapting} were also only able to prove lower bounds in specific games under bandit feedback, alluding to the intrinsic hardness of proving lower bounds without slight restrictions to the game class.

\begin{ack}
Christian Kroer was supported by the Office of Naval Research awards N00014-22-1-2530 and N00014-23-1-2374, and the National Science Foundation awards IIS-2147361 and IIS-2238960.
\end{ack}

\clearpage

\bibliographystyle{plainnat}
\bibliography{refs.bib}


\newpage
\appendix

\section{Extended Preliminaries} \label{sec:extended-notations}
An extensive-form game (EFG) is an $n$-player game with a sequential structure representable as a tree. Let $\efg$ be the set of all nodes in the tree, where each node $z \in \efg$ corresponds to a game state. Each node is assigned to either a player $i \in \llbracket n \rrbracket$ or the environment for action. The subset $\efg^{(i)} \subseteq \efg$ comprises nodes assigned to player $i$. The environment, treated as a special player, acts according to a fixed distribution, modeling stochastic outcomes like card dealing in games. Each node's branches represent possible actions. Upon reaching a node, the assigned player selects an action, moving the game to the next node per the tree structure. Let $\efgleaves$ be the set of terminal nodes. The game concludes when it reaches a terminal node $z \in \efgleaves$, where each player $i$ receives a reward $\ub^{(i)}[z]$. Players aim to maximize their expected reward by reaching these terminal nodes. We assume $\ub^{(i)}[z] \in [0, 1]$, following Assumption~\ref{ass:reward}.

    \begin{figure}[htp]\centering%
        \raisebox{1cm}{\scalebox{.98}{

\begin{tikzpicture}[baseline=0pt]
        \def\done{.9*1.6}
        \def\dtwo{.45*1.6}
        \def\dleaf{.25*1.6}
        \def\dvert{-.8*1.2}
        \contourlength{.6mm}
        \def\Xseq#1{\scalebox{.8}{\contour{white}{\seq{#1}}}}

        \node[pl1] (A) at (0, 0) {};
        \node[pl2] (X) at ($(-\done,\dvert)$) {};
        \node[pl2] (Y) at ($(\done,\dvert)$) {};
        \node[pl1] (B) at ($(X) + (-\dtwo, \dvert)$) {};
        \node[pl1] (C) at ($(X) + (\dtwo, \dvert)$) {};
        \node[termina] (l1) at ($(B) + (-\dleaf, \dvert)$) {};
        \node[termina] (l2) at ($(B) + (\dleaf, \dvert)$) {};
        \node[termina] (l3) at ($(C) + (-\dleaf, \dvert)$) {};
        \node[termina] (l4) at ($(C) + (\dleaf, \dvert)$) {};

        \node[pl1] (D1) at ($(Y) + (-\dtwo, \dvert)$) {};
        \node[pl1] (D2) at ($(Y) + (\dtwo, \dvert)$) {};
        \node[termina] (l5) at ($(D1) + (-\dleaf, \dvert)$) {};
        \node[termina] (l6) at ($(D1) + (0, \dvert)$) {};
        \node[termina] (l7) at ($(D1) + (\dleaf, \dvert)$) {};
        \node[termina] (l8) at ($(D2) + (-\dleaf, \dvert)$) {};
        \node[termina] (l9) at ($(D2) + (0, \dvert)$) {};
        \node[termina] (l10) at ($(D2) + (\dleaf, \dvert)$) {};

        \draw[action] (A)  -- node{\Xseq{1}} (X);
        \draw[action] (A)  -- node{\Xseq{2}} (Y);
        \draw[action] (B)  -- node{\Xseq{3}} (l1);
        \draw[action] (B)  -- node{\Xseq{4}} (l2);
        \draw[action] (C)  -- node{\Xseq{5}} (l3);
        \draw[action] (C)  -- node{\Xseq{6}} (l4);
        \draw[action] (D1) -- node{\Xseq{7}} (l5);
        \draw[action] (D1) -- node[fill=white,inner sep=0mm]{\Xseq{8}} (l6);
        \draw[action] (D1) -- node{\Xseq{9}} (l7);
        \draw[action] (D2) -- node{\Xseq{7}} (l8);
        \draw[action] (D2) -- node[fill=white,inner sep=0mm]{\Xseq{8}} (l9);
        \draw[action] (D2) -- node{\Xseq{9}} (l10);
        \draw[action] (X) -- (B);
        \draw[action] (X) -- (C);
        \draw[action] (Y) -- (D1);
        \draw[action] (Y) -- (D2);

        \begin{pgfonlayer}{background}
            \SingletonInfoset{X}
            \node[infoset, left=0.3mm of X] {\scalebox{.8}{\decpt{P}}};
            \SingletonInfoset{Y}
            \node[infoset, right=0.3mm of Y] {\scalebox{.8}{\decpt{Q}}};

            \SingletonInfoset{A}
            \node[infoset, left=0.3mm of A] {\scalebox{.8}{\decpt{a}}};

            \SingletonInfoset{B}
            \node[infoset, left=0.3mm of B] {\scalebox{.8}{\decpt{b}}};

            \SingletonInfoset{C}
            \node[infoset, left=0.3mm of C] {\scalebox{.8}{\decpt{c}}};

            \Infoset{(D1.center) -- (D2.center)}
            \node[infoset]  at ($(D2) + (.4, 0)$) {\scalebox{.8}{\decpt{d}}};
        \end{pgfonlayer}
    \end{tikzpicture}

        }}%
        ~$\longrightarrow$~%
        \raisebox{.8cm}{\scalebox{.98}{

        \begin{tikzpicture}[baseline=0pt,scale=1]
    \def\done{1.2}
    \def\dtwo{.6}
    \def\dleaf{.37}
    \def\dvert{-0.9}
    \contourlength{.6mm}
    \def\Xseq#1{\scalebox{.8}{\contour{white}{\seq{#1}}}}

    \node[obspt] (E) at ($(0, -\dvert)$) {};
    \node[decpt] (A) at (0, 0) {};
    \node[obspt] (X) at ($(-\done,\dvert)$) {};
    \node[obspt] (Y) at ($(\done,\dvert)$) {};
    \node[decpt] (D) at ($(Y) + (0,\dvert)$) {};
    \node[decpt] (B) at ($(X) + (-\dtwo, \dvert)$) {};
    \node[decpt] (C) at ($(X) + (\dtwo, \dvert)$) {};

    \node[draw=black,inner sep=.6mm] (l1) at ($(B) + (-\dleaf, \dvert)$) {};
    \node[draw=black,inner sep=.6mm] (l2) at ($(B) + (\dleaf, \dvert)$) {};
    \node[draw=black,inner sep=.6mm] (l3) at ($(C) + (-\dleaf, \dvert)$) {};
    \node[draw=black,inner sep=.6mm] (l4) at ($(C) + (\dleaf, \dvert)$) {};
    \node[draw=black,inner sep=.6mm] (l5) at ($(D) + (-1.25*\dleaf, \dvert)$) {};
    \node[draw=black,inner sep=.6mm] (l6) at ($(D) + (0, \dvert)$) {};
    \node[draw=black,inner sep=.6mm] (l7) at ($(D) + (1.25*\dleaf, \dvert)$) {};

    \draw[action] (A) -- node{\Xseq{1}} (X);
    \draw[action] (A) -- node{\Xseq{2}} (Y);
    \draw[action] (B) -- node{\Xseq{3}} (l1);
    \draw[action] (B) -- node{\Xseq{4}} (l2);
    \draw[action] (C) -- node{\Xseq{5}} (l3);
    \draw[action] (C) -- node{\Xseq{6}} (l4);
    \draw[action] (D) -- node{\Xseq{7}} (l5);
    \draw[action] (D) -- node[fill=white,inner sep=0mm]{\Xseq{8}} (l6);
    \draw[action] (D) -- node{\Xseq{9}} (l7);
    \draw[observ] (E) -- (A);
    \draw[observ] (X) -- (B);
    \draw[observ] (X) -- (C);
    \draw[observ] (Y) -- (D);


    \node[black, left=0mm of E] {\scalebox{.8}{\decpt{$\emptyset$}}};
    \node[black, left=0mm of A] {\scalebox{.8}{\decpt{a}}};
    \node[black, left=0mm of B] {\scalebox{.8}{\decpt{b}}};
    \node[black, left=0mm of C] {\scalebox{.8}{\decpt{c}}};
    \node[black,right=0mm of D] {\scalebox{.8}{\decpt{d}}};
\end{tikzpicture}

        }}%
        \raisebox{-1.8cm}{
        \scalebox{.85}{\begin{tikzpicture}
                \draw[gray,rounded corners] (-.35, -.35) rectangle (3.2, 3.25);
                \node[pl1]   at (0, 2.9) {};
                \node[anchor=west,black!80] at (0.2, 2.9) {\small Player 1};
                \node[pl2]   at (0, 2.35) {};
                \node[anchor=west,black!80] at (0.2, 2.35) {\small Player 2};
                \node (infoset) at (0, 1.8) {};
                \SingletonInfoset{infoset}
                \node[anchor=west,black!80] at (0.2, 1.8) {\small Information set};

                \draw[gray] (0.0,1.45) -- (2.85,1.45);
                
                \node[decpt,inner sep=1.1mm]   at (0, 1.1) {};
                \node[anchor=west,black!80] at (0.2, 1.1) {\small Decision point};
                \node[obspt]   at (0, 0.55) {};
                \node[anchor=west,black!80] at (0.2, 0.55) {\small Observation point};
                \node[termina] at (0, 0.0) {};
                \node[anchor=west,black!80] at (0.2, 0.0) {\small Terminal point(node)};
                \node[rotate=90,gray] at (-.54, 0.55) {\scalebox{.9}{\textsl{Legend}}};
            \end{tikzpicture}%
        }}

        \caption{An two-player extensive-form game and the corresponding TFSDP of player $1$. The TFSDP has decision point $\cJ = \{\textsf{A}, \textsf{B}, \textsf{C}, \textsf{D}\}$. It has tree size $\Qsize = 4$ and leaf count $\Qwidth = 2$, both given by the pure strategy $\{\textsf{A} \rightarrow \textsf{1}, \textsf{B} \rightarrow \textsf{3},\textsf{C} \rightarrow \textsf{5}\}$. Furthermore, The player $1$ has $|\cV| = 7$ pure strategy profiles in total. }

        \label{fig:efg_tfsdp}
    \end{figure}

We model imperfect information with information sets. An information set $\cI \subseteq \efg^{(i)}$ is a subset of nodes assigned to player $i$, which the player cannot distinguish. The player must act consistently across all nodes within the same information set. For example, in poker, an information set includes all states with identical public cards and bets, with each node representing a different potential hand held by the opponent. Each terminal observation point $\sigma \in \leaveseqs^{(i)}$ is associated with a set $\cI_{\sigma} \subseteq \efgleaves$ of corresponding terminal nodes in the original EFG. For each terminal node $z \in \efgleaves$, $\sigma^{(i)}_z$ denotes the observation point of player $i$ in the TFSDP.

Let $\pi = \{\xb^{(i)}\}_{i=1}^{n}$ be a joint policy of $n$ players. We denote by $u^{(i)}(\pi)$ the expected reward received by player.
Consider the terminal node $z \in \efgleaves$, the reach of probability can be computed by $\pb[z] \prod_{j=1}^{n} \xb^{(j)}[\sigma^{(j)}_z]$, where $\pb[z]$ is the product of transition probability of the environment actions from the root to $z$.
In this case, the expected reward of player $i$ is given by 
\begin{align*}
    u^{(i)}(\pi) = \sum_{z \in \cZ} \ub^{(i)}[z] \cdot \pb[z] \prod_{j=1}^{n} \xb^{(j)}[\sigma^{(j)}_z].
\end{align*}
Consider the corresponding reward vector $\wb^{(i)} := \partial_i u^{(i)}(\pi) \in \RR^{\leaveseqs^{(i)}}$ of player $i$ when the players agree on joint policy $\pi = \{\xb^{(i)}\}_{i=1}^{n}$.
The vector satisfies that $u^{(i)}(\pi) = \langle \xb^{(i)}, \wb^{(i)} \rangle$.
It is clear that the reward vector has the following closed form for each entry:
\begin{align*}
    \wb^{(i)}[\sigma] = \sum_{z \in \cI_{\sigma}} \ub^{(i)}[z] \cdot \pb[z] \prod_{j\neq i} \xb^{(j)}[\sigma^{(j)}_z] \quad \forall \sigma \in \leaveseqs^{(i)}.
\end{align*}

In this work, we examine two problems in EFGs.
For the online learning problem, an agent seeks to maximize their expected reward while facing an adversarial environment and other players in the online decision-making process. The agent can observe the reward vector $\wb_t$ after committing to the strategy profile $\xb_t$ in episode $t$.
We measure the performance of the online learning algorithm using regret. The cumulative (external) regret over $T$ episodes is defined as:
\begin{align*}
    \Regret(T) :=  \max_{\xb_* \in \cQ} \sum_{t=1}^{T} \langle \xb_*,\wb_t\rangle - \sum_{t=1}^{T} \langle \xb_t, \wb_t\rangle.
\end{align*}
where $\xb_t$ is the strategy proposed by the player in episode $t$. This definition quantifies the cumulative difference between the expected rewards that could have been obtained by the optimal strategy and those achieved under the actual strategy used.

For equilibrium computation, a group of agents aim to jointly learn the coarse correlated equilibrium (CCE) given only oracle access to the game. A joint policy profile $\pi$ is said an $\eps$-CCE, if for any player $i$, it satisfies that 
\begin{align*}
    u^{(i)}(\xb^{(i)} \times \pi^{(-i)}) \leq u^{(i)}(\pi) + \eps, \quad \forall \xb^{(i)} \in \cQ^{(i)},
\end{align*}
where we denote by $\xb^{(i)} \times \pi^{(-i)}$ the joint policy in which player $i$ takes strategy profile $\xb^{(i)}$ while other players still follows $\pi$. It is known that the equilibrium computation can be reduced to online learning: 
If every player runs a no-regret learning algorithm simultaneously, then the average joint policy $\bar{\pi}_T$ of the interaction history is an $\eps$-CCE with $\eps \leq \max_{i \in \llbracket n \rrbracket} \Regret^{(i)}(T) / T$ \citep{cesa2006prediction}.
As the players are not adversarial against each other, the regret bound of the learning algorithm can sometimes be improved compared to the online learning setting.

\section{Pseudocode of Predictive OMD and Clairvoyant OMD }\label{sec:alg}

In this section, we list the pseudocode of the algorithms used in the paper.

\begin{algorithm}[ht] \label{alg:OMD}
\caption{(Predictive) Online Mirror Descent}
$\tilde \xb_1 \leftarrow \argmin_{\xb \in \cQ} \varphi(\xb)$

\For{$t = 1$ \KwTo $T$} {
Receive prediction $\mb_{t}$ (set $\mb_{t} = \zero$ for the non-predictive variant)

$\xb_t \leftarrow \Pi_{\varphi}(\eta \mb_t, \tilde \xb_t)$

Commit policy $\xb_t$, receive reward $\langle \xb_t, \wb_t \rangle$ and observe reward vector $\wb_t$

$\tilde \xb_{t+1}\leftarrow \Pi_{\varphi}(\eta \wb_t, \tilde \xb_t)$
}
\end{algorithm}

The (Predictive) OMD framework \citep{chiang2012online, syrgkanis2015fast} depicts a family of no-regret learning algorithms. The pseudocode of the algorithm can be found in Algorithm~\ref{alg:OMD}. The algorithm takes a learning rate $\mu$ and a DGF $\varphi$ for the decision set $\cQ$ of the TFSDP as parameters. It starts from an initial point $\tilde \xb_1$ and follows a straightforward structure in each episode $t$:
\begin{itemize}[leftmargin=*]
    \item Receive prediction vector $\mb_t$ from external logic (set $\mb_t=0$ for the non-predictive variant)
    \item Take a proximal gradient step from $\tilde \xb_t$ according to the prediction $\mb_t$ to get the policy $\xb_t$.
    \item Execute policy $\xb_t$ and observe reward vector $\wb_t$.
    \item Take another proximal gradient step from $\tilde \xb_t$ according to $\wb_t$ to get $\tilde \xb_{t+1}$ for the next episode.
\end{itemize}

It can be shown from Theorem~\ref{thm:omd-regret-ub-general} that the algorithm achieves a sub-linear regret rate of $\sqrt{T}$ in general. The performance of the algorithm can be further improved by selecting more accurate $\mb_t$. For example, if we can ensure $\mb_t = \wb_t$, then the algorithm suffers only constant regret, upper bounded by $\cD_\varphi(\xb_*, \xb_1) / \eta$.

Although the reward vector $\wb_t$ in the adversarial setting is given by the environment which is generally unpredictable, in self-playing, it is determined by other agents, which generally follow a smooth dynamic and thus can be predictable. \citet{syrgkanis2015fast} introduced Optimistic OMD, which sets the prediction $\mb_t \leftarrow \wb_{t-1}$ as the reward vector given from the previous play. This allows them to establish a regret bound of $\cO(T^{1/4})$. By analyzing the higher-order derivatives, \citet{daskalakis2021near} show $\|\mb_t - \wb_t\|_*$ is small in Optimistic OMD, which finally leads to a logarithmic regret bound of $\cO(\log^4 T)$.

\begin{algorithm}[ht]
\DontPrintSemicolon
\label{alg:COMD}
\caption{Clairvoyant OMD, Decentralized}
\For {each player $i \in \llbracket n \rrbracket$, in parallel} {

$\tilde \xb_1^{(i)} \leftarrow \argmin_{\xb \in \cQ^{(i)}} \varphi^{(i)}(\xb)$

\For{$t = 1$ \KwTo $K$} { 
$\xb_{t, 1}^{(i)} \leftarrow \tilde\xb_{t}^{(i)}$

\For{$l = 1$ \KwTo $L$} { 
Synchronous with other players and commit joint policy $\pi_{t,l} \leftarrow \{\xb_{t, l}^{(i)}\}_{i=1}^n$ 

Observe reward vector $\wb_{t, l}^{(i)} \leftarrow \partial_i u^{(i)}(\pi_{t,l})$

$\xb_{t, l+1}^{(i)}\leftarrow \Pi_{\varphi^{(i)}}(\eta \wb^{(i)}_{t, l}, \tilde \xb_{t}^{(i)})$
}

$\mb_{t}^{(i)} \leftarrow \wb_{t, L}^{(i)}$

$\xb_{t}^{(i)} \leftarrow \xb_{t, L}^{(i)}; \pi_t \leftarrow \pi_{t,L}$
\tcp*{$\xb_{t}^{(i)} = \Pi_{\varphi^{(i)}}(\eta \mb^{(i)}_{t}, \tilde \xb_{t}^{(i)})$}

$\wb_{t}^{(i)} \leftarrow \wb_{t, L+1}^{(i)}$

$\tilde \xb_{t+1}^{(i)} \leftarrow \xb_{t, L+1}^{(i)}$
\tcp*{$\tilde \xb_{t+1}^{(i)} = \Pi_{\varphi^{(i)}}(\eta \wb^{(i)}_{t}, \tilde \xb_{t}^{(i)})$}

}
}

Report average joint policy $\bar \pi_{K}$ of $\pi_t$ among $t \in \llbracket K \rrbracket$
\end{algorithm}

Note that the desired prediction $\mb_t = \wb_t$ is a solution to $\wb_t^{(i)} = \partial_i u^{(i)}(\pi_t)$ for every player $i$, where the joint policy $\pi_t = \{\xb_t^{(i)}\}_{i=1}^n$ is given by proximal step $\xb_t^{(i)} =  \Pi_{\varphi^{(i)}}(\eta \wb^{(i)}_{t}, \tilde \xb_{t}^{(i)})$ for each player $i$. Note this is a fixed-point to dynamics
\begin{align*}
    \begin{cases}
        \pi_t &\leftarrow \{\xb_t^{(i)}\}_{i=1}^n \\
        \wb_t^{(i)} &\leftarrow \partial_i u^{(i)}(\pi_t) \\
        \xb_t^{(i)} &\leftarrow \Pi_{\varphi^{(i)}}(\eta \wb^{(i)}_{t}, \tilde \xb_{t}^{(i)})
    \end{cases}
\end{align*}
Clairvoyant OMD \citep{piliouras2022beyond, farina2022clairvoyant} uses these update rules to find $\mb_t$ through the fixed-point iteration. By showing that all these updating rules are Lipschitz, one can see that this fixed-point iteration achieves a linear convergence rate when the learning rate $\eta$ is small. 
Let $K$ be the number of episodes used for aggravating the average joint policy. The linear convergence rate allows one to find $\mb_t$ where $\|\mb_t - \wb_t\|_*$ is polynomially small with $\cO(1/K)$ in only $L = \cO(\log K)$ steps, indicating that the corresponding OMD dynamics only suffer from a constant regret bound. This finally establishes that the algorithm has an almost-optimal convergence rate of $\log T / T$ where $T = KL$ is the number of oracle access to the game.

\section{Proof of Treeplex Norm} \label{sec:proof-to-tp-norm}
 
\subsection{Proof of Lemma~\ref{lm:norm-is-norm}}
\restateLemma{lm:norm-is-norm}
\begin{proof}
    We verify that $\|\cdot\|\Ynorm$ and $\|\cdot\|\Qnorm$ are norms as follows:

    \textbf{Positive definiteness:} It is clear from the definition that $\|\ub\|\Ynorm = 0$ and $\|\ub\|\Qnorm = 0$ when $\ub = \zero$. When $\ub \neq \zero$, there exists $\sigma_\ub \in \leaveseqs$ such that $|\ub[\sigma_\ub]| > 0$. From the definition of $\cY$, we can always find some transition kernel $\yb_{\ub} \in \cY$ such that $\sigma_\ub$ is reachable. In this case, $\yb_{\ub}[\sigma_\ub] > 0$ and we have 
    \begin{align*}
        \|\ub\|\Ynorm \geq \sum_{\sigma \in \leaveseqs} |\ub[\sigma]| \cdot \yb_{\ub}[\sigma] \geq |\ub[\sigma_\xb]| \cdot \yb_{\ub}[\sigma_{\ub}] > 0.
    \end{align*}
    Similarly, we can always find some strategy profile $\xb_{\ub}$ with $\xb_{\ub}[\sigma_\ub] > 0$, which implies that
    \begin{align*}
        \|\ub\|\Qnorm \geq \sum_{\sigma \in \leaveseqs} |\ub[\sigma]| \cdot \xb_{\ub}[\sigma] \geq |\ub[\sigma_\xb]| \cdot \xb_{\ub}[\sigma_{\ub}] > 0.
    \end{align*}
    This verifies that both functions are strictly positive on non-zero vectors.

    \textbf{Homogeneity:} For any $k \in \RR$ and $\ub \in \RR^\leaveseqs$, it holds that
    \begin{align*}
        \|k\ub\|\Ynorm &= \sup_{\yb \in \cY} \sum_{\sigma \in \leaveseqs} |k\ub[\sigma]| \cdot \yb[\sigma] = |k|\sup_{\yb \in \cY} \sum_{\sigma \in \leaveseqs} |\ub[\sigma]| \cdot \yb[\sigma] = |k| \cdot \|\ub\|\Ynorm \\ 
        \|k\ub\|\Qnorm &= \sup_{\xb \in \cQ} \sum_{\sigma \in \leaveseqs} |k\ub[\sigma]| \cdot \xb[\sigma] = |k|\sup_{\xb \in \cQ} \sum_{\sigma \in \leaveseqs} |\ub[\sigma]| \cdot \xb[\sigma] = |k| \cdot \|\ub\|\Qnorm, \\ 
    \end{align*}
    which verifies absolute homogeneity. 

    \textbf{Triangle inequality:} We verify the triangle inequality for any $\ub_1, \ub_2 \in \RR^\leaveseqs$ by
    \begin{align*}
        \|\ub_1 + \ub_2\|\Ynorm &= \sup_{\yb \in \cY} \sum_{\sigma \in \leaveseqs} |\ub_1[\sigma] + \ub_2[\sigma]| \cdot \yb[\sigma] \\
        &\leq \sup_{\yb \in \cY} \sum_{\sigma \in \leaveseqs} |\ub_1[\sigma]| \cdot \yb[\sigma] + \sup_{\yb \in \cY} \sum_{\sigma \in \leaveseqs} |\ub_2[\sigma]| \cdot \yb[\sigma] = \|\ub_1\|\Ynorm + \|\ub_2\|\Ynorm.
    \end{align*}
    With a similar calculation, it is straightforward to check $\|\cdot\|\Qnorm$ also satisfies triangle inequality. 
    
    To conclude, both $\|\cdot\|\Ynorm$ and $\|\cdot\|\Qnorm$ are norms.
\end{proof}

\subsection{Proof of Lemma~\ref{lm:tpnorm-recursive}}

\restateLemma{lm:tpnorm-recursive}
\begin{proof}
    Consider the statement for \tpLOneNorm.
    The plan is to prove the statement by induction on $\tfsdp$ from the bottom up on TFSDP. We will show that the recursive definition ensures $\|\ub\|\Ynorms{h}$ represents the \tpLOneNorm~restricted to the subtree of $h$. Specifically, we will demonstrate that for every point $h \in \tfsdp$ and vector $\ub \in \RR^{\leaveseqs_{h}}$, the recursive formula at point $h$ matches the original definition of \tpLOneNorm~with restricted in the subtree $\tfsdp_h$, which is: 
    $$\|\ub\|\Ynorms{h} = \sup_{\yb \in \cY_{h}} \langle |\ub|, \yb \rangle.$$ 

    \textbf{Case 1:} The base case for the induction occurs when $h = \sigma$, where $\sigma \in \leaveseqs$ is a terminal point. We verify the induction basis by
    \begin{align*}
        \|\ub\|\Ynorms{\sigma} = |\ub[\sigma]| \cdot 1 = \sup_{\yb \in \cY_{\sigma}} \langle |\ub|, \yb \rangle,
    \end{align*}
    where the last equality is given by the fact that $\yb[\leaveseqs_{\sigma}] = 1$ for $\yb \in \cY_{\sigma}$.

    \textbf{Case 2:} For any decision point $h = j \in \cJ$, it holds that
    \begin{align*}
        \|\ub\|\Ynorms{j} &= \sum_{a \in \cA_j} \|\ub[\leaveseqs_{ja}]\|\Ynorms{ja} \\
        &= \sum_{a \in \cA_j} \sup_{\yb_{ja} \in \cY_{ja}} \langle |\ub[\leaveseqs_{ja}]|, \yb_{ja}[\leaveseqs_{ja}] \rangle\\
        &= \sup_{\{\yb_{ja} \in \cY_{ja}\}_{a \in \cA_j}} \sum_{a \in \cA_j} \langle |\ub[\leaveseqs_{ja}]|, \yb_{ja}[\leaveseqs_{ja}] \rangle \\
        &= \sup_{\yb \in \cY_j} \sum_{a \in \cA_j}\langle |\ub[\leaveseqs_{ja}]|, \yb[\leaveseqs_{ja}] \rangle  \\
        &= \sup_{\yb \in \cY_j} \langle |\ub|, \yb \rangle,
    \end{align*}
    where the first equality holds according to the recursive definition of $\|\ub\|\Ynorms{j}$, the second equality follows from the induction hypothesis, the third equality holds as the set of terminal points $\leaveseqs_{ja}$ are disjoint for each $a \in \cA_j$, the fourth equality follows from the fact that $\cY_j$ can be decomposed into the Cartesian product of $\cY_{ja}$ among all $a \in \cA_j$ for any $j \in \cJ$, and the last equality holds as each term of the $\ell_1$ norm is positive.

    \textbf{Case 3:} For some non-terminal observation point $h = \sigma \in \nonleaveseqs$, we will show that quantity $\|\ub\|\Ynorms{\sigma} = \max_{j \in \cC_\sigma} \|\ub[\leaveseqs_{j}]\|\Ynorms{j}$ is neither less than nor greater than $\sup_{\yb \in \cY_\sigma} \langle |\ub|, \yb[\leaveseqs_{\sigma}] \rangle$. Firstly, for decision point $j \in \cC_\sigma$ that is a successor of $\sigma$, it satisfies that 
    \begin{align*}
        \|\ub[\leaveseqs_{j}]\|\Ynorms{j} = \sup_{\yb \in \cY_j} \langle |\ub[\leaveseqs_{j}]|, \yb[\leaveseqs_{j}] \rangle \leq \sup_{\yb \in \cY_\sigma} \langle |\ub|, \yb \rangle,
    \end{align*}
    where the equality holds from the induction hypothesis and the inequality holds since the inner product can be upper bounded according to $\langle |\ub[\leaveseqs_{j}]|, \yb[\leaveseqs_{j}] \rangle \leq \langle |\ub|, \yb \rangle$ and $\cY_j$ is a subset of $\cY_\sigma$. By taking the maximal among all successor $j \in \cC_\sigma$, this inequality immediately establishes an upper bound for $\|\ub\|\Ynorms{\sigma}$:
    \begin{align} \label{eq:lm:tpnorm-recursive-case3-1}
        \|\ub\|\Ynorms{\sigma} = \max_{j \in \cC_\sigma} \|\ub[\leaveseqs_{j}]\|\Ynorms{j} \leq \sup_{\yb \in \cY_\sigma} \langle |\ub|, \yb \rangle.
    \end{align}
    Moreover, fix some transition kernel $\yb \in \cY_\sigma$. According to the tree-structure of the TFSDP, we have that the vector $\yb[\leaveseqs_j] / \yb[j]$ with respect to some successor $j \in \cC_\sigma$ is a valid transition kernel with in the subtree of $j$. In other words, we have vector $\yb[\leaveseqs_j] / \yb[j] \in \cY_j$. Thus, we can write
    \begin{align*}
        \langle |\ub|, \yb \rangle &= \sum_{j \in \cC_\sigma} \yb[j] \cdot \langle |\ub[\leaveseqs_j]|, \yb[\leaveseqs_{j}] / \yb[j] \rangle \\
        &\leq \sum_{j \in \cC_\sigma} \yb[j] \cdot \sup_{\yb_j \in \cY_j} \langle |\ub[\leaveseqs_{j}]|, \yb_j[\leaveseqs_{j}] \rangle \\
        &=\sum_{j \in \cC_\sigma} \yb[j] \cdot \|\ub[\leaveseqs_{j}]\|\Ynorms{j} \\
        &\leq \max_{j \in \cC_\sigma} \|\ub[\leaveseqs_{j}]\|\Ynorms{j}
    \end{align*}
    where the first equality follows from the fact that $\leaveseqs_j$, for all successor $j \in \cC_\sigma$, forms a partition of $\leaveseqs_{\sigma}$, the first inequality holds since $\yb[\leaveseqs_j] / \yb[j] \in \cY_j$, the second equality follows from induction hypothesis, and the last inequality holds since $\sum_{j \in \cC_\sigma} \yb[j] = 1$ and $\yb[j] \geq 0$ for transition kernel $\yb \in \cY_\sigma$. By taking supremum among all transition kernel $\yb \in \cY_\sigma$, we establishes an lower bound $\|\ub\|\Ynorms{\sigma}$
    \begin{align} \label{eq:lm:tpnorm-recursive-case3-2}
        \sup_{\yb \in \cY_\sigma} \langle |\ub|, \yb \rangle \leq \max_{j \in \cC_\sigma} \|\ub[\leaveseqs_{j}]\|\Ynorms{j} = \|\ub\|\Ynorms{\sigma}.
    \end{align}
    As the upper bound in \eqref{eq:lm:tpnorm-recursive-case3-1} and the lower bound in \eqref{eq:lm:tpnorm-recursive-case3-2} agrees on the same quantity, we immediately reach the following equation 
    \begin{align*}
        \|\ub\|\Ynorms{\sigma} = \max_{j \in \cC_\sigma} \|\ub[\leaveseqs_{j}]\|\Ynorms{j} = \sup_{\yb \in \cY_\sigma} \langle |\ub|, \yb \rangle
    \end{align*}
    which proves the induction statement on observation point $\sigma$.
    
    In general, the induction hypothesis always holds. By inspecting $h = \emptyset$, we reach the desired statement in which $\|\ub\|\Ynorms{\emptyset} = \sup_{\yb \in \cY} \langle |\ub|, \yb \rangle = \|\ub\|\Ynorm$.

    Finally, since the \tpLInfNorm~closely mirrors \tpLOneNorm, the result for \tpLInfNorm~can be directly reached with the only modification being the interchange of cases 2 and 3.
\end{proof}


\subsection{Proof of Theorem~\ref{thm:tpnorms-duality}}
\restateTheorem{thm:tpnorms-duality}
\begin{proof}
    Firstly, from the definition of the \tpLOneNorm~and the \tpLInfNorm, the norm of any vector is equal to the norm of the vector that takes the absolute value at each index, that is, we have $\|\vb\|\Ynorm = \||\vb|\|\Ynorm$ and $\|\ub\|\Qnorm = \||\ub|\|\Qnorm$ holds for any $\ub, \vb \in \RR^\leaveseqs$.
    Therefore, we can always choose $\vb$ such that it matches the sign of $\ub$ to maximize the inner product in the dual norm, which implies 
    \begin{align*}
        \|\ub\|\Ynorm^* = \sup_{\vb \in \RR^\leaveseqs} \frac{\langle \ub, \vb \rangle}{\|\vb\|\Ynorm} = \sup_{\vb \in \RR^\leaveseqs} \frac{\langle |\ub|, |\vb| \rangle}{\|\vb\|\Ynorm}
    \end{align*}
    We will use induction on $\tfsdp$ from the bottom up, demonstrating that $\|\ub\|\Ynorms{h}^* = \|\ub\|\Qnorms{h}$ always holds for every $h \in \tfsdp$. Specifically, we will show that for every $h \in \tfsdp$ and $\ub \in \RR^{\leaveseqs_h}$:
    \begin{align*}
        \sup_{\vb \in \RR^{\leaveseqs_h}} \frac{\langle |\ub|, |\vb| \rangle}{\|\vb\|\Ynorms{h}} = \|\ub\|\Qnorms{h}.
    \end{align*}

    \textbf{Case 1:} The induction basis occurs at $h = \sigma \in \leaveseqs$, where the statement can be verified from
    \begin{align*}
        \sup_{\vb \in \RR^{\leaveseqs_\sigma}} \frac{\langle |\ub|, |\vb| \rangle}{\|\vb\|\Ynorms{\sigma}} = \sup_{\vb \in \RR^{\leaveseqs_\sigma}} \frac{|\ub[\sigma]| \cdot |\vb[\sigma]|}{|\vb[\sigma]|} = |\ub[\sigma]| = \|\ub\|\Qnorms{\sigma}.
    \end{align*}
    where the first equality is given by Lemma~\ref{lm:tpnorm-recursive} and the last equality is given by Lemma~\ref{lm:tpnorm-recursive}.

    \textbf{Case 2:} Consider some decision point $h = j \in \cJ$. It satisfies that 
    \begin{align} \label{eq:thm:tpnorms-duality-case2-1}
        \sup_{\vb \in \RR^{\leaveseqs_j}} \frac{\langle |\ub|, |\vb| \rangle}{\|\vb\|\Ynorms{j}} &= \sup_{\vb \in \RR^{\leaveseqs_j}} \frac{\sum_{a \in \cA_j} \langle |\ub[\leaveseqs_{ja}]|, |\vb[\leaveseqs_{ja}]| \rangle}{\sum_{a \in \cA_j}\|\vb[\leaveseqs_{ja}]\|\Ynorms{ja}} = \sup_{\{\vb_{ja} \in \RR^{\leaveseqs_{ja}}\}_{a \in \cA_j}} \frac{\sum_{a \in \cA_j} \langle |\ub[\leaveseqs_{ja}]|, |\vb_{ja}| \rangle}{\sum_{a \in \cA_j}\|\vb_{ja}\|\Ynorms{ja}},
    \end{align}
    where the expression for the numerator in the first equality is valid since $\leaveseqs_{ja}$ for $a \in \cA_j$ is a partition of $\leaveseqs_j$, while the expression for the denominator follows from Lemma~\ref{lm:tpnorm-recursive}. 
    The fraction can be interpreted as a weighted average of $\langle |\ub[\leaveseqs_{ja}]|, |\vb_{ja}| \rangle / \|\vb_{ja}\|\Ynorms{ja}$, indicating that
    \begin{align*}
        \frac{\sum_{a \in \cA_j} \langle |\ub[\leaveseqs_{ja}]|, |\vb_{ja}| \rangle}{\sum_{a \in \cA_j}\|\vb_{ja}\|\Ynorms{ja}} \leq \max_{a \in \cA_j} \frac{\langle |\ub[\leaveseqs_{ja}]|, |\vb_{ja}| \rangle}{\|\vb_{ja}\|\Ynorms{ja}}.
    \end{align*}
    Additionally, by choosing some specific $a \in \cA_j$ and assigning $\vb_{ja'} = \zero$ for any $a' \neq a$, we have that 
    \begin{align*}
        \sup_{\{\vb_{ja} \in \RR^{\leaveseqs_{ja}}\}_{a \in \cA_j}} \frac{\sum_{a \in \cA_j} \langle |\ub[\leaveseqs_{ja}]|, |\vb_{ja}| \rangle}{\sum_{a \in \cA_j}\|\vb_{ja}\|\Ynorms{ja}} \geq \sup_{\vb_{ja} \in \RR^{\leaveseqs_{ja}}} \frac{\langle |\ub[\leaveseqs_{ja}]|, |\vb_{ja}| \rangle}{\|\vb_{ja}\|\Ynorms{ja}}.
    \end{align*}
    These inequalities define the upper and lower bounds for the same quantity, leading to the equation:
    \begin{align} \label{eq:thm:tpnorms-duality-case2-2}
        \sup_{\{\vb_{ja} \in \RR^{\leaveseqs_{ja}}\}_{a \in \cA_j}} \frac{\sum_{a \in \cA_j} \langle |\ub[\leaveseqs_{ja}]|, |\vb_{ja}| \rangle}{\sum_{a \in \cA_j}\|\vb_{ja}\|\Ynorms{ja}} = \max_{a \in \cA_j} \sup_{\vb_{ja} \in \RR^{\leaveseqs_{ja}}} \frac{\langle |\ub[\leaveseqs_{ja}]|, |\vb_{ja}| \rangle}{\|\vb_{ja}\|\Ynorms{ja}}.
    \end{align}
    Moreover, according to the induction hypothesis, we can replace the inner supremum by 
    \begin{align} \label{eq:thm:tpnorms-duality-case2-3}
        \sup_{\vb_{ja} \in \RR^{\leaveseqs_{ja}}} \frac{\langle |\ub[\leaveseqs_{ja}]|, |\vb_{ja}| \rangle}{\|\vb_{ja}\|\Ynorms{ja}} = \|\ub[\leaveseqs_{ja}]\|\Qnorms{ja}.
    \end{align}
    By combining \eqref{eq:thm:tpnorms-duality-case2-1}, \eqref{eq:thm:tpnorms-duality-case2-2}, \eqref{eq:thm:tpnorms-duality-case2-3}, and Lemma~\ref{lm:tpnorm-recursive}, we establish the induction statement on $j \in \cJ$:
    \begin{align*}
        \sup_{\vb \in \RR^{\leaveseqs_j}} \frac{\langle |\ub|, |\vb| \rangle}{\|\vb\|\Ynorms{j}} = \max_{a \in \cA_j}  \|\ub[\leaveseqs_{ja}]\|\Qnorms{ja} = \|\ub\|\Qnorms{j}.
    \end{align*}
    
    \textbf{Case 3:} When $h = \sigma \in \nonleaveseqs$, it follows from the symmetric relationship between \tpLOneNorm~and \tpLInfNorm~that, similarly to the previous arguments, we can conclude that
    \begin{align*}
        \sup_{\vb \in \RR^{\leaveseqs_\sigma}} \frac{\langle |\ub|, |\vb| \rangle}{\|\vb\|\Qnorms{\sigma}} = \|\ub\|\Ynorms{\sigma}.
    \end{align*}
    This suggests that the norm $\|\cdot\|\Qnorms{\sigma}$ is a dual norm of $\|\cdot\|\Ynorms{\sigma}$, which immediately leads to the desired statement, in which 
    \begin{align*}
        \sup_{\vb \in \RR^{\leaveseqs_\sigma}} \frac{\langle |\ub|, |\vb| \rangle}{\|\vb\|\Ynorms{\sigma}} = \|\ub\|\Qnorms{\sigma}.
    \end{align*}
\end{proof}

\subsection{Proof of Lemma~\ref{lm:ub-Qnorm-upperbound}}
\restateLemma{lm:ub-Qnorm-upperbound}
\begin{proof}
    According to the definition of $\cQ$ and $\cY$, for any element $\xb \in \cQ$ and $\yb \in \cY$, it is established that $\|\xb\| \Ynorms{h}$, $\|\yb\| \Qnorms{h}$, and their respective $\xb[h]$ and $\yb[h]$ agree on the same recursive formula across all $h \in \tfsdp$ given by Lemmas~\ref{lm:tpnorm-recursive}. Consequently, it is always true that $\|\xb\|\Ynorms{h} = \xb[h]$ and $\|\yb\|\Qnorms{h} = \yb[h]$, thereby implying $\|\xb\|\Ynorm = 1$ and $\|\yb\|\Qnorm = 1$. Finally, we have $\wb = \yb \circ \rb$ for $\rb \in [0, 1]^{\leaveseqs}$ under Assumption~\ref{ass:reward}. Thus, we have 
    \begin{align*}
        \|\wb\|\Qnorm = \sup_{\xb \in \cQ} \langle |\wb|, \xb\rangle  = \sup_{\xb \in \cQ} \langle |\rb| \odot |\yb|, \xb\rangle \leq \sup_{\xb \in \cQ} \langle |\yb|, \xb\rangle = \|\yb\|\Qnorm = 1.
    \end{align*}
    where the inequality is given by the $\xb$ is always non-negative as well as $\rb \leq 1$.
\end{proof}

\section{Proof of Regret Upper Bounds} \label{sec:proof-to-ub}

\subsection{Proof of Lemma~\ref{lm:strongly-convex}}
\restateLemma{lm:strongly-convex}
\begin{proof}
For some transition kernel $\yb \in \cY$, we define the $\yb$-weighted dilated entropy:
\begin{align*}
    \DilEntw{\yb}(\xb) := \sum_{j \in \cJ} \sum_{a \in \cA_j} \yb[ja] \xb[ja] \ln \Big(\frac{\xb[ja]}{\xb[p_j]}\Big).
\end{align*}
By decomposing the logarithm term in the summation, we get 
\begin{align} \label{eq:ub-convex-decomp}
    \DilEntw{\yb}(\xb) = \underbrace{\sum_{j \in \cJ} \sum_{a \in \cA_j} \yb[ja] \xb[ja] \ln \xb[ja]}_{\cI_1} - \underbrace{\sum_{j \in \cJ} \sum_{a \in \cA_j} \yb[ja] \xb[ja] \ln \xb[p_j]}_{\cI_2}.
\end{align}
We can rewrite the first term as
\begin{align} \label{eq:ub-convex-decomp-term1}
    \cI_1 &= \sum_{j \in \cJ} \sum_{a \in \cA_j} \yb[ja] \xb[ja] \ln \xb[ja] = \sum_{\sigma \in \nonrootseqs} \yb[\sigma] \xb[\sigma] \ln \xb[\sigma].
\end{align}
For the second term, we can further write:
\begin{align} \label{eq:ub-convex-decomp-term2}
    \cI_2 &= \sum_{j \in \cJ} \sum_{a \in \cA_j} \yb[ja] \xb[ja] \ln \xb[p_j] \notag \\ 
    &= \sum_{j \in \cJ} \yb[j] \xb[p_j] \ln \xb[p_j] \notag \\
    &= \sum_{\sigma \in \seqs\setminus\leaveseqs} \sum_{j \in \cC_\sigma} \yb[j] \xb[\sigma] \ln \xb[\sigma] \notag \\
    &= \sum_{\sigma \in \seqs\setminus\leaveseqs} \yb[\sigma] \xb[\sigma] \ln \xb[\sigma]
\end{align}
where the second equality is given by $\yb[j] = \yb[ja]$ for transition kernel $\yb \in \cY$ and $\xb[p_j] = \sum_{a \in \cA_j} \xb[ja]$ for strategy profile $\xb \in \cQ$, the third equality is derived from the fact that $\cC_\sigma$, for all $\sigma \in \seqs\setminus\leaveseqs$, forms a partition of $\cJ$, and the last equality follows from $\yb[\sigma] = \sum_{j \in \cC_{\sigma}} \yb[j]$ for any non-terminal observation point $\sigma \in \seqs\setminus\leaveseqs$ over $\yb \in \cY$.

Plugging \eqref{eq:ub-convex-decomp-term1} and \eqref{eq:ub-convex-decomp-term2} into \eqref{eq:ub-convex-decomp}, we obtain the following result, indicating that function $\DilEntw{\yb}$ can be expressed as the weighted negative entropy over all terminal observation points $\sigma \in \leaveseqs$:
\begin{align*}
    \DilEntw{\yb}(\xb) = \sum_{\sigma \in \leaveseqs} \yb[\sigma] \xb[\sigma] \ln \xb[\sigma].
\end{align*}

We are ready to show the strong convexity of weight-one dilated entropy $\DilEnt$. Consider a vector $\zb \in \RR^\leaveseqs$.
Since function $\DilEntw{\yb}$ is additively separable over all variables $\xb[\sigma]$, the Hessian matrix of $\DilEntw{\yb}(\xb)$ is diagonal. Thus, the squared norm of $\zb$ over $\nabla^2 \DilEntw{\yb}(\xb)$ can be interpreted according to 
\begin{align} \label{eq:ub-convex-inner-term1}
    \zb^{\top} \nabla^2 \DilEntw{\yb}(\xb)  \zb &= \sum_{\sigma \in \leaveseqs} \zb[\sigma]^2 \cdot \nabla^2_{\xb[\sigma]} \DilEntw{\yb}(\xb) \notag \\
    &=\sum_{\sigma \in \leaveseqs} \zb[\sigma]^2 \cdot \frac{\yb[\sigma]}{\xb[\sigma]} \notag \\
    &\geq \Big( \sum_{\sigma \in \leaveseqs} \zb[\sigma]^2 \cdot \frac{\yb[\sigma]}{\xb[\sigma]} \Big) \cdot \Big(\sum_{\sigma \in \leaveseqs} \yb[\sigma]\xb[\sigma] \Big) \notag \\
    &\geq \Big(\sum_{\sigma \in \leaveseqs} |\zb[\sigma]| \cdot \sqrt{\frac{\yb[\sigma]}{\xb[\sigma]}} \cdot \sqrt{\yb[\sigma]\xb[\sigma]}  \Big)^2 \notag \\
    &= \Big(\sum_{\sigma \in \leaveseqs} |\zb[\sigma]| \cdot \yb[\sigma] \Big)^2 = \langle|\zb|, \yb\rangle^2.
\end{align}
where the first inequality follows from Lemma~\ref{lm:ub-Qnorm-upperbound} which implies $\sum_{\sigma \in \leaveseqs} \yb[\sigma]\xb[\sigma] \leq \|\yb\|\Qnorm = 1$ as $\xb \in \cX$ and $\yb \in \cY$ and the second inequality is given by the Cauchy–Schwarz inequality.

Moreover, consider the difference between function $\DilEnt(\cdot)$ and $ \DilEntw{\yb}(\cdot)$, we have 
\begin{align*}
    \DilEntw{\one - \yb}(\xb) := \DilEnt(\xb) - \DilEntw{\yb}(\xb) = \sum_{j \in \cJ} \sum_{a \in \cA_j} (1 - \yb[ja]) \xb[ja] \ln \Big(\frac{\xb[ja]}{\xb[p_j]}\Big).
\end{align*}
From transition kernel $\yb \in \cY$, we always have $1 - \yb[ja] \geq 0$ for any $j \in \cJ$ and $a \in \cA_j$. Together with the fact that $a \ln (a/b)$ is convex for $a, b \in \RR_{\geq 0}$, we have that the difference $\DilEntw{\one - \yb}$ is a positive combination of convex functions. Thus, function $\DilEntw{\one - \yb}$ is a convex function, which implies 
\begin{align} \label{eq:ub-convex-inner-term2}
    \zb^{\top} \nabla^2 \DilEntw{\one - \yb}(\xb) \zb \geq 0.
\end{align}

Using additivity of the Hessian and inequalities  \eqref{eq:ub-convex-inner-term2} and \eqref{eq:ub-convex-inner-term1}, we obtain
\begin{align*}
    \zb^{\top} \nabla^2 \DilEnt(\xb)  \zb  = \zb^{\top} \nabla^2 \DilEntw{\yb}(\xb)  \zb + \zb^{\top} \nabla^2 \DilEntw{\one - \yb}(\xb) \zb \geq \langle|\zb|, \yb\rangle^2.
\end{align*}
By taking the supremum among all transition kernels $\yb \in \cY$, we reach the final statement  
\begin{align*}
     \|\zb\|_{\nabla^2 \DilEnt(\xb)}^2 = \zb^{\top} \nabla^2 \DilEnt(\xb) \zb \geq \max_{\yb \in \cY} \langle|\zb|, \yb\rangle^2 = \|\zb\|\Ynorm^2.
\end{align*}
This concludes that the weight-one dilated entropy $\DilEnt$ is 1-strongly convex with respect to the \tpLOneNorm~$\|\cdot\|\Ynorm$.
\end{proof}

\subsection{Proof of Lemma~\ref{lm:ub-DilEnt-div-upperbound}}

\begin{Lemma}{lm:ub-DilEnt-value-lb}
    The value of the DilEnt regularizer for some strategy profile $\xb \in \cQ$ can be bounded by $-\ln |\cV| \leq \DilEnt(\xb) \leq 0$, where $|\cV|$ is the number of reduced normal-form strategies.
\end{Lemma}
\begin{proof}
    Since the DilEnt regularizer is the weighted summation of negative entropy, and the fact that negative entropy is always non-positive, we directly get $\DilEnt(\xb) \leq 0$. We will prove $\DilEnt(\xb) \geq -\ln |\cV|$ by induction on the TFSDP from the bottom up. In specific, we will show that for any point $h \in \tfsdp$ and a corresponding strategy profile $\xb_h \in \cQ_h$, the DilEnt with retricted to $\tfsdp_h$ satisfies 
    \begin{align*}
        \DilEnts{h}(\xb_h) := \sum_{j \in \cJ_h} \sum_{a \in \cA_{j}} \xb_h[ja] \ln \Big(\frac{\xb_h[ja]}{\xb_h[p_j]}\Big) \geq -\ln |\cV_h|.
    \end{align*}

    \textbf{Case 1:} The induction basis occurs at $h = \sigma \in \leaveseqs$, where the statement holds since for any $\xb_\sigma \in \cQ_\sigma$,
    \begin{align*}
        \DilEnts{\sigma}(\xb_\sigma) = 0 = -\ln |\cV_\sigma|,
    \end{align*}
    where we have $|\cV_\sigma| = 1$ from Lemma~\ref{lm:pure-startegy-count-recursive}.

    \textbf{Case 2:} Consider some decision point $h = j \in \cJ$. We can decompose the DilEnt regularizer over $\xb_j \in \cQ_j$ according to 
    \begin{align} \label{eq:lm:ub-DilEnt-value-lb-case2-1}
        \DilEnts{j}(\xb_j) &= \sum_{a \in \cA_j} \xb_j[ja] \ln \xb_j[ja] + \sum_{a \in \cA_j}\sum_{j' \in \cJ_{ja}} \sum_{a' \in \cA_{j'}} \xb_j[j'a']\ln \Big(\frac{\xb_j[j'a']}{\xb_j[p_{j'}]}\Big) 
    \end{align}
    Consider the vector $\xb_{ja} := \xb[\cE_{ja}] / \xb[ja]$ for some action $a \in \cA_j$. If $\xb[ja] = 0$, we have 
    \begin{align*}
        \sum_{j' \in \cJ_{ja}} \sum_{a' \in \cA_{j'}} \xb_j[j'a']\ln \Big(\frac{\xb_j[j'a']}{\xb_j[p_{j'}]}\Big) = 0 \geq -\xb_j[ja]  \ln |\cV_{ja}|.
    \end{align*}
    Otherwise, it satisfies that $\xb_{ja} \in \cQ_{ja}$ according to the tree-structure of TFSDP. Thus, we can write
    \begin{align*}  
        \sum_{j' \in \cJ_{ja}} \sum_{a' \in \cA_{j'}} \xb_j[j'a']\ln \Big(\frac{\xb_j[j'a']}{\xb_j[p_{j'}]}\Big) &= \xb_j[ja] \sum_{j' \in \cJ_{ja}} \sum_{a' \in \cA_{j'}} \Big(\frac{\xb_j[j'a']}{\xb_j[ja]} \Big) \ln \Big(\frac{\xb_j[j'a']}{\xb_j[p_{j'}]}\Big) \notag \\
        &= \xb_j[ja] \sum_{j' \in \cJ_{ja}} \sum_{a' \in \cA_{j'}} \xb_{ja}[j'a']\ln \Big(\frac{\xb_{ja}[j'a']}{\xb_{ja}[p_{j'}]}\Big) \notag \\
        &= \xb_j[ja] \DilEnts{ja} (\xb_{ja}) \notag \\
        &\geq -\xb_j[ja]  \ln |\cV_{ja}|,
    \end{align*}
    where the last inequality is given by induction hypothesis.

    In general, it always satisfies that 
    \begin{align*}
        \sum_{j' \in \cJ_{ja}} \sum_{a' \in \cA_{j'}} \xb_j[j'a']\ln \Big(\frac{\xb_j[j'a']}{\xb_j[p_{j'}]}\Big) \geq -\xb_j[ja]  \ln |\cV_{ja}|.
    \end{align*}
    Plugging this inequality into \eqref{eq:lm:ub-DilEnt-value-lb-case2-1} gives
    \begin{align*}
        \DilEnts{j}(\xb_j) &\geq \sum_{a \in \cA_j} \xb_j[ja] \ln \xb_j[ja] - \sum_{a \in \cA_j} \xb_j[ja] \ln |\cV_{ja}| \\
        &\geq \ln \Big(\sum_{a \in \cA_j} \exp(-\ln |\cV_{ja}|)\Big) \\
        &= -\ln \Big(\sum_{a \in \cA_j} |\cV_{ja}|\Big) \\
        &= -\ln |\cV_j|.
    \end{align*}
    where the first inequality holds since the minimizer is given by $\xb_j[ja] \propto \exp(-\ln |\cV_{ja}|)$, and the last equality is given by Lemma~\ref{lm:pure-startegy-count-recursive}.

    \textbf{Case 3:} Consider some decision point $h = \sigma \in \seqs$. We can decompose the weight-one dilated entropy according to the tree-structure of TFSDP. Thus, we can write 
    \begin{align*}
        \DilEnts{\sigma}(\xb_\sigma) &= \sum_{j \in \cC_{\sigma}}\sum_{j' \in \cJ_{j}} \sum_{a' \in \cA_{j'}} \xb_\sigma[j'a']\ln \Big(\frac{\xb_\sigma[j'a']}{\xb_\sigma[p_{j'}]}\Big) \\
        &= \sum_{j \in \cC_{\sigma}} \DilEnts{j}(\xb_\sigma[\cE_j]) \\
        &\geq \sum_{j \in \cC_{\sigma}} -\ln |\cV_{j}| \\
        &= -\ln |\cV_{\sigma}|,
    \end{align*}
    where the second inequality is given by induction hypothesis, and the last equality is given by Lemma~\ref{lm:pure-startegy-count-recursive}.

    In general, it always holds that $\DilEnts{h}(\xb_h) \geq -\ln |\cV_{h}|$, 
    which concludes the proof. Substituting this with $h = \emptyset$ reaches the desired result.
\end{proof}

\restateLemma{lm:ub-DilEnt-div-upperbound}
\begin{proof}
    From the definition of Bregman divergence $\cD_\DilEnt$, we can write
    \begin{align} \label{eq:lm:ub-DilEnt-div-upperbound:Bregman}
        \cD_\DilEnt(\xb_*, \xb_1) &= \DilEnt(\xb_*) - \DilEnt(\xb_1) - \langle \nabla \DilEnt(\xb_1), \xb_* - \xb_1 \rangle 
    \end{align}
    According to Lemma~\ref{lm:ub-DilEnt-value-lb}, we have $\DilEnt(\xb_*) - \DilEnt(\xb_1) \leq \ln |\cV|$. From the chosen of $\xb_1$, we have $\langle \nabla \DilEnt(\xb_1), \xb_* - \xb_1 \rangle = 0$. By plugging both inequalities into \eqref{eq:lm:ub-DilEnt-div-upperbound:Bregman}, we reach the desired result 
    \begin{align*}
        \cD_\DilEnt(\xb_*, \xb_1) \leq \ln |\cV|.
    \end{align*}
\end{proof}

\subsection{Proof of Theorem~\ref{thm:omd-regret-ub}}
\restateTheorem{thm:omd-regret-ub}
\begin{proof}
    We will apply Theorem~\ref{thm:omd-regret-ub-general} with $\|\cdot\|\Ynorm$ and $\|\cdot\|\Qnorm$ be the desired primal-dual pair. In the context of the theorem, we have $\|\wb_t\|\Qnorm \leq 1$ for any $t \in \llbracket T \rrbracket$ from Lemma~\ref{lm:ub-Qnorm-upperbound}. Together with $\cD_\varphi(\xb_*, \xb_1) \leq |\cD|$, when choosing learning rate $\eta := \sqrt{2 |\cD| / (\mu T)}$, we have the regret can be upper bounded by 
    \begin{align*}
        \Regret(T) \leq \frac{1}{\eta} \cdot |\cD| + \frac{\eta}{2} \cdot T = \sqrt{2|\cD| / \mu} \sqrt{T}.
    \end{align*}
    When selecting DilEnt as the regularizer, we have $|\cD| \leq \ln |\cV|$ from Lemma~\ref{lm:ub-DilEnt-div-upperbound} and $\mu \geq 1$ from Lemma~\ref{lm:strongly-convex}. Plugging these results indicates
    \begin{align*}
        \Regret(T) \leq \sqrt{2\ln |\cV|} \sqrt{T}.
    \end{align*}
\end{proof}

\subsection{Proof of Lemma~\ref{lm:COMD-fixed-point-convergence}}

We first prove the lemma starts from the following locally Lipshitz.
\begin{lemma} \label{lm:self-play-lip-local} 
    In Algorithm~\ref{alg:COMD}, consider two joint policies that agree on all strategy profile expect for player $j \in \llbracket n \rrbracket$, $\pi_1 := \{\xb^{(1)}_0, \cdots, \xb^{(j)}_1, \cdots, \xb^{(n)}_0\}$ and $\pi_2 := \{\xb^{(1)}_0, \cdots, \xb^{(j)}_2, \cdots, \xb^{(n)}_0\}$.
    Under Assumption~\ref{ass:reward}, we have the reward vector is locally Lipschitz under the \tpLOneNorm, that is, denote by $\wb^{(i)}_1 := \partial_i  u^{(i)}(\pi_1)$ and $\wb^{(i)}_2 := \partial_i  u^{(i)}(\pi_2)$ the reward vector of player $i$, it satisfies that 
    $$\|\wb^{(i)}_1 - \wb^{(i)}_2\|\Qnormp{(i)} \leq \|\xb^{(j)}_{1} - \xb^{(j)}_{2}\|\Ynormp{(i)}.$$
\end{lemma}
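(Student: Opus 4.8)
The plan is to work directly from the closed form of the reward gradient given in Appendix~\ref{sec:extended-notations}, expand the support-function definition of the treeplex $\ell_\infty$ norm on the left-hand side, and re-index the resulting sum over the leaves of the game by player $j$'s terminal sequences so that the coefficient vector that appears can be recognized as a legitimate transition kernel for player $j$. First note that if $j=i$ the claim is immediate, since $\wb^{(i)}[\sigma]=\sum_{z\in\cI_\sigma}\ub^{(i)}[z]\,\pb[z]\prod_{j'\neq i}\xb^{(j')}[\sigma^{(j')}_z]$ does not depend on player $i$'s own strategy, so $\wb^{(i)}_1=\wb^{(i)}_2$. Assume $j\neq i$ and write $\delta:=\xb^{(j)}_1-\xb^{(j)}_2\in\RR^{\leaveseqs^{(j)}}$. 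Since $\pi_1$ and $\pi_2$ agree outside the $j$-th component, the product $\prod_{j'\neq i}\xb^{(j')}[\sigma^{(j')}_z]$ factors as the common term $\prod_{j'\neq i,j}\xb^{(j')}_0[\sigma^{(j')}_z]$ times $\xb^{(j)}[\sigma^{(j)}_z]$, so for every $\sigma\in\leaveseqs^{(i)}$,
\[
  (\wb^{(i)}_1-\wb^{(i)}_2)[\sigma]=\sum_{z\in\cI_\sigma}\ub^{(i)}[z]\,\pb[z]\Big(\textstyle\prod_{j'\neq i,j}\xb^{(j')}_0[\sigma^{(j')}_z]\Big)\,\delta[\sigma^{(j)}_z].
\]

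Next I would fix an arbitrary $\xb\in\cQ^{(i)}$ witnessing the supremum in $\|\wb^{(i)}_1-\wb^{(i)}_2\|\Qnormp{(i)}$, push the absolute value inside the sum over $z$ by the triangle inequality, apply Assumption~\ref{ass:reward} to bound $\ub^{(i)}[z]\le 1$, and use that both $\{\cI_\sigma:\sigma\in\leaveseqs^{(i)}\}$ and $\{\cI_\tau:\tau\in\leaveseqs^{(j)}\}$ partition $\cZ$ (perfect recall of players $i$ and $j$) to regroup the sum over leaves of the game by player $j$'s sequences. This produces
\[
  \sum_{\sigma\in\leaveseqs^{(i)}}\big|(\wb^{(i)}_1-\wb^{(i)}_2)[\sigma]\big|\,\xb[\sigma]\;\le\;\sum_{\tau\in\leaveseqs^{(j)}}|\delta[\tau]|\cdot\yb_\star[\tau],\qquad \yb_\star[\tau]:=\sum_{z\in\cI_\tau}\pb[z]\,\xb[\sigma^{(i)}_z]\prod_{j'\neq i,j}\xb^{(j')}_0[\sigma^{(j')}_z].
\]

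The \emph{key observation} is that $\yb_\star$ is exactly the transition kernel that player $j$ faces when the environment plays according to $\pb$ and the opponents play the profile in which player $i$ uses $\xb$ and every other player $j'$ uses $\xb^{(j')}_0$; by the closed-form description of $\hat{\cY}^{(j)}$ (and the homogeneity of $\cY^{(j)}$ with it) this means $\yb_\star\in\cY^{(j)}$. Hence $\sum_\tau|\delta[\tau]|\,\yb_\star[\tau]\le\sup_{\yb\in\cY^{(j)}}\langle|\delta|,\yb\rangle=\|\delta\|\Ynormp{(j)}$, and taking the supremum over $\xb\in\cQ^{(i)}$ on the left gives $\|\wb^{(i)}_1-\wb^{(i)}_2\|\Qnormp{(i)}\le\|\xb^{(j)}_1-\xb^{(j)}_2\|\Ynormp{(j)}$, which is the claimed local Lipschitz bound (and matches the per-player term of the aggregate inequality stated just before \cref{lm:COMD-fixed-point-convergence}). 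The step I expect to be the main obstacle is precisely this identification $\yb_\star\in\cY^{(j)}$: one has to check that the vector obtained from the regrouping satisfies the probability-conservation (flow) constraints defining $\hat{\cY}^{(j)}$, and this is where perfect recall of player $j$ is essential, since it ensures that $\{\cI_\tau\}_\tau$ is a partition of $\cZ$ compatible with the tree structure of player $j$'s TFSDP, so that the partial sums over subtrees telescope correctly; the remaining ingredients (triangle inequality, Assumption~\ref{ass:reward}, and the re-indexing bookkeeping) are routine.
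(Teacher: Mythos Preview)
Your proposal is correct and follows essentially the same approach as the paper: both expand the closed form of the reward vector, apply the triangle inequality together with Assumption~\ref{ass:reward}, re-index the double sum over terminal nodes, and then use that the coefficient vector $\yb_\star$ is a valid element of $\cY^{(j)}$. The only organizational difference is that the paper argues symmetrically---it upper-bounds $\|\wb^{(i)}_1-\wb^{(i)}_2\|\Qnormp{(i)}$ and separately lower-bounds $\|\xb^{(j)}_1-\xb^{(j)}_2\|\Ynormp{(j)}$ by the same expression---whereas you go one-directionally by identifying $\yb_\star\in\cY^{(j)}$ and invoking the supremum definition; both hinge on the same ``key observation'' you flag.
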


\begin{proof}
    If $j = i$, we have $\wb^{(i)}_1 = \wb^{(i)}_2$ and the statement holds true from \[\|\wb^{(i)}_1 - \wb^{(i)}_2\|\Qnormp{(i)} = 0 \leq \|\xb^{(j)}_{1} - \xb^{(j)}_{2}\|\Ynormp{(i)}.\]
    Otherwise it satisfies that $j \neq i$.
    According to the definition of \tpLInfNorm, we have that 
    \begin{align} \label{eq:lm:self-play-lip-1-1}
        \|\wb^{(i)}_1 - \wb^{(i)}_2\|\Qnormp{(i)} = \sup_{\xb \in \cQ^{(i)}} \langle |\wb^{(i)}_1 - \wb^{(i)}_2|, \xb\rangle
        = \sup_{\xb \in \cQ^{(i)}} \sum_{\sigma \in \leaveseqs^{(i)}} |\wb^{(i)}_1[\sigma] - \wb^{(i)}_2[\sigma]| \cdot \xb[\sigma].
    \end{align}
    By expressing the reward vector using the strategy of other players, we have that 
    \begin{align} \label{eq:lm:self-play-lip-1-2}
        \wb^{(i)}_1[\sigma] = \sum_{z \in \cI_{\sigma}} \ub^{(i)}[z] \cdot \pb[z] \cdot \xb^{(j)}_1[\sigma_z^{(j)}] \prod_{k \neq i, j} \xb^{(k)}_0[\sigma_z^{(k)}].
    \end{align}
    Plugging \eqref{eq:lm:self-play-lip-1-2} into \eqref{eq:lm:self-play-lip-1-1} gives
    \begin{align} \label{eq:lm:self-play-lip-1-3}
        &\quad \|\wb^{(i)}_1 - \wb^{(i)}_2\|\Qnormp{(i)} \\
        &= \sup_{\xb \in \cQ^{(i)}} \sum_{\sigma \in \leaveseqs^{(i)}} \xb[\sigma]  \cdot \Big|\sum_{z \in \cI_{\sigma}} \ub^{(i)}[z] \cdot \pb[z] \cdot (\xb^{(j)}_1[\sigma_z^{(j)}] - \xb^{(j)}_2[\sigma_z^{(j)}])  \prod_{k \neq i, j} \xb^{(k)}_0[\sigma_z^{(k)}] \Big| \notag \\
        &\leq \sup_{\xb \in \cQ^{(i)}} \sum_{\sigma \in \leaveseqs^{(i)}} \xb[\sigma] \sum_{z \in \cI_{\sigma}} \pb[z] \cdot \big|\xb^{(j)}_1[\sigma_z^{(j)}] - \xb^{(j)}_2[\sigma_z^{(j)}] \big|  \prod_{k \neq i, j} \xb^{(k)}_0[\sigma_z^{(k)}]  \notag\\
        &= \sup_{\xb \in \cQ^{(i)}} \sum_{z \in \efgleaves} \xb[\sigma^{(i)}_z] \cdot \pb[z] \cdot \big|\xb^{(j)}_1[\sigma_z^{(j)}] - \xb^{(j)}_2[\sigma_z^{(j)}] \big| \prod_{k \neq i, j} \xb^{(k)}_0[\sigma_z^{(k)}],
    \end{align}
    where the inequality holds since the reward $\ub^{(i)}[z] \in [0, 1]$ and the last equality is given by permuting the summation.

    According to the definition of \tpLOneNorm, we have 
    \begin{align}  \label{eq:lm:self-play-lip-2-1}
        \|\xb^{(j)}_1 - \xb^{(j)}_2\|\Ynormp{(j)} = \sup_{\yb \in \cY^{(j)}} \langle |\xb^{(j)}_1 - \xb^{(j)}_2|, \yb\rangle 
        = \sup_{\yb \in \cY^{(j)}} \sum_{\sigma \in \leaveseqs^{(j)}} |\xb^{(j)}_1[\sigma] - \xb^{(j)}_2[\sigma]| \cdot \yb[\sigma].
    \end{align}
    By expressing the transition kernel using the strategy of other players, we can write 
    \begin{align} \label{eq:lm:self-play-lip-2-2}
        \yb[\sigma] = \sum_{z \in \cI_{\sigma}}\xb^{(i)}[\sigma^{(i)}_z] \cdot  \pb[z] \prod_{k \neq i, j} \xb^{(k)}[\sigma_z^{(k)}]
    \end{align}
    such that $\xb^{(i)} \in \cQ^{(i)}$ and $\xb^{(k)} \in \cQ^{(k)}$. Plugging \eqref{eq:lm:self-play-lip-2-2} into \eqref{eq:lm:self-play-lip-2-1} gives
    \begin{align} \label{eq:lm:self-play-lip-2-3}
        \|\xb^{(j)}_1 - \xb^{(j)}_2\|\Ynormp{(j)} &= \sup_{\{\xb^{(k)} \in \cQ^{(k)}\}_{k\neq j}} \sum_{\sigma \in \leaveseqs^{(j)}} |\xb^{(j)}_1[\sigma] - \xb^{(j)}_2[\sigma]| \sum_{z \in \cI_{\sigma}}\xb^{(i)}[\sigma^{(i)}_z] \cdot \pb[z] \prod_{k \neq i, j} \xb^{(k)}[\sigma_z^{(k)}] \notag \\
        &\geq \sup_{\xb \in \cQ^{(i)}} \sum_{\sigma \in \leaveseqs^{(j)}} |\xb^{(j)}_1[\sigma] - \xb^{(j)}_2[\sigma]| \sum_{z \in \cI_{\sigma}}\xb[\sigma^{(i)}_z] \cdot \pb[z] \prod_{k \neq i, j} \xb^{(k)}_0[\sigma_z^{(k)}] \notag \\
        &= \sup_{\xb \in \cQ^{(i)}} \sum_{z \in \efgleaves} \xb[\sigma^{(i)}_z]\cdot \pb[z] \cdot \big|\xb^{(j)}_1[\sigma_z^{(j)}] - \xb^{(j)}_2[\sigma_z^{(j)}] \big| \prod_{k \neq i, j} \xb^{(k)}_0[\sigma_z^{(k)}],
    \end{align}
    where the inequality holds as $\xb^{(k)}_0 \in \cQ^{(k)}$ and the last equality is given by permuting the summation.
    By combining \eqref{eq:lm:self-play-lip-1-3} and \eqref{eq:lm:self-play-lip-2-3}, we can reach that  
    \begin{align} \label{eq:lm:self-play-lip-3}
        \|\wb^{(i)}_1 - \wb^{(i)}_2\|\Qnormp{(i)} \leq \|\xb^{(j)}_1 - \xb^{(j)}_2\|\Ynormp{(j)}.
    \end{align}
\end{proof}

\begin{lemma} \label{lm:self-play-lip}
    In Algorithm~\ref{alg:COMD}, for two joint policy $\pi_1 = \{\xb^{(j)}_1\}_{j=1}^n$ and $\pi_2 = \{\xb^{(j)}_2\}_{j=1}^n$ where $\xb^{(j)}_1, \xb^{(j)}_2 \in \cQ^{(j)}$ are the strategy profiles for player $j \in \llbracket n \rrbracket$. Consider the corresponding reward vectors $\wb^{(i)}_1 := \partial_i u^{(i)}(\pi_1)$ and $\wb^{(i)}_2 := \partial_i u^{(i)}(\pi_2)$ of player $i \in \llbracket n \rrbracket$ when other players follow the joint policy.
    Under Assumption~\ref{ass:reward}, we have the reward vector is Lipschitz under the \tpLOneNorm, that is, 
    $$\|\wb^{(i)}_1 - \wb^{(i)}_2\|\Qnormp{(i)} \leq \sum_{j=1}^{n}\|\xb^{(j)}_1 - \xb^{(j)}_2\|\Ynormp{(j)}.$$
\end{lemma}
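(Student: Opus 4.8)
The plan is to derive Lemma~\ref{lm:self-play-lip} from the local version Lemma~\ref{lm:self-play-lip-local} by a standard hybrid (telescoping) argument that swaps one player's strategy at a time. Concretely, for each $k \in \{0, 1, \dots, n\}$ I would define the hybrid joint policy
\[
    \pi^{(k)} := \{\xb^{(1)}_2, \dots, \xb^{(k)}_2, \xb^{(k+1)}_1, \dots, \xb^{(n)}_1\},
\]
so that $\pi^{(0)} = \pi_1$ and $\pi^{(n)} = \pi_2$, and let $\wb^{(i)}_{(k)} := \partial_i u^{(i)}(\pi^{(k)})$ denote the reward vector of player $i$ under $\pi^{(k)}$. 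By construction, the consecutive policies $\pi^{(k-1)}$ and $\pi^{(k)}$ agree on the strategies of all players except player $k$, who uses $\xb^{(k)}_1$ in the former and $\xb^{(k)}_2$ in the latter.

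First I would apply the triangle inequality for the \tpLInfNorm~$\|\cdot\|\Qnormp{(i)}$, which is a genuine norm by Lemma~\ref{lm:norm-is-norm}, to telescope the difference:
\[
    \|\wb^{(i)}_1 - \wb^{(i)}_2\|\Qnormp{(i)} = \Big\|\sum_{k=1}^{n}\big(\wb^{(i)}_{(k-1)} - \wb^{(i)}_{(k)}\big)\Big\|\Qnormp{(i)} \leq \sum_{k=1}^{n}\big\|\wb^{(i)}_{(k-1)} - \wb^{(i)}_{(k)}\big\|\Qnormp{(i)}.
\]
Then, since $\pi^{(k-1)}$ and $\pi^{(k)}$ differ only in player $k$, Lemma~\ref{lm:self-play-lip-local} (applied with $j = k$ and the common background profile $\{\xb^{(1)}_2, \dots, \xb^{(k-1)}_2, \xb^{(k+1)}_1, \dots, \xb^{(n)}_1\}$) gives $\|\wb^{(i)}_{(k-1)} - \wb^{(i)}_{(k)}\|\Qnormp{(i)} \leq \|\xb^{(k)}_1 - \xb^{(k)}_2\|\Ynormp{(k)}$. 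Summing these bounds over $k = 1, \dots, n$ yields exactly the claimed inequality.

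There is essentially no genuinely hard step here: the argument is a one-player-at-a-time hybrid, and the only things to verify are that each $\pi^{(k)}$ is a valid joint policy (immediate, as every component lies in the corresponding $\cQ^{(j)}$) and that Lemma~\ref{lm:self-play-lip-local} is stated for an arbitrary fixed background profile, which it is. The degenerate case $\xb^{(k)}_1 = \xb^{(k)}_2$ is handled automatically, since then the $k$-th summand vanishes on both sides. If anything requires care it is bookkeeping---keeping the superscripts $(i)$ and $(k)$ on the two treeplex norms straight and fixing a consistent swap order---rather than any mathematical content; this is why I would present it as a short corollary of the local lemma.
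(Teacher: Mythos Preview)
Your proposal is correct and follows essentially the same approach as the paper: both define the hybrid joint policies $\pi^{(k)}$ that swap one player's strategy at a time, telescope via the triangle inequality for $\|\cdot\|\Qnormp{(i)}$, and then apply Lemma~\ref{lm:self-play-lip-local} to each consecutive pair. The only differences are notational (the paper uses subscript $(j)$ for the hybrids and indexes by $j$ rather than $k$).
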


\begin{proof}
    Consider a series of reward vector policy $\pi_{(j)} := \{\xb^{(1)}_2, \cdots, \xb^{(j)}_2, \xb^{(j+1)}_1 \cdots, \xb^{(n)}_1\}$ which are generated by the joint policy that aligns with joint policy $\pi_2$ on the first $j$ players while aligns with $\pi_1$ on the rest.
    Denote by $\wb^{(i)}_{(j)} := \partial_i u^{(i)}(\pi_{(j)})$ the reward vector. Under this definition, it satisfies that $\wb^{(i)}_{(0)} = \wb^{(i)}_1$ and $\wb^{(i)}_{(n)} = \wb^{(i)}_2$. Therefore, we can write 
    \begin{align*}
        \|\wb^{(i)}_{1} - \wb^{(i)}_{2}\|\Qnormp{(i)} \leq \sum_{j=1}^{n} \|\wb^{(i)}_{(j-1)} - \wb^{(i)}_{(j)}\|\Qnormp{(i)} \leq \sum_{j=1}^{n} \|\xb^{(j)}_1 - \xb^{(j)}_2\|\Ynormp{(j)},
    \end{align*}
    where the first inequality follows from the triangle inequality and the second inequality is given by Lemma~\ref{lm:self-play-lip-local}.
\end{proof}

\restateLemma{lm:COMD-fixed-point-convergence}

\begin{proof}
    By applying Lemma~\ref{lm:proximal-operator-lip} to the proximal steps $\xb_{t, l}^{(i)}\leftarrow \Pi_{\DilEnt^{(i)}}(\eta \wb^{(i)}_{t, l-1}, \tilde \xb_{t}^{(i)})$ and $\xb_{t, l+1}^{(i)}\leftarrow \Pi_{\DilEnt^{(i)}}(\eta \wb^{(i)}_{t, l}, \tilde \xb_{t}^{(i)})$, we have that for every $l \geq 1$, 
    \begin{align} \label{eq:lm:COMD-fixed-point-convergence:1}
        \|\xb_{t, l+1}^{(i)} - \xb_{t, l}^{(i)}\|\Ynormp{(i)} \leq \eta \|\wb^{(i)}_{t, l}- \wb^{(i)}_{t, l-1}\|\Qnormp{(i)},
    \end{align}
    where the strongly convex modulus $\mu \geq 1$ is given by Lemma~\ref{lm:strongly-convex}. From Lemma~\ref{lm:self-play-lip}, we have that 
    \begin{align} \label{eq:lm:COMD-fixed-point-convergence:2}
        \|\wb^{(i)}_{t, l+1}- \wb^{(i)}_{t, l}\|\Qnormp{(i)} \leq \sum_{j=1}^{n} \|\xb_{t, l+1}^{(j)} - \xb_{t, l}^{(j)}\|\Ynormp{(i)}.
    \end{align}
    In addition, the difference between the initial steps can be upper bounded according to
    \begin{align} \label{eq:lm:COMD-fixed-point-convergence:3}
        \|\wb^{(i)}_{t, 2}- \wb^{(i)}_{t, 1}\|\Qnormp{(i)} \leq \|\wb^{(i)}_{t, 2}\|\Qnormp{(i)} + \|\wb^{(i)}_{t, 1}\|\Qnormp{(i)} \leq 2,
    \end{align}
    where the first inequality follows from the triangle inequality and the second inequality is given by Lemma~\ref{lm:ub-Qnorm-upperbound}.
    By combining \eqref{eq:lm:COMD-fixed-point-convergence:1}, \eqref{eq:lm:COMD-fixed-point-convergence:2}, and \eqref{eq:lm:COMD-fixed-point-convergence:3}, we reach the desired statement:
    \begin{align*}
        \|\wb^{(i)}_{t, l+1} - \wb^{(i)}_{t, l}\|\Qnormp{(i)} \leq 2(n\eta)^{l-1}.
    \end{align*}
\end{proof}

\subsection{Proof of Theorem~\ref{thm:COMD-result}}
\restateTheorem{thm:COMD-result}
\begin{proof}
    When $\eta \leq 1 / (2n)$, we have $n\eta \leq 1/2$.
    According to Lemma~\ref{lm:COMD-fixed-point-convergence}, the difference between the actual reward vector and the prediction can be upper bounded by 
    \begin{align*}
        \|\wb_{t}^{(i)} - \mb_t^{(i)}\|\Qnormp{(i)} = \|\wb_{t, L+1}^{(i)} - \wb_{t, L}^{(i)}\|\Qnormp{(i)} \leq 2(n\eta)^{L-1} \leq 2^{2-L}.
    \end{align*}
    Therefore, with $L = \lceil \log K\rceil$ steps of fixed-point iterations, the difference can be as small as $\|\wb_{t}^{(i)} - \mb_t^{(i)}\|\Qnormp{(i)} \leq 4/K$.
    
    Let $\|\cdot\|\Ynormp{(i)}$ and $\|\cdot\|\Qnormp{(i)}$ be the pair of primal-dual norms required by Theorem~\ref{thm:omd-regret-ub-general}. In the context of the theorem, we have $\|\wb_t\|\Qnormp{(i)} \leq 1$ for any $t \in \llbracket K \rrbracket$ given by the definition of \tpLInfNorm, $\cD_\DilEnt(\xb_*, \xb_1) \leq \ln |\cV|$ according to Lemma~\ref{lm:ub-DilEnt-div-upperbound}, $\mu \geq 1$ according to Lemma~\ref{lm:strongly-convex}, and $\|\wb_{t}^{(i)} - \mb_t^{(i)}\|\Qnormp{(i)} \leq 1/K$ from the reasoning above. Plugging these results into the statement of Theorem~\ref{thm:omd-regret-ub-general} gives:
    \begin{align*}
        \Regret(K) \leq \frac{1}{\eta} \cdot \ln |\cV| + \frac{\eta K}{2} \cdot \Big(\frac{4}{K}\Big)^2.
    \end{align*}
    With $\eta = 1 / (2n)$, we get that 
    \begin{align*}
        \Regret(K) \leq \cO(n\log |\cV|).
    \end{align*}

    According to the online-to-batch conversion \citep[see e.g.][]{piliouras2022beyond}, we can conclude that the average joint policy $\bar \pi_{K}$ is an $\eps$-CCE with $\eps \leq \cO(n\log |\cV| / K)$. 
    Given oracle access budget $T$, we can select $K = \lfloor T / \log T \rfloor$ satisfying $K L \leq T$. This indicates that the algorithm converge to a CCE with approximation rate $\cO(n \log |\cV| \log T / T)$.
\end{proof}

\section{Proof of Regret Lower Bounds} \label{sec:proof-to-lb}

\subsection{Proof of Theorem~\ref{thm:regret-lb}}
\restateTheorem{thm:regret-lb}

\begin{proof}
    We will prove the statement by induction on $\tfsdp$ from the bottom up. For each point $h \in \tfsdp \setminus \leaveseqs$, we construct a hard instance $\cI_h(T)$ such that any algorithm $\tfsdp$ playing in subgame $\tfsdp_h$ incurs an expected regret of at least $\Omega(\sqrt{\Qwidths{h} \log |\cA_0|}\sqrt{T})$. We construct hard instance $\cI_h(T)$ to be a two-player perfect-information EFG that has the same structure as the given TFSDP, where all the observation points are the decision points of an adversarial opponent. It can be represented using a set of random variables $\{\yb_{h,t}, \rb_{h,t}\}_{t=1}^T$, where $\yb_{h,t} \in \cY_h$ is the transition kernel and $\rb_{h,t} \in [0, 1]^{\leaveseqs_h}$ encodes the expected reward of player conditional on each terminal observation point. Note that the transition kernel $\yb_{h, t}$ is also the strategy profile of the opponent in his extensive-form decision space. In this case, the expected reward of playing strategy profile $\xb_{h,t}$ on $\tfsdp_h$ on episode $t$ can be computed by $\langle \xb_{h,t}, \wb_{h,t}\rangle$ where $\wb_{h,t} :=  \rb_{h,t}\odot \yb_{h,t}$ is the reward vector.

    \textbf{Case 1:} The base case of the induction is that $h = j \in \cJ$ and $ja \in \leaveseqs$ holds for every $a \in \cA_j$. In this case, the TFSDP with point set $\tfsdp_h$ is equivalent to a full-information multi-arm bandit problem (a.k.a. expert problem). We construct the hard instance $\cI_j(T)$ by assigning 
    \begin{align*}
        \yb_{j,t}[ja] = 1, \rb_{j,t}[ja] = \Unif(\{0, 1\})
    \end{align*}
    for every episode $t \in \llbracket T \rrbracket$, where $\Unif(\{0, 1\})$ is the Bernoulli random variable with $p = 0.5$. In this case, the entries of the reward vector $\wb_{j,t} :=  \rb_{j,t}\odot \yb_{j,t}$ are independently random variables. Thus, the expected cumulative reward among $T$ episodes of any algorithm \texttt{Alg} can be computed by 
    \begin{align} \label{eq:lm:regret-lb:eq1-1}
        \Expt\Big[\sum_{t=1}^T \langle \xb_{j,t}, \wb_{j,t}\rangle\Big] = \sum_{t=1}^{T}  \frac{1}{2} = \frac{T}{2}.
    \end{align}
    Additionally, the cumulative reward of the optimal policy can be computed according to 
    \begin{align} \label{eq:lm:regret-lb:eq1-2}
        \Expt\Big[\max_{a \in \cA_j} \sum_{t=1}^T \langle \eb_{ja}, \wb_{j,t}\rangle\Big] &= \Expt\Big[\max_{a \in \cA_j} \sum_{t=1}^T \Big(\frac{1}{2}\sigma_{a,t} + \frac{1}{2} \Big)\Big] \notag \\
        &= \frac{1}{2}\Expt\Big[\max_{a \in \cA_j} \sum_{t=1}^T \sigma_{a,t}\Big] + \frac{T}{2} \notag \\
        &\geq \Omega(\sqrt{T \log |\cA_j|}) + \frac{T}{2},
    \end{align}
    where $\eb_{ja}$ is the pure strategy profile that always executes action $a$ at decision point $j$ and $\sigma_{a,t} \sim \Unif(\{-1, 1\})$ are independent Rademacher random variables for $a \in \cA_j$ and $t \in \llbracket T\rrbracket$. The last inequality follows from Lemma A.11 in \citet{cesa2006prediction}. Combining \eqref{eq:lm:regret-lb:eq1-1} and \eqref{eq:lm:regret-lb:eq1-2} indicates that any algorithm suffer an expected regret of at least 
    \begin{align*}
        \Regret_j(T) = \Expt\Big[\max_{a \in \cA_j} \sum_{t=1}^T \langle \eb_{ja}, \wb_{j,t}\rangle\Big] - \Expt\Big[\sum_{t=1}^T \langle \xb_{j,t}, \wb_{j,t}\rangle\Big] \geq \Omega \Big(\sqrt{\log |\cA_j|}\sqrt{T}\Big).
    \end{align*}
    Note that for the $j \in \cJ$, we have $\Qwidths{j} = 1$ from Lemma~\ref{lm:qwidth-recusrive}. Together with $|\cA_j| \geq |\cA_0|$ from the definition of $|\cA_0|$, we conclude that 
    \begin{align*}
        \Regret_j(T) \geq \Omega\Big(\sqrt{\Qwidths{j} \log |\cA_0|}\sqrt{T}\Big),
    \end{align*}
    establishing the induction basis.
    
    \textbf{Case 2:} For any other decision point $h = j \in \cJ$ that is non-terminal, let $a_* = \argmax_{a \in \cA} \Qwidths{ja}$ be the action that maximizes the leaf count in the subtree (breaking ties arbitrarily). Let $\cI_{ja_*}(T) = \{\yb_{ja_*, t}, \rb_{ja_*, t}\}_{t=1}^T$ be the hard instance built in the subtree satisfying the induction hypothesis. We construct the hard instance $\cI_{j}(T) = \{\yb_{ja_*, t}, \rb_{ja_*, t}\}_t$ according to 
    \begin{align*}
        \yb_{j,t}[h] = \begin{cases}
            \yb_{ja_*,t}[\sigma] & \mathrm{if}\ \sigma \in \leaveseqs(ja_*) \\
            \perp & \mathrm{otherwise}
        \end{cases},
        \rb_{j,t}[\sigma] = \begin{cases}
            \rb_{ja_*,t}[\sigma] & \mathrm{if}\ \sigma \in \leaveseqs(ja_*) \\
            0 & \mathrm{otherwise}
        \end{cases}
    \end{align*}
    where $\perp$ refers to any valid transition kernel. In other words, the construction ensures taking action $a_*$ leads to the hard instance $\cI_{ja*}(T)$ while taking any other actions always results in a reward of $0$.

    Since any action different from $a_*$ will result in a reward of $0$, any algorithm $\texttt{Alg}$ that ever assigned weight outside $ja_*$ end up in a lower cumulative reward compared to the other algorithm $\texttt{Alg}'$ that consistently play action $a_*$ on decision point $j$. Together with the fact that playing $a_*$ will reduce the problem to $\cI_{ja_*}(T)$, no algorithm can achieve a higher reward on the hard instance $\cI_{j}(T)$ comparing to $\cI_{ja_*}(T)$.
    Moreover, the strategy profile given by choosing $a_*$ then following the optimal strategy profile on $\cI_{ja_*}(T)$ leads to the same cumulative reward as the optimal strategy profile on $\cI_{ja_*}(T)$. This indicates that the cumulative reward of the optimal strategy profile on $\cI_{j}(T)$ is no less than its counterpart on $\cI_{ja_*}(T)$.
    Combining the two statements, we have that the regret lower bound on $\cI_{j}(T)$ is no less than the regret lower bound on $\cI_{ja_*}(T)$. This implies the regret of any algorithm \texttt{Alg} can be lower bounded by 
    \begin{align*}
        \Regret_j(T) \geq \Omega\Big(\sqrt{\Qwidths{ja_*} \log |\cA_0|}\sqrt{T}\Big) = \Omega\Big(\sqrt{\Qwidths{j} \log |\cA_0|}\sqrt{T}\Big).
    \end{align*}
    where the last equality follows from Lemma~\ref{lm:qwidth-recusrive}, which indicates $\Qwidths{j} = \max_{a \in \cA_j} \Qwidths{ja} = \Qwidths{ja_*}$, where the last equality follows from the choice of action $a_*$.

    \textbf{Case 3:} 
    If $h = \sigma \in \nonleaveseqs$ is some non-terminal observation point, we construct the hard instance $\cI_{\sigma}(T)$ by concatenating several hard instance blocks, each block for one observation outcome $j \in \cC_\sigma$. For $j \in \cC_\sigma$, taking observation at point $\sigma$ always leads to decision point $j$. Let $\{T_j \in \ZZ_{\geq 0}\}_{j \in \cC_\sigma}$ be a partition which maximizes $\sum_{j} T_j \Qwidths{j}$ under the constraint $\sum_{j} T_j = T$. This ensures $T_j \approx T\Qwidths{j} / \Qwidths{\sigma}$. 
    We construct the hard instance block $\cI_{\sigma\rightarrow j}(T_j) = \{\rb_{\sigma,t}, \yb_{\sigma,t}\}_{t=1}^{T_j}$ using the hard instance $\cI_{j}(T_j) = \{\rb_{j,t}, \yb_{j,t}\}_{t=1}^{T_j}$ in $\tfsdp_j$ given by the induction, by assigning
    \begin{align*}
        \yb_{\sigma,t}[\sigma'] = \begin{cases}
            \yb_{j,t}[\sigma'] & \mathrm{if}\ \sigma' \in \leaveseqs(j) \\
            \perp & \mathrm{otherwise}
        \end{cases},
        \rb_{\sigma,t}[\sigma'] = \begin{cases}
            \rb_{j,t}[\sigma'] & \mathrm{if}\ \sigma' \in \leaveseqs(j) \\
            0 & \mathrm{otherwise}
        \end{cases}.
    \end{align*}
    We construct the hard instance $\cI_\sigma(T)$ by concatenating $\cI_{\sigma \rightarrow j}(T_j)$ over episodes. 
    
    From the property of observation point, the cumulative regret of any algorithm on hard instance $\cI_{\sigma \rightarrow j}(T_j)$ is equal to that on $\cI_{j}(T_j)$. Since hard instances $\cI_{j}(T_j)$ are independent on the decision space, the cumulative regret on $\cI_\sigma(T)$ is the summation of the cumulative regret among all $\cI_{\sigma \rightarrow}(T_j)$. This indicates that any algorithm \textrm{Alg} will suffer a regret of at least 
    \begin{align*}
        \Regret_\sigma(T) &= \sum_{j \in \cC_\sigma} \Regret_{\sigma \rightarrow j}(T_j) \\
        &= \sum_{j \in \cC_\sigma} \Regret_{j}(T_j) \\
        &\geq \sum_{j \in \cC_\sigma} \Omega\Big(\sqrt{ \Qwidths{j}\log|\cA_0|}\sqrt{T_j} \Big) \\
        &\geq \sum_{j \in \cC_\sigma} \Omega\Big(\sqrt{\Qwidths{j}\log |\cA_0|} \sqrt{T\Qwidths{j}/\Qwidths{\sigma}}\Big) \\ &= \Omega(\sqrt{\Qwidths{\sigma} \log |\cA_0|}\sqrt{T}),
    \end{align*}
    where the second inequality is given by the assignment of $T_j$ and the last equality is given by Lemma~\ref{lm:qwidth-recusrive} in which $\Qwidths{\sigma} = \sum_{j \in \cC_\sigma} \Qwidths{j}$.
        
    In general, the induction hypothesis always holds, indicating that for any algorithm \texttt{Alg}, it suffers an expected regret of at least $\Omega(\sqrt{\Qwidths{h} \log |\cA_0|}\sqrt{T})$ in subtree $h$. The desired result can be reached by inspecting $h = \emptyset$.
\end{proof}

\subsection{Proof of Lemma~\ref{lm:ub-width-size-conv}}

We first establish a structural result for TFSDPs where no non-root observation point yield exactly one outcome.
\begin{lemma} \label{lm:ub-width-size-conv-1}
    For a TFSDP with a given point set $\tfsdp$, if no non-root observation point yield exactly one outcome, that is, $|\cC_\sigma| \geq 2$ for any $\sigma \in \nonrootseqs \setminus \leaveseqs$, then it follows that $\Qsize \leq 2\Qwidth$.
\end{lemma}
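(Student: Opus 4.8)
The plan is to reduce the inequality to an elementary fact about finite rooted trees. First I would note that both $\Qsize$ and $\Qwidth$ are suprema of \emph{linear} functionals over the polytope $\cQ$: by the defining equalities of $\hat\cQ$, the reach $\xb[\sigma]$ of any internal point $\sigma\in\nonleaveseqs$ is a fixed nonnegative combination of the leaf reaches $\xb[\leaveseqs]$, so $\xb\mapsto\|\xb[\seqs]\|_1$ and $\xb\mapsto\|\xb\|_1$ are linear on $\cQ$ and their maxima are attained at vertices, i.e.\ at pure strategies $\xb\in\cV=\cQ\cap\{0,1\}^{\leaveseqs}$. Hence it suffices to establish the pointwise bound $\|\xb[\seqs]\|_1\le 2\|\xb\|_1$ for every $\xb\in\cV$ and then take the supremum.

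Fix $\xb\in\cV$ with extended representative $\hat\xb\in\hat\cQ$, all of whose entries lie in $\{0,1\}$, and let $R=\{\sigma\in\seqs:\hat\xb[\sigma]=1\}$ be the set of reached sequences, so $\|\xb[\seqs]\|_1=|R|$ and $\|\xb\|_1=|R\cap\leaveseqs|$. I would equip $R$ with the rooted-tree structure in which $\emptyset$ is the root and $ja\in R$ is a child of $\sigma\in R$ whenever $j\in\cC_\sigma$. The constraints $\hat\xb[\sigma]=\hat\xb[j]$ for $j\in\cC_\sigma$ and $\hat\xb[j]=\sum_{a\in\cA_j}\hat\xb[ja]$, together with the disjointness of the sets $\cC_{ja}$, show that this is a genuine tree on $R$ and that every reached $\sigma$ has exactly $|\cC_\sigma|$ children — one for each $j\in\cC_\sigma$, namely $ja$ with $a$ the action played at $j$. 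Consequently the leaves of this tree are exactly $R\cap\leaveseqs$ (since $\cC_\sigma=\emptyset\iff\sigma\in\leaveseqs$), and by hypothesis every internal node $\sigma\neq\emptyset$ has $|\cC_\sigma|\ge 2$ children.

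It then remains to prove: a finite rooted tree with $N$ nodes and $L$ leaves in which every internal node other than the root has at least two children satisfies $N\le 2L$. I would count edges: the sum of the numbers of children over the $I:=N-L$ internal nodes equals $N-1$; the root contributes at least $1$ and each of the other $I-1$ internal nodes at least $2$, so $N-1\ge 2(I-1)+1=2I-1$, giving $L=N-I\ge I$ and hence $N=L+I\le 2L$ (the case $N=1$ being immediate). Applying this to the tree on $R$ yields $\|\xb[\seqs]\|_1=|R|\le 2|R\cap\leaveseqs|=2\|\xb\|_1$, and taking the supremum over $\xb\in\cV$ gives $\Qsize\le 2\Qwidth$.

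I do not anticipate a real obstacle; the only delicate points are the reduction to vertices (which needs the two quantities to be genuine linear functionals) and the edge cases of the tree lemma — a root with a single child, and the one-node tree — which is precisely why the bound is $2L$ rather than $2L-1$. An essentially equivalent alternative, if one prefers to stay within the recursive vocabulary of the paper, is to induct over the TFSDP from the leaves upward, proving the sharper statement that in any \emph{proper} subtree rooted at a sequence the number of reachable sequences is at most twice the number of reachable terminal sequences minus one, and then absorbing the ``$-1$'' at the root using $|\cC_\emptyset|\ge 1$.
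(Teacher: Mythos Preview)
Your argument is correct. The reduction to vertices is sound (each $\hat\xb[\sigma]$ is indeed an affine function of the terminal coordinates on $\cQ$, so the two suprema are attained on $\cV$), the tree $R$ of reached sequences is well-defined with exactly $|\cC_\sigma|$ children at each reached $\sigma$, and the edge-counting lemma handles the root's possibly single child and the one-node case cleanly.

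Your route differs from the paper's. The paper never reduces to vertices; it proves directly, by bottom-up induction over the TFSDP and for \emph{all} $\xb\in\cQ$, the inequality $\sum_{\sigma'\in\seqs_h}\xb[\sigma']\le \xb[h]\,(2\Qwidths{h}-1)$ at every non-root point $h$, using $\Qwidths{ja}\le\Qwidths{j}$ at decision points and the hypothesis $|\cC_\sigma|\ge 2$ at non-terminal observation points, and then absorbs the ``$-1$'' at $\emptyset$ via $|\cC_\emptyset|\ge 1$. This is precisely the alternative you sketch in your last paragraph, except that the paper carries $\Qwidths{h}$ (a global subtree quantity) rather than the $\xb$-specific leaf count through the induction. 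Your approach is more elementary and self-contained---one clean tree-counting fact instead of a three-case structural induction---while the paper's version stays entirely inside the recursive vocabulary used elsewhere in the appendix and avoids the (mild) detour through linearity and vertex attainment. Both yield the same bound with the same slack at the root.
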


\begin{proof}
    We prove the statement by induction on $\tfsdp$ from the bottom up, showing that for any strategy profile $\xb \in \cQ$, and any non-root point $h \in \tfsdp \setminus \{\emptyset\}$, it satisfies that 
    \begin{align*}
        \|\xb[\seqs_h]\|_1 := \sum_{\sigma \in \seqs_h} \xb[\sigma] \leq \xb[h] \cdot (2\Qwidths{h} - 1) .
    \end{align*}
    
    \textbf{Case 1:} The base case for the induction occurs when $h = \sigma \in \leaveseqs$ is a terminal node. In this scenario, the statement holds true since 
    \[\sum_{\sigma \in \seqs_h} \xb[\sigma] = \xb[h]  \leq \xb[\sigma] \cdot (2\Qwidths{\sigma} - 1) .\]
    where the last inequality is given by Lemma~\ref{lm:qwidth-recusrive} in which $2\Qwidths{\sigma} = 1$.

    \textbf{Case 2:} For any decision point $h = j \in \cJ$, it holds that 
    \begin{align*}
        \sum_{\sigma \in \seqs_j} \xb[\sigma] &= \sum_{a \in \cA_j} \sum_{\sigma \in \seqs_{ja}} \xb[\sigma] \\
        &\leq \sum_{a \in \cA_j} \xb[ja] \cdot (2\Qwidths{ja} - 1) \\
        &\leq \sum_{a \in \cA_j} \xb[ja] \cdot (2\Qwidths{j} - 1) \\
        &= \xb[j] \cdot (2\Qwidths{j} - 1),
    \end{align*}
    where the first equality follows from to the tree hierarchy of $\seqs$, the first inequality follows the induction hypothesis, the second inequality holds since $\Qwidths{ja} \leq \Qwidths{j}$ implied by Lemma~\ref{lm:qwidth-recusrive}, and the last equality follows from $\xb[j] = \sum_{a \in \cA_j} \xb[ja]$ as $\xb \in \cQ$. This indicates that $\sum_{\sigma \in \seqs(h)} \xb[\sigma] \leq (2\Qwidths{h} - 1) \cdot \xb[h]$ holds in this case.

    \textbf{Case 3:} For any non-root-non-terminal observation point $h = \sigma \in \nonrootseqs \setminus \leaveseqs$, it holds that 
    \begin{align*}
        \sum_{\sigma' \in \seqs_\sigma} \xb[\sigma'] &= \xb[\sigma] + \sum_{j \in \cC_\sigma} \sum_{\sigma' \in \seqs_{j}} \xb[\sigma'] \\
        &\leq \xb[\sigma] + \sum_{j \in \cC_\sigma} \xb[j] \cdot (2\Qwidths{j} - 1)  \\
        &= \xb[\sigma] \cdot (1 + 2\Qwidths{\sigma} - |\cC_\sigma|)  \\
        &\leq \xb[\sigma] \cdot (2\Qwidths{\sigma} - 1) ,
    \end{align*}
    where the the first equality holds due to the tree hierarchy of $\seqs$, the first inequality follows the induction hypothesis, the second equality holds since $\xb[j'] = \xb[\sigma]$ for $j' \in \cC_\sigma$ from $\xb \in \cQ$ as well as $\sum_{j' \in \cC_\sigma} \Qwidths{j'} = \Qwidths{\sigma}$ from Lemma~\ref{lm:qwidth-recusrive}, and the last inequality follows from the assumption that $|\cC_\sigma| \geq 2$.

    In general, for any non-root point $h \in \nonrootseqs$, it satisfies that 
    \begin{align*}
        \sum_{\sigma \in \seqs_h} \xb[\sigma] \leq (2\Qwidths{h} - 1) \cdot \xb[h].
    \end{align*}

    Now consider the root point $h = \emptyset$, we have that 
    \begin{align*}
        \sum_{\sigma \in \seqs} \xb[\sigma] &= \xb[\emptyset] + \sum_{j \in \cC_{\emptyset}} \sum_{\sigma \in \seqs_j} \xb[\sigma] \\
        &\leq \xb[\emptyset] + \sum_{j \in \cC_\emptyset} \xb[j] \cdot (2\Qwidths{j} - 1)  \\
        &= 1 + 2\Qwidths{\sigma} - |\cC_\sigma|  \\
        &\leq 2\Qwidths{\sigma},
    \end{align*}
    where the the first equality holds due to the tree hierarchy of $\seqs$, the first inequality follows the induction hypothesis, the second equality holds since $\xb[j'] = \xb[\emptyset] = 1$ for $j' \in \cC_\emptyset$ from $\xb \in \cQ$ as well as $\sum_{j' \in \cC_\emptyset} \Qwidths{j'} = \Qwidths{\emptyset}$ from Lemma~\ref{lm:qwidth-recusrive}, and the last inequality follows from  $|\cC_\emptyset| \geq 1$. According to the definition that $\Qwidth = \max \sum_{\sigma \in \seqs} \xb[\sigma]$, we conclude that $\Qwidth \leq 2 \Qsize$ if $|\cC_\sigma| \geq 2$ for any $\sigma \in \nonrootseqs \setminus \leaveseqs$.
\end{proof}

The next lemma shows any TFSDP can be transformed into a TFSDP where no non-root observation point yield exactly one outcome.
\begin{lemma} \label{lm:ub-width-size-conv-2}
    Given TFSDP with a given point set $\tfsdp_0$, it can always be represented using another TFSDP $\tfsdp$ with the same compressed extensive-form decision space $\cQ$ such that no non-root observation point yield exactly one outcome. Furthermore, the leaf count $\Qwidth$ remains unchanged after the transformation, while the total number of actions $\sum_{j \in \cJ} |\cA_j|$ does not increase.
\end{lemma}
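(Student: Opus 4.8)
The plan is to isolate a single \emph{contraction} move that eliminates one offending observation point while preserving every claimed invariant, and then iterate it to exhaustion.

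\textbf{The move.} Suppose some $\sigma \in \nonrootseqs \setminus \leaveseqs$ has exactly one outcome, say $\cC_\sigma = \{j\}$, and write $\sigma = j'a$ with $j' \in \cJ$ and $a \in \cA_{j'}$; this $j'$ exists and is unique precisely because $\sigma$ is a \emph{non-root} sequence. Form $\tfsdp$ from $\tfsdp_0$ by deleting $\sigma$ and $j$, replacing the action $a$ at $j'$ with the family $\{(a,b) : b \in \cA_j\}$, and, for each $b \in \cA_j$, declaring the new observation point $j'(a,b)$ to have child set $\cC_{j'(a,b)} := \cC_{jb}$ (so that every $j'' \in \cC_{jb}$ now has parent $j'(a,b)$ in place of $jb$) and reward $\rb[jb]$ when $jb$ is terminal. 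Since $\sigma$ had a single outcome, the subtrees below the $jb$'s are re-attached unchanged save for renaming each root $jb$ as $j'(a,b)$, and a direct check shows $\tfsdp$ is again a perfect-recall, tree-structured TFSDP (still satisfying Assumption~\ref{ass:reward}).

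\textbf{The invariants.} Let $\phi$ be the bijection between the terminal sets of $\tfsdp_0$ and $\tfsdp$ that is the identity away from terminals of the form $jb$ and sends each such $jb$ to $j'(a,b)$. I would show $\phi$ carries $\cQ$ to $\cQ$ by comparing the lifted polytopes: in $\hat\cQ$ for $\tfsdp_0$ the constraints surrounding $\sigma$ read $\hat\xb[j'a] = \hat\xb[j] = \sum_{b \in \cA_j}\hat\xb[jb]$, and after eliminating the now-absent coordinates $\hat\xb[\sigma], \hat\xb[j]$ and renaming $\hat\xb[jb] \leftrightarrow \hat\xb[j'(a,b)]$ these collapse exactly to the conservation constraint at $j'$ in $\tfsdp$, namely $\hat\xb[j'] = \sum_{a' \neq a}\hat\xb[j'a'] + \sum_{b \in \cA_j}\hat\xb[j'(a,b)]$; every remaining constraint is copied over verbatim under the renaming. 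Projecting onto the terminals yields $\phi(\cQ) = \cQ$ — the compressed decision space, hence also $|\cV|$, is unchanged. Because $\phi$ only relabels coordinates, $\|\phi(\xb)\|_1 = \|\xb\|_1$, so $\Qwidth$ is unchanged. Finally the move removes one action at $j'$ and the $|\cA_j|$ actions at $j$ while adding $|\cA_j|$ actions at $j'$, a net change of $-1$ in $\sum_{j \in \cJ}|\cA_j|$; in particular that quantity strictly decreases.

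\textbf{Termination and the main obstacle.} The strict decrease of the nonnegative integer $\sum_{j\in\cJ}|\cA_j|$ is the termination monovariant: even though a contraction can create a fresh single-outcome observation point — precisely $j'(a,b)$ when $|\cC_{jb}| = 1$ — after finitely many contractions no non-root observation point with a single outcome survives, and the TFSDP reached then enjoys all the stated properties. The one genuinely delicate step is the polytope equivalence $\phi(\cQ) = \cQ$ above: one must fix the precise sense in which two compressed decision spaces are ``the same'' (a coordinate relabeling) and must handle uniformly the case in which $jb$ is itself a terminal observation point and the case in which it is internal. Given that, the invariance of $\Qwidth$, the bound on the action count, and termination are all immediate consequences of $\phi$ and the monovariant.
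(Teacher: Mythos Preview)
Your proposal is correct and follows essentially the same contraction move as the paper: when a non-root sequence $\sigma=j'a$ has a unique child $j$, merge $j$ into $j'$ by replacing the action $a$ with the actions of $j$, which is exactly what the paper describes (and illustrates in Figure~\ref{fig:proper_tfsdp}). If anything, your treatment is more careful than the paper's own proof---you spell out the polytope correspondence via $\phi$, the precise $-1$ change in the action count, and the termination monovariant explicitly, whereas the paper leaves most of this at the level of a picture and a sentence.
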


\begin{proof}
    We first present a transformation which removes each non-root observation point that yields only one outcome while makes the compressed extensive-form decision space $\cQ$ remain unchanged. 

    Let $\cH_0$ be a TFSDP, where is some non-root observation point $\sigma = ja \in \nonrootseqs$ yields only one outcome, say $\cC_{\sigma} = \{j'\}$ for a single $j'$. 
    The transformation is achieved by examining the local reduced normal-form strategy at decision point $j$. As taking action $a$ at decision point $j$ invariably leads to the state transition to decision point $j'$, we can dictate the agent’s actions at $j'$ contingent on the choice of action $a$ at point $j$. This eliminates the observation point $\sigma$ while the compressed extensive-form decision space $\cQ$ remains unchanged, and the number of actions is reduced by $1$ since $ja$ is eliminated. We present an example of this transformation in Figure~\ref{fig:proper_tfsdp}.

    Since the number of actions is bounded, this process will always terminate. At this stage, there is no non-root observation point yield exactly one outcome, while the compressed extensive-form decision space $\cQ$ remains unchanged. The total number of actions $\sum_{j \in \cJ} |\cA_j|$ does not increase. Furthermore, since the leaf count $\Qwidth$ can be determined by $\cQ$ alone, this suggests the leaf count $\Qwidth$ also remains unchanged after the transformation.
    
    \begin{figure}[htp]\centering%
        \raisebox{1cm}{\scalebox{.98}{

\begin{tikzpicture}[baseline=0pt,scale=1]
    \def\done{1.2}
    \def\dtwo{.6}
    \def\dleaf{.37}
    \def\dvert{-0.9}
    \contourlength{.6mm}
    \def\Xseq#1{\scalebox{.8}{\contour{white}{\seq{#1}}}}

    \node[obspt] (E) at ($(0, -\dvert)$) {};
    \node[decpt] (A) at (0, 0) {};
    \node[obspt] (X) at ($(-\done,\dvert)$) {};
    \node[obspt] (Y) at ($(\done,\dvert)$) {};
    \node[decpt] (D) at ($(Y) + (0,\dvert)$) {};
    \node[decpt] (B) at ($(X) + (-\dtwo, \dvert)$) {};
    \node[decpt] (C) at ($(X) + (\dtwo, \dvert)$) {};

    \node[draw=black,inner sep=.6mm] (l1) at ($(B) + (-\dleaf, \dvert)$) {};
    \node[draw=black,inner sep=.6mm] (l2) at ($(B) + (\dleaf, \dvert)$) {};
    \node[draw=black,inner sep=.6mm] (l3) at ($(C) + (-\dleaf, \dvert)$) {};
    \node[draw=black,inner sep=.6mm] (l4) at ($(C) + (\dleaf, \dvert)$) {};
    \node[draw=black,inner sep=.6mm] (l5) at ($(D) + (-1.25*\dleaf, \dvert)$) {};
    \node[draw=black,inner sep=.6mm] (l6) at ($(D) + (0, \dvert)$) {};
    \node[draw=black,inner sep=.6mm] (l7) at ($(D) + (1.25*\dleaf, \dvert)$) {};

    \draw[action] (A) -- node{\Xseq{1}} (X);
    \draw[action] (A) -- node{\Xseq{2}} (Y);
    \draw[action] (B) -- node{\Xseq{3}} (l1);
    \draw[action] (B) -- node{\Xseq{4}} (l2);
    \draw[action] (C) -- node{\Xseq{5}} (l3);
    \draw[action] (C) -- node{\Xseq{6}} (l4);
    \draw[action] (D) -- node{\Xseq{7}} (l5);
    \draw[action] (D) -- node[fill=white,inner sep=0mm]{\Xseq{8}} (l6);
    \draw[action] (D) -- node{\Xseq{9}} (l7);
    \draw[observ] (E) -- (A);
    \draw[observ] (X) -- (B);
    \draw[observ] (X) -- (C);
    \draw[observ] (Y) -- (D);


    \node[black, left=0mm of E] {\scalebox{.8}{\decpt{$\emptyset$}}};
    \node[black, left=0mm of A] {\scalebox{.8}{\decpt{a}}};
    \node[black, left=0mm of B] {\scalebox{.8}{\decpt{b}}};
    \node[black, left=0mm of C] {\scalebox{.8}{\decpt{c}}};
    \node[black,right=0mm of D] {\scalebox{.8}{\decpt{d}}};
\end{tikzpicture}

        }}%
        ~$\longrightarrow$~%
        \raisebox{1cm}{\scalebox{.98}{

        \begin{tikzpicture}[baseline=0pt,scale=1]
    \def\done{1.2}
    \def\dtwo{.6}
    \def\dleaf{.37}
    \def\dvert{-0.9}
    \contourlength{.6mm}
    \def\Xseq#1{\scalebox{.8}{\contour{white}{\seq{#1}}}}

    \node[obspt] (E) at ($(0, -\dvert)$) {};
    \node[decpt] (A) at (0, 0) {};
    \node[obspt] (X) at ($(-\done,\dvert)$) {};
    \node (Y) at ($(\done,\dvert)$) {};
    \node (D) at ($(Y) + (0,\dvert)$) {};
    \node[decpt] (B) at ($(X) + (-\dtwo, \dvert)$) {};
    \node[decpt] (C) at ($(X) + (\dtwo, \dvert)$) {};

    \node[draw=black,inner sep=.6mm] (l1) at ($(B) + (-\dleaf, \dvert)$) {};
    \node[draw=black,inner sep=.6mm] (l2) at ($(B) + (\dleaf, \dvert)$) {};
    \node[draw=black,inner sep=.6mm] (l3) at ($(C) + (-\dleaf, \dvert)$) {};
    \node[draw=black,inner sep=.6mm] (l4) at ($(C) + (\dleaf, \dvert)$) {};
    \node[draw=black,inner sep=.6mm] (l5) at ($(D) + (-1.25*\dleaf, \dvert)$) {};
    \node[draw=black,inner sep=.6mm] (l6) at ($(D) + (0, \dvert)$) {};
    \node[draw=black,inner sep=.6mm] (l7) at ($(D) + (1.25*\dleaf, \dvert)$) {};

    \draw[action] (A) -- node{\Xseq{1}} (X);
    \draw[action] (B) -- node{\Xseq{3}} (l1);
    \draw[action] (B) -- node{\Xseq{4}} (l2);
    \draw[action] (C) -- node{\Xseq{5}} (l3);
    \draw[action] (C) -- node{\Xseq{6}} (l4);
    \draw[action] (A) -- node{\Xseq{8}} (l6);
    \draw[action] (A) -- node{\Xseq{9}} (l7);
    \draw[action] (A) -- node{\Xseq{7}} (l5);
    \draw[observ] (E) -- (A);
    \draw[observ] (X) -- (B);
    \draw[observ] (X) -- (C);


    \node[black, left=0mm of E] {\scalebox{.8}{\decpt{$\emptyset$}}};
    \node[black, left=0mm of A] {\scalebox{.8}{\decpt{a}}};
    \node[black, left=0mm of B] {\scalebox{.8}{\decpt{b}}};
    \node[black, left=0mm of C] {\scalebox{.8}{\decpt{c}}};
\end{tikzpicture}

        }}%

        \caption{Eliminating observation point $\textsf{A2}$ from the TFSDP. The compressed extensive-form decision space $\cQ$ remains unchanged and still has support $\{\textsf{3}, \textsf{4}, \textsf{5}, \textsf{6}, \textsf{7}, \textsf{8}, \textsf{9}\}$. Thus, the leaf count for the new TFSDP remains $\Qwidth = 2$.    
        Furthermore the total number of actions is reduced by one. }

        \label{fig:proper_tfsdp}
    \end{figure}
\end{proof}

\restateLemma{lm:ub-width-size-conv}
\begin{proof}[Proof of Lemma~\ref{lm:ub-width-size-conv}]
    According to Proposition 5.1 in \citet{farina2022kernelized}, we have $|\cV| \leq |\cA|^{\Qsize}$. When there is no non-root observation point yields exactly one outcome, we have 
    $\Qsize \leq 2\Qwidth$ from Lemma~\ref{lm:ub-width-size-conv-1}. This implies $\ln |\cV| \leq \cO(\Qsize \log |\cA|)$.

    In the general cases, we can use Lemma~\ref{lm:ub-width-size-conv-2} to transform the TFSDP into the desired representation. In this case, neither the leaf count $\Qwidth$ nor the total number of actions $\sum_{j \in \cJ} |\cA_j|$ increases. The size of the largest action set can be upper bounded using the total number of actions, which can be further upper bounded from $\sum_{j \in \cJ} |\cA_j| \leq |\cJ \times \cA|$. This implies $\ln |\cV| \leq \cO(\Qsize \log |\cJ \times \cA|)$ always holds.
    
\end{proof}

\section{Auxiliary Lemmas}

\begin{lemma} \label{lm:qwidth-recusrive}
    The leaf count $\Qwidth = \Qwidths{\emptyset}$ can be computed recursively as:
    \begin{itemize}[nosep]
        \item  If $h = \sigma \in \leaveseqs$ is a terminal observation point, then:
        \[\textstyle{
            \Qwidths{\sigma} := 1.
        }\]
        \item  If $h = j \in \cJ$ is a decision point, then:
        \[\textstyle{
            \Qwidths{j} := \max_{a \in \cA_j} \Qwidths{ja}.
        }\]
        \item  If $h = \sigma \in \nonleaveseqs$ is a non-terminal observation point, then: 
        \[\textstyle{
            \Qwidths{\sigma} := \sum_{j \in \cC_\sigma} \Qwidths{j}.
        }\]
    \end{itemize}
\end{lemma}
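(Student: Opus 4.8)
The plan is to prove this by a bottom-up structural induction on the TFSDP, showing that for every point $h \in \tfsdp$ the quantity $\Qwidths{h} = \sup_{\xb \in \cQ_h}\sum_{\sigma \in \leaveseqs_h}\xb[\sigma]$ satisfies the three stated recurrences; the lemma then follows by specializing to $h = \emptyset$, since $\cQ_\emptyset = \cQ$ and $\leaveseqs_\emptyset = \leaveseqs$, so $\Qwidths{\emptyset} = \Qwidth$. Because each $\cQ_h$ is a polytope and a linear functional attains its maximum over a polytope at a vertex, it is equivalent (and cleaner) to read $\Qwidths{h}$ as the largest number of terminals in $\leaveseqs_h$ simultaneously consistent with a single \emph{pure} strategy of $\cQ_h$, i.e. $\Qwidths{h} = \max_{\xb \in \cV_h}\|\xb\|_1$; the recurrences are then purely combinatorial statements about how reach probability behaves at the two kinds of nodes in the sequence form.

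For the base case $h = \sigma \in \leaveseqs$ we have $\leaveseqs_\sigma = \{\sigma\}$, and the restriction $\xb[\sigma] = 1$ makes $\cQ_\sigma$ a single point, so $\Qwidths{\sigma} = 1$. For a decision point $h = j \in \cJ$, the constraints of $\hat{\cQ}$ give $\xb[j] = 1 = \sum_{a \in \cA_j}\xb[ja]$, and for each action $a$ with $\xb[ja] > 0$ the rescaled branch $\xb[\leaveseqs_{ja}]/\xb[ja]$ lies in $\cQ_{ja}$, while branches with $\xb[ja] = 0$ contribute no terminals. Hence $\sum_{\sigma \in \leaveseqs_j}\xb[\sigma] = \sum_{a \in \cA_j}\xb[ja]\,\|\xb[\leaveseqs_{ja}]/\xb[ja]\|_1 \le \sum_{a \in \cA_j}\xb[ja]\,\Qwidths{ja} \le \max_{a \in \cA_j}\Qwidths{ja}$, with equality attained by placing all of the unit mass on an action achieving the maximum and choosing that branch optimally inside $\cQ_{ja}$; thus $\Qwidths{j} = \max_{a \in \cA_j}\Qwidths{ja}$.

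For a non-terminal observation point $h = \sigma \in \nonleaveseqs$, the constraint $\xb[\sigma] = \xb[j]$ for every $j \in \cC_\sigma$ together with $\xb[\sigma] = 1$ forces $\xb[j] = 1$ on each child, so $\cQ_\sigma$ is exactly the Cartesian product $\prod_{j \in \cC_\sigma}\cQ_j$ over the pairwise-disjoint leaf sets $\{\leaveseqs_j\}_{j \in \cC_\sigma}$. Therefore $\sum_{\sigma' \in \leaveseqs_\sigma}\xb[\sigma'] = \sum_{j \in \cC_\sigma}\sum_{\sigma' \in \leaveseqs_j}\xb[\sigma']$, and since the branches can be optimized independently the supremum separates into $\sum_{j \in \cC_\sigma}\Qwidths{j}$. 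Applying the induction hypothesis on the children $\cC_\sigma$ closes the induction, and evaluating it at $h = \emptyset$ (itself a non-terminal observation point whose children are the root decision points) gives $\Qwidth = \Qwidths{\emptyset} = \sum_{j \in \cC_\emptyset}\Qwidths{j}$, the last of the stated rules. There is no genuine obstacle here; the only steps needing mild attention are the treatment of zero-weight branches at decision points and the verification that all suprema are attained (so that the maximum and the sum are exact rather than merely upper bounds), both of which are immediate from the polytope/vertex reformulation above.
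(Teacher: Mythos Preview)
Your proof is correct, but it takes a different route from the paper. The paper's argument is a one-liner: observe that
\[
\Qwidth = \sup_{\xb \in \cQ}\|\xb\|_1 = \sup_{\xb \in \cQ}\langle |\one|, \xb\rangle = \|\one\|\Qnorm,
\]
and then invoke the already-established recursive formula for the \tpLInfNorm\ (Lemma~\ref{lm:tpnorm-recursive}) with $\ub = \one$, which immediately gives the three cases. Your approach instead performs the structural induction directly on the quantity $\sup_{\xb \in \cQ_h}\|\xb\|_1$, unpacking the decision-point and observation-point constraints of $\hat\cQ$ by hand. Both arguments are essentially the same induction, but the paper factors it through the treeplex norm machinery it has already built, while you redo the bookkeeping from scratch. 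The paper's version is shorter and highlights that $\Qwidths{h}$ is literally $\|\one\|\Qnorms{h}$; your version is self-contained and does not require the reader to chase back to Lemma~\ref{lm:tpnorm-recursive}.
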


\begin{proof}
    According to the definition of leaf count and the \tpLInfNorm, we have 
    \begin{align*}
        \Qwidth = \sup_{\xb \in \cQ} \|\xb\|_1 = \sup_{\xb \in \cQ} \langle|\one|, \xb\rangle = \|\one\|\Qnorm.
    \end{align*}
    This establishes a connection between leaf count with the \tpLInfNorm. According to Lemma~\ref{lm:tpnorm-recursive}, we immediately reach the desired statement.
\end{proof}

\begin{lemma} \label{lm:pure-startegy-count-recursive}
    The number of pure strategy profiles $|\cV| = |\cV_\emptyset|$ can be computed recursively as 
    \begin{itemize}[nosep]
        \item  If $h = \sigma \in \leaveseqs$ is a terminal observation point, then:
        \[\textstyle{
            |\cV_\sigma| := 1.
        }\]
        \item  If $h = j \in \cJ$ is a decision point, then:
        \[\textstyle{
            |\cV_j| := \sum_{a \in \cA_j} |\cV_{ja}|.
        }\]
        \item  If $h = \sigma \in \nonleaveseqs$ is a non-terminal observation point, then: 
        \[\textstyle{
            |\cV_\sigma| := \prod_{j \in \cC_\sigma} |\cV_j|.
        }\]
    \end{itemize}
\end{lemma}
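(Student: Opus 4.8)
The plan is to prove the stronger statement that for \emph{every} point $h \in \tfsdp$ the vertex set $\cV_h := \cQ_h \cap \{0,1\}^{\leaveseqs_h}$ satisfies the stated recursion, and then read off the claim at $h = \emptyset$. Because the compression map is one-to-one between $\hat{\cQ}_h$ and $\cQ_h$ and carries $\{0,1\}$-vectors to $\{0,1\}$-vectors, it suffices to count $\{0,1\}$-valued vectors $\xb \in \hat{\cQ}_h$; I will do this by structural induction on the TFSDP, bottom-up. The base case is $h = \sigma \in \leaveseqs$: here $\tfsdp_\sigma = \{\sigma\}$ and the only constraint is $\xb[\sigma] = 1$, so $|\cV_\sigma| = 1$.

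For the inductive step at a decision point $h = j \in \cJ$, the children of $j$ are the observation points $ja$ for $a \in \cA_j$, and the defining constraint $\xb[j] = \sum_{a \in \cA_j} \xb[ja]$ together with $\xb[j] = 1$ forces a $\{0,1\}$-vector to have $\xb[ja] = 1$ for exactly one action $a$ and $\xb[ja'] = 0$ for all $a' \neq a$. Once this action is fixed, the restriction of $\xb$ to $\tfsdp_{ja}$ (with $\xb[ja] = 1$) ranges over exactly the $\{0,1\}$-vectors of $\hat{\cQ}_{ja}$, i.e.\ over $\cV_{ja}$, while for each $a' \neq a$ the assignment $\xb[ja'] = 0$ propagates downward uniquely — each constraint $\xb[\sigma'] = \xb[j']$ and $\xb[j'] = \sum_b \xb[j'b]$ zeroes out the next layer — so there is a single completion there. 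Hence the $\{0,1\}$-vectors of $\hat{\cQ}_j$ biject with $\bigsqcup_{a \in \cA_j} \cV_{ja}$, giving $|\cV_j| = \sum_{a \in \cA_j} |\cV_{ja}|$; the summands on the right are instances of the recursion at observation points, handled by the base case or by the next step.

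For the inductive step at a non-terminal observation point $h = \sigma \in \nonleaveseqs$, the children are the decision points $j \in \cC_\sigma$, and the constraint $\xb[\sigma] = \xb[j]$ forces $\xb[j] = \xb[\sigma] = 1$ for every such $j$. Since the subtrees $\tfsdp_j$ for $j \in \cC_\sigma$ are pairwise disjoint by perfect recall and share no constraint beyond the already-fixed variable $\xb[\sigma]$, a $\{0,1\}$-vector of $\hat{\cQ}_\sigma$ is precisely an independent choice of a $\{0,1\}$-vector of $\hat{\cQ}_j$ for each $j \in \cC_\sigma$; therefore $|\cV_\sigma| = \prod_{j \in \cC_\sigma} |\cV_j|$. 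Specializing the induction to $h = \emptyset$ yields $|\cV| = |\cV_\emptyset|$ together with all three recursive identities.

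I do not expect a genuine obstacle; the only point deserving a line of care is that zeroing out an unchosen subtree at a decision point leaves a \emph{unique} feasible completion and that the induced map onto terminal-reach vectors is injective, both of which follow from the tree structure — the sets $\leaveseqs_{ja}$ are disjoint and every subtree contains at least one terminal observation point, so a distinct action or a distinct sub-vertex always alters some coordinate of $\xb$. Everything else is a direct unwinding of the linear constraints defining $\hat{\cQ}_h$.
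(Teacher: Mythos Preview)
Your proposal is correct and follows essentially the same approach as the paper, which simply states that the recursion is ``immediately reached by inspecting the vertices of each extensive-form strategy space $\cQ_h$.'' You have merely spelled out in full the structural induction that the paper leaves implicit in that one-line justification.
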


\begin{proof}
    According to the definition of reduced normal-form strategies, the statement can be immediately reached by inspecting the vertices of each extensive-form strategy space $\cQ_h$. 
\end{proof}

\clearpage


\newpage
\section*{NeurIPS Paper Checklist}

\begin{enumerate}

\item {\bf Claims}
    \item[] Question: Do the main claims made in the abstract and introduction accurately reflect the paper's contributions and scope?
    \item[] Answer: \answerYes{} 
    \item[] Justification: We study the optimality of weight-one dilated entropy from a theoretical perceptive. The contribution, assumptions and scope are clearly claimed in the abstract and introduction.
    \item[] Guidelines:
    \begin{itemize}
        \item The answer NA means that the abstract and introduction do not include the claims made in the paper.
        \item The abstract and/or introduction should clearly state the claims made, including the contributions made in the paper and important assumptions and limitations. A No or NA answer to this question will not be perceived well by the reviewers. 
        \item The claims made should match theoretical and experimental results, and reflect how much the results can be expected to generalize to other settings. 
        \item It is fine to include aspirational goals as motivation as long as it is clear that these goals are not attained by the paper. 
    \end{itemize}

\item {\bf Limitations}
    \item[] Question: Does the paper discuss the limitations of the work performed by the authors?
    \item[] Answer: \answerYes{} 
    \item[] Justification: We have discussed the limitation in the conclusion section.
    \item[] Guidelines:
    \begin{itemize}
        \item The answer NA means that the paper has no limitation while the answer No means that the paper has limitations, but those are not discussed in the paper. 
        \item The authors are encouraged to create a separate "Limitations" section in their paper.
        \item The paper should point out any strong assumptions and how robust the results are to violations of these assumptions (e.g., independence assumptions, noiseless settings, model well-specification, asymptotic approximations only holding locally). The authors should reflect on how these assumptions might be violated in practice and what the implications would be.
        \item The authors should reflect on the scope of the claims made, e.g., if the approach was only tested on a few datasets or with a few runs. In general, empirical results often depend on implicit assumptions, which should be articulated.
        \item The authors should reflect on the factors that influence the performance of the approach. For example, a facial recognition algorithm may perform poorly when image resolution is low or images are taken in low lighting. Or a speech-to-text system might not be used reliably to provide closed captions for online lectures because it fails to handle technical jargon.
        \item The authors should discuss the computational efficiency of the proposed algorithms and how they scale with dataset size.
        \item If applicable, the authors should discuss possible limitations of their approach to address problems of privacy and fairness.
        \item While the authors might fear that complete honesty about limitations might be used by reviewers as grounds for rejection, a worse outcome might be that reviewers discover limitations that aren't acknowledged in the paper. The authors should use their best judgment and recognize that individual actions in favor of transparency play an important role in developing norms that preserve the integrity of the community. Reviewers will be specifically instructed to not penalize honesty concerning limitations.
    \end{itemize}

\item {\bf Theory Assumptions and Proofs}
    \item[] Question: For each theoretical result, does the paper provide the full set of assumptions and a complete (and correct) proof?
    \item[] Answer: \answerYes{} 
    \item[] Justification: We provide detailed proof for all theorems in the Appendix~\ref{sec:proof-to-tp-norm}, \ref{sec:proof-to-ub}, and \ref{sec:proof-to-lb}.
    \item[] Guidelines:
    \begin{itemize}
        \item The answer NA means that the paper does not include theoretical results. 
        \item All the theorems, formulas, and proofs in the paper should be numbered and cross-referenced.
        \item All assumptions should be clearly stated or referenced in the statement of any theorems.
        \item The proofs can either appear in the main paper or the supplemental material, but if they appear in the supplemental material, the authors are encouraged to provide a short proof sketch to provide intuition. 
        \item Inversely, any informal proof provided in the core of the paper should be complemented by formal proofs provided in appendix or supplemental material.
        \item Theorems and Lemmas that the proof relies upon should be properly referenced. 
    \end{itemize}

    \item {\bf Experimental Result Reproducibility}
    \item[] Question: Does the paper fully disclose all the information needed to reproduce the main experimental results of the paper to the extent that it affects the main claims and/or conclusions of the paper (regardless of whether the code and data are provided or not)?
    \item[] Answer: \answerNA{} 
    \item[] Justification: The paper focuses on theoretical understanding and does not include experiments.
    \item[] Guidelines:
    \begin{itemize}
        \item The answer NA means that the paper does not include experiments.
        \item If the paper includes experiments, a No answer to this question will not be perceived well by the reviewers: Making the paper reproducible is important, regardless of whether the code and data are provided or not.
        \item If the contribution is a dataset and/or model, the authors should describe the steps taken to make their results reproducible or verifiable. 
        \item Depending on the contribution, reproducibility can be accomplished in various ways. For example, if the contribution is a novel architecture, describing the architecture fully might suffice, or if the contribution is a specific model and empirical evaluation, it may be necessary to either make it possible for others to replicate the model with the same dataset, or provide access to the model. In general. releasing code and data is often one good way to accomplish this, but reproducibility can also be provided via detailed instructions for how to replicate the results, access to a hosted model (e.g., in the case of a large language model), releasing of a model checkpoint, or other means that are appropriate to the research performed.
        \item While NeurIPS does not require releasing code, the conference does require all submissions to provide some reasonable avenue for reproducibility, which may depend on the nature of the contribution. For example
        \begin{enumerate}
            \item If the contribution is primarily a new algorithm, the paper should make it clear how to reproduce that algorithm.
            \item If the contribution is primarily a new model architecture, the paper should describe the architecture clearly and fully.
            \item If the contribution is a new model (e.g., a large language model), then there should either be a way to access this model for reproducing the results or a way to reproduce the model (e.g., with an open-source dataset or instructions for how to construct the dataset).
            \item We recognize that reproducibility may be tricky in some cases, in which case authors are welcome to describe the particular way they provide for reproducibility. In the case of closed-source models, it may be that access to the model is limited in some way (e.g., to registered users), but it should be possible for other researchers to have some path to reproducing or verifying the results.
        \end{enumerate}
    \end{itemize}

\item {\bf Open access to data and code}
    \item[] Question: Does the paper provide open access to the data and code, with sufficient instructions to faithfully reproduce the main experimental results, as described in supplemental material?
    \item[] Answer: \answerNA{} 
    \item[] Justification: The paper focuses on theoretical understanding and does not include experiments.
    \item[] Guidelines:
    \begin{itemize}
        \item The answer NA means that paper does not include experiments requiring code.
        \item Please see the NeurIPS code and data submission guidelines (\url{https://nips.cc/public/guides/CodeSubmissionPolicy}) for more details.
        \item While we encourage the release of code and data, we understand that this might not be possible, so “No” is an acceptable answer. Papers cannot be rejected simply for not including code, unless this is central to the contribution (e.g., for a new open-source benchmark).
        \item The instructions should contain the exact command and environment needed to run to reproduce the results. See the NeurIPS code and data submission guidelines (\url{https://nips.cc/public/guides/CodeSubmissionPolicy}) for more details.
        \item The authors should provide instructions on data access and preparation, including how to access the raw data, preprocessed data, intermediate data, and generated data, etc.
        \item The authors should provide scripts to reproduce all experimental results for the new proposed method and baselines. If only a subset of experiments are reproducible, they should state which ones are omitted from the script and why.
        \item At submission time, to preserve anonymity, the authors should release anonymized versions (if applicable).
        \item Providing as much information as possible in supplemental material (appended to the paper) is recommended, but including URLs to data and code is permitted.
    \end{itemize}

\item {\bf Experimental Setting/Details}
    \item[] Question: Does the paper specify all the training and test details (e.g., data splits, hyperparameters, how they were chosen, type of optimizer, etc.) necessary to understand the results?
    \item[] Answer: \answerNA{} 
    \item[] Justification: The paper focuses on theoretical understanding and does not include experiments.
    \item[] Guidelines:
    \begin{itemize}
        \item The answer NA means that the paper does not include experiments.
        \item The experimental setting should be presented in the core of the paper to a level of detail that is necessary to appreciate the results and make sense of them.
        \item The full details can be provided either with the code, in appendix, or as supplemental material.
    \end{itemize}

\item {\bf Experiment Statistical Significance}
    \item[] Question: Does the paper report error bars suitably and correctly defined or other appropriate information about the statistical significance of the experiments?
    \item[] Answer: \answerNA{}{} 
    \item[] Justification: The paper focuses on theoretical understanding and does not include experiments.
    \item[] Guidelines:
    \begin{itemize}
        \item The answer NA means that the paper does not include experiments.
        \item The authors should answer "Yes" if the results are accompanied by error bars, confidence intervals, or statistical significance tests, at least for the experiments that support the main claims of the paper.
        \item The factors of variability that the error bars are capturing should be clearly stated (for example, train/test split, initialization, random drawing of some parameter, or overall run with given experimental conditions).
        \item The method for calculating the error bars should be explained (closed form formula, call to a library function, bootstrap, etc.)
        \item The assumptions made should be given (e.g., Normally distributed errors).
        \item It should be clear whether the error bar is the standard deviation or the standard error of the mean.
        \item It is OK to report 1-sigma error bars, but one should state it. The authors should preferably report a 2-sigma error bar than state that they have a 96\% CI, if the hypothesis of Normality of errors is not verified.
        \item For asymmetric distributions, the authors should be careful not to show in tables or figures symmetric error bars that would yield results that are out of range (e.g. negative error rates).
        \item If error bars are reported in tables or plots, The authors should explain in the text how they were calculated and reference the corresponding figures or tables in the text.
    \end{itemize}

\item {\bf Experiments Compute Resources}
    \item[] Question: For each experiment, does the paper provide sufficient information on the computer resources (type of compute workers, memory, time of execution) needed to reproduce the experiments?
    \item[] Answer: \answerNA{} 
    \item[] Justification: The paper focuses on theoretical understanding and does not include experiments.
    \item[] Guidelines:
    \begin{itemize}
        \item The answer NA means that the paper does not include experiments.
        \item The paper should indicate the type of compute workers CPU or GPU, internal cluster, or cloud provider, including relevant memory and storage.
        \item The paper should provide the amount of compute required for each of the individual experimental runs as well as estimate the total compute. 
        \item The paper should disclose whether the full research project required more compute than the experiments reported in the paper (e.g., preliminary or failed experiments that didn't make it into the paper). 
    \end{itemize}
    
\item {\bf Code Of Ethics}
    \item[] Question: Does the research conducted in the paper conform, in every respect, with the NeurIPS Code of Ethics \url{https://neurips.cc/public/EthicsGuidelines}?
    \item[] Answer: \answerYes{} 
    \item[] Justification: We have reviewed the NeurIPS Code of Ethics.
    \item[] Guidelines:
    \begin{itemize}
        \item The answer NA means that the authors have not reviewed the NeurIPS Code of Ethics.
        \item If the authors answer No, they should explain the special circumstances that require a deviation from the Code of Ethics.
        \item The authors should make sure to preserve anonymity (e.g., if there is a special consideration due to laws or regulations in their jurisdiction).
    \end{itemize}

\item {\bf Broader Impacts}
    \item[] Question: Does the paper discuss both potential positive societal impacts and negative societal impacts of the work performed?
    \item[] Answer: \answerNA{} 
    \item[] Justification: Our work serves as foundational research for algorithmic game theory. Although there might be some potential social impacts on solving large-scale games, according to the guidelines, we believe our result does not have a direct connection with these issues.
    \item[] Guidelines:
    \begin{itemize}
        \item The answer NA means that there is no societal impact of the work performed.
        \item If the authors answer NA or No, they should explain why their work has no societal impact or why the paper does not address societal impact.
        \item Examples of negative societal impacts include potential malicious or unintended uses (e.g., disinformation, generating fake profiles, surveillance), fairness considerations (e.g., deployment of technologies that could make decisions that unfairly impact specific groups), privacy considerations, and security considerations.
        \item The conference expects that many papers will be foundational research and not tied to particular applications, let alone deployments. However, if there is a direct path to any negative applications, the authors should point it out. For example, it is legitimate to point out that an improvement in the quality of generative models could be used to generate deepfakes for disinformation. On the other hand, it is not needed to point out that a generic algorithm for optimizing neural networks could enable people to train models that generate Deepfakes faster.
        \item The authors should consider possible harms that could arise when the technology is being used as intended and functioning correctly, harms that could arise when the technology is being used as intended but gives incorrect results, and harms following from (intentional or unintentional) misuse of the technology.
        \item If there are negative societal impacts, the authors could also discuss possible mitigation strategies (e.g., gated release of models, providing defenses in addition to attacks, mechanisms for monitoring misuse, mechanisms to monitor how a system learns from feedback over time, improving the efficiency and accessibility of ML).
    \end{itemize}
    
\item {\bf Safeguards}
    \item[] Question: Does the paper describe safeguards that have been put in place for responsible release of data or models that have a high risk for misuse (e.g., pretrained language models, image generators, or scraped datasets)?
    \item[] Answer: \answerNA{} 
    \item[] Justification: No such risks. 
    \item[] Guidelines:
    \begin{itemize}
        \item The answer NA means that the paper poses no such risks.
        \item Released models that have a high risk for misuse or dual-use should be released with necessary safeguards to allow for controlled use of the model, for example by requiring that users adhere to usage guidelines or restrictions to access the model or implementing safety filters. 
        \item Datasets that have been scraped from the Internet could pose safety risks. The authors should describe how they avoided releasing unsafe images.
        \item We recognize that providing effective safeguards is challenging, and many papers do not require this, but we encourage authors to take this into account and make a best faith effort.
    \end{itemize}

\item {\bf Licenses for existing assets}
    \item[] Question: Are the creators or original owners of assets (e.g., code, data, models), used in the paper, properly credited and are the license and terms of use explicitly mentioned and properly respected?
    \item[] Answer: \answerNA{} 
    \item[] Justification: Does not apply. 
    \item[] Guidelines:
    \begin{itemize}
        \item The answer NA means that the paper does not use existing assets.
        \item The authors should cite the original paper that produced the code package or dataset.
        \item The authors should state which version of the asset is used and, if possible, include a URL.
        \item The name of the license (e.g., CC-BY 4.0) should be included for each asset.
        \item For scraped data from a particular source (e.g., website), the copyright and terms of service of that source should be provided.
        \item If assets are released, the license, copyright information, and terms of use in the package should be provided. For popular datasets, \url{paperswithcode.com/datasets} has curated licenses for some datasets. Their licensing guide can help determine the license of a dataset.
        \item For existing datasets that are re-packaged, both the original license and the license of the derived asset (if it has changed) should be provided.
        \item If this information is not available online, the authors are encouraged to reach out to the asset's creators.
    \end{itemize}

\item {\bf New Assets}
    \item[] Question: Are new assets introduced in the paper well documented and is the documentation provided alongside the assets?
    \item[] Answer: \answerNA{} 
    \item[] Justification: Does not apply. 
    \item[] Guidelines:
    \begin{itemize}
        \item The answer NA means that the paper does not release new assets.
        \item Researchers should communicate the details of the dataset/code/model as part of their submissions via structured templates. This includes details about training, license, limitations, etc. 
        \item The paper should discuss whether and how consent was obtained from people whose asset is used.
        \item At submission time, remember to anonymize your assets (if applicable). You can either create an anonymized URL or include an anonymized zip file.
    \end{itemize}

\item {\bf Crowdsourcing and Research with Human Subjects}
    \item[] Question: For crowdsourcing experiments and research with human subjects, does the paper include the full text of instructions given to participants and screenshots, if applicable, as well as details about compensation (if any)? 
    \item[] Answer: \answerNA{} 
    \item[] Justification: No human subjects nor crowdsourcing. 
    \item[] Guidelines:
    \begin{itemize}
        \item The answer NA means that the paper does not involve crowdsourcing nor research with human subjects.
        \item Including this information in the supplemental material is fine, but if the main contribution of the paper involves human subjects, then as much detail as possible should be included in the main paper. 
        \item According to the NeurIPS Code of Ethics, workers involved in data collection, curation, or other labor should be paid at least the minimum wage in the country of the data collector. 
    \end{itemize}

\item {\bf Institutional Review Board (IRB) Approvals or Equivalent for Research with Human Subjects}
    \item[] Question: Does the paper describe potential risks incurred by study participants, whether such risks were disclosed to the subjects, and whether Institutional Review Board (IRB) approvals (or an equivalent approval/review based on the requirements of your country or institution) were obtained?
    \item[] Answer: \answerNA{} 
    \item[] Justification: No human subjects nor crowdsourcing. 
    \item[] Guidelines:
    \begin{itemize}
        \item The answer NA means that the paper does not involve crowdsourcing nor research with human subjects.
        \item Depending on the country in which research is conducted, IRB approval (or equivalent) may be required for any human subjects research. If you obtained IRB approval, you should clearly state this in the paper. 
        \item We recognize that the procedures for this may vary significantly between institutions and locations, and we expect authors to adhere to the NeurIPS Code of Ethics and the guidelines for their institution. 
        \item For initial submissions, do not include any information that would break anonymity (if applicable), such as the institution conducting the review.
    \end{itemize}

\end{enumerate}

\end{document}